\documentclass{article}
\usepackage[utf8]{inputenc}
\usepackage[margin=1.0in]{geometry}
\usepackage{soul,color}
\usepackage{multirow, hhline}
\usepackage{amsfonts, amsmath, amssymb}
\usepackage{tikz}
\usepackage[nobreak=false]{mdframed}
\usepackage{pgf}
\usepackage{enumitem}
\usepackage{mathtools}
\usepackage{bbm}
\usepackage{etoolbox}
\usepackage{graphicx}
\usepackage{listings}
\usepackage{natbib}
\usepackage{algorithm,algorithmic}
\usepackage{hyperref}       
\usepackage[symbol]{footmisc}
\newcommand{\E}{\mathbb{E}}
\newcommand{\norm}[1]{\left\lVert#1\right\rVert}
\newcommand{\wass}[2]{W_1(#1\,\|\, #2)}
\newcommand{\diam}{\mathrm{diam}}
\newenvironment{sproof}{%
	\proof}{\endproof}

\usepackage{amsthm}
\theoremstyle{plain}
\newtheorem{theorem}{Theorem}

\newtheorem{corollary}{Corollary}
\newtheorem{lemma}{Lemma}
\newtheorem{fact}{Fact}

\newtheorem{assumption}{Assumption}

\title{A Contraction Approach to Model-based Reinforcement Learning}
\author{
	Ting-Han Fan and Peter J. Ramadge\\
	Department of Electrical Engineering, Princeton University \\
	\texttt{\{tinghanf,ramadge\}@princeton.edu}
}
\date{}
\begin{document}

\maketitle
\begin{abstract}
	Despite its experimental success, Model-based Reinforcement Learning still lacks a complete theoretical understanding. To this end, we analyze the error in the cumulative reward using a contraction approach. We consider both stochastic and deterministic state transitions for continuous (non-discrete) state and action spaces. This approach doesn't require strong assumptions and can recover the typical quadratic error to the horizon. We prove that branched rollouts can reduce this error and are essential for deterministic transitions to have a Bellman contraction. Our analysis of policy mismatch error also applies to Imitation Learning. In this case, we show that GAN-type learning has an advantage over Behavioral Cloning when its discriminator is well-trained.
\end{abstract}

\section{Introduction}
Reinforcement learning (RL) has attracted much attention recently due to its ability to learn good policies for sequential systems. However, most RL algorithms have a high sample complexity of environment queries (typically in the order of millions). This sample complexity hinders the deployment of RL in practical systems. An intuitive potential solution is to learn an accurate model of the environment's outcome, hence reducing the demand for environment queries. This approach leads to a dichotomy of RL algorithms: training without an environment model is called model-free RL, and training with an environment model is called model-based RL. Model-free RL is often faulted for low exploitation of environment queries, while the performance of model-based RL suffers under model inaccuracy. 

Model-based Reinforcement Learning (MBRL) is nontrivial since RL's sequential nature allows errors to propagate to future time-steps. This fact leads to the planning horizon dilemma \citep{wang2019benchmarking}; a long horizon incurs a large cumulative error, while a short horizon results in shortsighted decisions. We need to understand this trade-off better as it is currently one of the fundamental limitations of model-based RL. 

Most prior error analyses impose a strong assumption in their proofs; e.g., Lipschitz value function \citep{luo2018slbo,xiao2019learning,yu2020mopo} or maximum model error \citep{Janner2019mbpo}. In general, the value function is unlikely to be Lipschitz because its gradient w.r.t. the state can be very large. This event happens when a state perturbation is applied at the stability-instability boundary of a control system, resulting in a large change in value (performance) from a small change in state. For instance, if one perturbs a robot's leg, it may fall and, as a result, receive many negative future rewards.

To mitigate the cumulative reward error, \citet{Janner2019mbpo}  experimentally shows that branched rollouts (short model rollouts initialized by previous real rollouts) help reduce this error and improve experimental results. However, the effectiveness of branched rollouts remains unclear since the experiments of \citet{Janner2019mbpo} use deterministic transitions (MuJoCo \citep{Todorov2012mujoco}). However, their error analysis only applies to stochastic transitions and contains unclear reasoning, see \S\ref{sec:related}. Ideally, we need an analysis framework that applies to both stochastic and deterministic transitions.

Our main contribution is a contraction-based approach to analyze the error of MBRL that applies to both stochastic and deterministic transitions without strong assumptions. Prior work typically makes strong assumptions such as a Lipschitz value function or a maximum model error. To avoid these assumptions and maintain generality, we: (a) provide an analysis framework that applies to both (absolutely continuous) stochastic and deterministic transitions, (b) mostly uses constants in expectation, and (c) does not require a Lipschitz assumption on value functions. Our results also contribute to theoretical explanations of some techniques in deep RL. We prove that branched rollouts significantly reduce the cumulative reward error for both stochastic and deterministic transitions. In particular, branched rollouts are vital for deterministic transitions to have a Bellman contraction. Although prior work also claimed to have a similar conclusion, the analysis is unclear (see \S\ref{sec:related}) and, in any case, does not apply to the deterministic environment in their experiment. Our approach also helps analyze Imitation Learning. We show a GAN-type learning method like Generative Adversarial Imitation Learning \citep{Lee2016gail} is potentially preferable to a supervised learning method like Behavioral Cloning \citep{ross2011reduc, Syed2010} when the discriminator is well-trained.

The primary intuition of our analysis comes from the MBRL problem's asymmetry: the policy mismatch and model mismatch errors, or the terms on the RHS of Eq.~\eqref{eq:decomp}, are the errors of interest and are symmetric when interchanging transitions or policies. However, the objects that control the errors of interest (can be directly made small in training) are asymmetric. At some point, we have to bridge from symmetry to asymmetry. We show that the Bellman flow operator is the key to this bridge. If the Bellman flow operator is a contraction w.r.t. a metric, we can analyze the error of MBRL under that metric regardless of the asymmetry. When we do not have a Bellman contraction, we provide another way inspired by \citet{Syed2010} to analyze the problem and identify the impact of asymmetry. The resulting insight suggests the potential usefulness of the Ensemble Method \citep{kurutach2018metrpo}.

Prior work has done extensive experiments on branched rollouts \citep{Janner2019mbpo}, Generative Adversarial Imitation Learning \citep{Lee2016gail} and the Ensemble Method \citep{kurutach2018metrpo}. Since the empirical evidence in the literature is clear, this work does not include additional experiments.
Instead, we focus on providing an improved theoretical understanding of existing empirical results.

\section{Preliminaries}
Consider an infinite-horizon Markov Decision Process (MDP) represented by 
$\langle\mathcal{S},~\mathcal{A},~ T,~r,~\gamma\rangle.$ 
Here 
$\mathcal{S},~\mathcal{A}$ are finite-dimensional continuous state and action spaces, 
$T(s'|s,a)$ is the transition density of $s'$ 
given $(s,a),$
$r(s,a)$ is the reward function, and 
$\gamma\in(0,1)$ is the discount factor. 
We use $\overline{T}$ to denote a deterministic transition with the density $T(s'|s,a)=\delta(s'-\overline{T}(s,a))$.

Given an initial state distribution $\rho_0$, the goal of reinforcement learning is to learn a (stochastic) policy $\pi$ that maximizes the $\gamma$-discounted cumulative reward $R_\gamma(\rho_0,\pi,T)$, or equivalently, the expected cumulative reward, denoted as $R(\rho_{T,\gamma}^{\rho_0,\pi})$, under the \emph{normalized} occupancy measure $\rho_{T,\gamma}^{\rho_0,\pi}$.
\begin{equation}
\begin{split}
R_\gamma(\rho_0,\pi,T)
& =\mathbb{E}\left[\sum_{i=0}^\infty  \gamma^i r(s_i,a_i)\Big\lvert \rho_0, \pi,T\right]
={ \frac{1}{1-\gamma}}\E_{(s,a)\sim\rho_{T,\gamma}^{\rho_0,\pi}}[r(s,a)]
=R(\rho_{T,\gamma}^{\rho_0,\pi}).\\
\rho_{T,\gamma}^{\rho_0,\pi}(s,a) 
&=  (1-\gamma)\sum_{i=0}^\infty \gamma^i f_i(s,a|\rho_0,\pi,T),
\end{split}
\label{eq:cumulative_reward}
\end{equation}
where $f_i(s,a|\rho_0,\pi,T)$ is the density of $(s,a)$ at step $i$ under $(\rho_0,\pi,T)$. 
Because $(\rho_0,\pi,T,\gamma)$ uniquely determines the occupancy measure, we use $R(\rho_{T,\gamma}^{\rho_0,\pi})$ as an alternative expression for $R_\gamma(\rho_0,\pi,T)$. When $\rho_0,~\gamma$ are fixed, we simplify the notation to $R(\pi,T)$ and $\rho_T^\pi$.

\subsection{Bellman Flow Operator}
In Eq.~\eqref{eq:cumulative_reward}, because each $f_i(s,a|\rho_0,\pi,T)$ uses the same policy, $f_i(s,a)$ and $\rho_{T,\gamma}^{\rho_0,\pi}(s,a)$ can be factored as $f_i(s)\pi(a|s)$ and 
$\rho_{T,\gamma}^{\rho_0,\pi}(s)\pi(a|s)$. 
This allows us to mainly focus on the state distributions. In particular, we define the normalized state occupancy measure $\rho_{T,\gamma}^{\rho_0,\pi}(s)$ as the marginal of $\rho_{T,\gamma}^{\rho_0,\pi}(s,a)$ and show 
(Fact 4, Appendix) 
that
it satisfies a fixed-point equation characterized by a Bellman flow operator $B_{\pi,T}(\cdot)$.
\begin{align}
\begin{split}
\rho_{T,\gamma}^{\rho_0,\pi}(s) 
& \triangleq (1-\gamma)\sum_{i=0}^\infty \gamma^i f_i(s|\rho_0,\pi,T)=B_{\pi,T}(\rho_{T,\gamma}^{\rho_0,\pi}(s)),
\end{split}
\label{eq:state-occu}
\end{align}
where $B_{\pi,T}(\cdot)$ under $(\rho_0,\pi,T)$
and $\gamma$ is defined as:
\begin{align}
\begin{split}
\small B_{\pi,T}(\rho(s))
\triangleq &(1-\gamma)\rho_0(s)+ \gamma \int T(s|s',a')\pi(a'|s')\rho(s')ds'da'
\end{split}
\label{eq:bellman}
\end{align}
$B_{\pi,T}(\cdot)$ is a $\gamma$-contraction w.r.t. total variation distance (see Appendix). Hence, $B_{\pi,T}(\cdot)$ has a unique fixed point, and by Eq.~\eqref{eq:state-occu}, this point is $\rho_{T,\gamma}^{\rho_0,\pi}(s)$. 
This result foreshadows the utility of the Bellman flow operator for analyzing state occupancy measures. Indeed, 
Lemma~\ref{lemma:sym-bridge} exploits the Bellman flow operator to upper bound the distance between state distributions. This is useful for analyzing MBRL. In passing, we note that previous work \citep{Syed2008LP} has made distinct use of a Bellman flow constraint.

\subsection{Model-based RL}
\label{sec:mbrl}
We study the
model-based RL procedure shown in Algorithm \ref{alg:mbrl}, and its variants (e.g., branched rollouts).
\begin{algorithm}[!ht]
	\caption{Model-based RL Algorithm
		\label{alg:mbrl} }
	\begin{algorithmic}[1]
		\REQUIRE Dataset $\mathcal{D}=\emptyset$, policy $\pi_0$, learned transition $\hat{T}$.
		\FOR{$i=1,2,...$}
		\STATE Sample $\mathcal{D}_{i-1}=\{s_t,a_t,s_t'\}$ from real transition $T$ and policy $\pi_{i-1}$.
		\STATE $\mathcal{D}\leftarrow \text{Truncate}(\mathcal{D}\cup \mathcal{D}_{i-1})$
		\STATE Fit $\hat{T}$ using samples in $\mathcal{D}$.
		\STATE $\pi_i=\underset{\pi\in B_{\pi_D}}{\arg\max}~R(\pi,\hat{T})$ 
		\ENDFOR
	\end{algorithmic}
\end{algorithm}
Line 3 deals with the storage of a dataset $\mathcal{D}$ of real transitions. Observe that 
$\mathcal{D}_{i-1}$ is generated by 
$(\rho_0,\pi_{i-1},T)$ and that $\mathcal{D}$ aggregrates the $\mathcal{D}_{i-1}$'s. The policy that generates $\mathcal{D}$, which we call the sampling policy $\pi_D$, is a mixture of previous policies. If $\mathcal{D}_{i-1}$'s have equal sizes, $\pi_D(a|s) = \sum_{j=i-q}^{i-1} \pi_j(a|s)\rho_T^{\pi_j}(s) / \sum_{j=i-q}^{i-1} \rho_T^{\pi_j}(s)$ with $q$ being the truncation level. The larger $q$ is, the more dependent the sampling policy is on previous policies. To facilitate subsequent supervised learning, we need $\mathcal{D}$ to be sufficiently large. However, the larger dataset implies the stronger dependence of the sampling policy on previous policies. For technical reasons (see the final paragraph of the section), we need the current policy $\pi$ and the sampling policy $\pi_D$ to be sufficiently close. Hence we expect a tight truncation, e.g.,~$\mathcal{D}=\mathcal{D}_{i-1}\cup\mathcal{D}_{i-2}$.

Line 4 is a supervised learning task. The objective function is usually the log-likelihood for stochastic transitions or the $\ell_2$ error for deterministic transitions. For stochastic transitions, maximizing likelihood is equivalent to minimizing KL divergence. Hence, by Pinsker's Inequality, the total variation distance 
$\epsilon_{T,\hat{T}}^{\pi_D}=\E_{(s,a)\sim\rho_{T}^{\pi_D} }D_{TV}(T(\cdot|s,a)\,\|\,\hat{T}(\cdot|s,a))$ 
is small. For deterministic transitions, the objective is to minimize $\epsilon_{\ell_2}=\E_{(s,a)\sim\rho_{\overline{T}}^{\pi_D} }
\|\overline{T}(s,a)-\hat{\overline{T}}(s,a)\|_2$.

Line 5 is to maximize model-based cumulative reward $R(\pi,\hat{T})$ under the learned transition $\hat{T}$. Still, the overall goal is to maximize the true cumulative reward $R(\pi,T).$ Note that
\begin{equation}
R(\pi_i, T)-R(\pi_{i-1}, T) =
\underbrace{R(\pi_i, \hat T)-R(\pi_{i-1}, \hat T)}_{\text{m.-b.~policy~improvement}}
+\underbrace{R(\pi_i, T)-R(\pi_i, \hat T) + R(\pi_{i-1},\hat T)- R(\pi_{i-1}, T)}_{\text{reward~errors}}.
\label{eq:reward-diff}
\end{equation}
Hence Line 5 makes an improvement on $R(\pi,T)$ (Eq.~\eqref{eq:reward-diff}$>0$) if the error in cumulative reward $|R(\pi,T)-R(\pi,\hat{T})|$ is small and the model-based policy improvement $R(\pi_i,\hat{T})-R(\pi_{i-1},\hat{T})$ is large. However, the model-based policy improvement is often theoretically intractable. This is because the policy optimization is usually conducted by deep RL algorithms \citep{Fujimoto2018td3,Haarnoja2018sac} but state-of-the-art provable RL algorithms are still limited to linear function approximation \citep{jin20linear,duan2020offpolicy}. Therefore, we assume the model-based policy improvement is sufficiently large and focus on the error in the cumulative reward.

The desired closeness between $\pi$ and $\pi_D$ is achieved by Line 3's truncation and Line 5's constraint to a local ball $B_{\pi_D}$ of $\pi_D.$ Such closeness of policies is commonly used in the literature \citep{luo2018slbo, Janner2019mbpo,yu2020mopo}. Indeed, since $\hat{T}$ is fitted under $\pi_D$ ($\mathcal{D}$'s distribution is $\rho_T^{\pi_D}$), if $\pi$ and $\pi_D$ are far apart, we cannot expect $\hat{T}$ behave like $T$ under $\pi$. Practically, this is not a strong assumption, because we can algorithmically enforce closeness between $\pi$ and $\pi_D$ by constraining the KL divergence between $\pi$ and $\pi_D$. Since it is much easier to control the policy error, the model error is the dominating error in MBRL. Hence we focus on the dependency of the cumulative model error w.r.t. the horizon.

\section{Related Work}\label{sec:related}

There have been many experimental studies of model-based RL. Evidence in \citet{Gu2016mbdq} and \citet{Nagabandi2018mbmf} suggests that for continuous control tasks, vanilla MBRL \citep{Sutton1991dyna} hardly surpasses model-free RL, unless using a linear transition model or a hybrid model-based and model-free algorithm. To enhance the applicability of MBRL, the Ensemble Method is widely adopted in the literature, since it helps alleviate overfitting in a neural network (NN) model. Instances of this approach include an ensemble of deterministic NN transition models \citep{kurutach2018metrpo}, an ensemble of probabilistic NN transition models \citep{Kurtland2018pets} with model predictive control \citep{mpc} or ensembles of deterministic NN for means and variances of rollouts with different horizons \citep{Jacob2018steve}. In addition to training multiple models, \citet{clavera2018meta} leverages meta-learning to train a policy that can quickly adapt to new transition models. \citet{wang2019benchmarking} provides useful benchmarks of various model-based RL methods.

On the theoretical side, for stochastic state transitions the error in the cumulative reward is quadratic in the length of model rollouts. Specifically, \citet[Theorem A.1]{Janner2019mbpo} provides the bound
\vspace{-1mm}
\begin{equation}
R(\pi,T)-R(\pi,\hat{T})\geq -\frac{2\gamma r^{\max}}{(1-\gamma)^2}(\epsilon_m+2\epsilon_{\pi})-\frac{4\epsilon_\pi r^{\max}}{1-\gamma},
\label{eq:mbpo-err}
\end{equation}
where $\epsilon_m=\max_t \E_{s,a\sim \rho_{\pi_D,t}}D_{TV}(T(\cdot|s,a)\,\|\, \hat{T}(\cdot|s,a))$,\vspace{1mm}
$\epsilon_{\pi}=\max_sD_{TV}(\pi_D(\cdot|s)\,\|\, \pi(\cdot|s))$ and $\rho_{\pi_D,t}$ is the density of $(s,a)$ at step 
$t$ following $(\rho_0,\pi_D,T)$. 
For deterministic state transitions and an  $L$-Lipschitz value function $V(s)$, \citet[Proposition 4.2]{luo2018slbo} shows

\begin{equation}\label{eq:slbo-err}
\left | R(\pi,T)-R(\pi,\hat{T}) \right | 
\leq
\frac{\gamma}{1-\gamma} L ~\E_{\substack{s\sim \rho_T^{\pi_D}\\a\sim\pi(\cdot|s)}}
\| {\bar{T}(s,a)-\hat{\bar{T}}(s,a)} \| 
+2\frac{\gamma^2}{(1-\gamma)^2} \delta~\diam_{\mathcal{S}},
\end{equation}
where $\delta=\E_{s\sim \rho_T^{\pi}}\sqrt{D_{KL}
(\pi(\cdot|s)\,\|\,\pi_D(\cdot|s))}$ and $\diam_S$ is the diameter of $\mathcal{S}$. 

In practice, we enforce $\pi_D$ and $\pi$ to be close, so the model error terms dominate in Eq.~(\ref{eq:mbpo-err}) and (\ref{eq:slbo-err}). Eq.~(\ref{eq:slbo-err}) looks sharper since the model error is correlated with a linear rather than quadratic term of the expected rollout length $(1-\gamma)^{-1}.$ However, since the value function represents the cumulative reward, it's Lipschitz constant (assuming it exists) 
can be $O((1-\gamma)^{-1}).$ So it is hard to compare Eq.~(\ref{eq:mbpo-err}) and (\ref{eq:slbo-err}). While a Lipschitz value function is commonly assumed in the literature \citep{luo2018slbo,xiao2019learning,yu2020mopo}, this is hard to verify in practice, and, even if it holds, 
the constant can be very large. To avoid strong assumptions, we do not assume a Lipschitz value function. 
In addition, we enhance the results of \citet{Janner2019mbpo}, 
by showing their constants ``in maxima'' can be replaced by constants ``in expectation''.

A major contribution of \citet{Janner2019mbpo} 
is the use of branched rollouts generated by 
$(\rho_T^{\pi_D},\pi,\hat{T})$. By Theorem 4.3 in \citep{Janner2019mbpo}, branched rollouts of length $k$ satisfy
\begin{align}
R(\pi,T) - R^{\text{branch}}(\pi)\geq -2r^{\max}\Big[ \frac{\gamma^{k+1}\epsilon_{\pi}}{(1-\gamma)^2}
\!+\!\frac{\gamma^k\epsilon_{\pi}}{1-\gamma}
\!+\!\frac{k\epsilon_m}{1-\gamma} \Big],
\label{eq:mbpo-br-err}
\end{align}
with the same constants as Eq.~(\ref{eq:mbpo-err}). Eq.~(\ref{eq:mbpo-br-err}) implies that if the model is almost perfect ($\epsilon_m\approx0$), the error is dominated by the policy error $\gamma^k\epsilon_\pi.$
Since $\gamma<1,$ the minimal error is attained at large $k$. This suggests that a near perfect model helps correct the error due to the mismatch of the sampling policy and current policy. Still, with an almost perfect model, the problem is reduced to off-policy RL, which always suffers from policy mismatch error \citep{duan2020offpolicy}. Attaining the minimal error at large branch length $k$ 
also contradicts the fact that the error accumulates over the trajectory \citep{wang2019benchmarking,xiao2019learning}. The error propagation in the MBRL system implies the compounding error always increases with length. If we accept Eq.~\eqref{eq:mbpo-br-err}, there is still an important gap since Eq.~\eqref{eq:mbpo-br-err} is for stochastic transitions, but the experiments in \citet{Janner2019mbpo} used deterministic transitions. Our analysis shows the error of branched rollouts for both stochastic and deterministic transitions increases in the expected branched length $(1-\beta)^{-1}.$ So we always favor short lengths and are free from the issues mentioned above.

\section{Main Result}
\label{sec:main}
As discussed in \S\ref{sec:mbrl}, we focus on the error in the cumulative reward $|R(\pi,T)-R(\pi,\hat{T})|$ in MBRL settings. To do so we use the triangle inequality:
\begin{equation}
|R(\pi,T)-R(\pi,\hat{T})|
\leq \underbrace{|R(\pi,T)\!-\!R(\pi_D,T)|}_{\substack{\text{controlled~by}~\epsilon_{\pi_D,\pi}^{T}}}+\underbrace{|R(\pi_D,T)\!-\!R(\pi_D,\hat{T})|}_{\substack{\text{controlled~by}~\epsilon_{T,\hat{T}}^{\pi_D}~\text{or}~\epsilon_{\ell_2}}} +\underbrace{|R(\pi_D,\hat{T})\!-\!R(\pi,\hat{T})|}_{\substack{\text{controlled~by}~\epsilon_{\pi_D,\pi}^{\hat{T}}}}.
\label{eq:decomp}
\end{equation}
The error terms on the RHS of Eq.~\eqref{eq:decomp} result from policy mismatch ($1^{\text{st}}$ and 3$^{\text{rd}}$ terms) and transition mismatch ($2^\text{nd}$ term). Moreover, since these errors are controlled by the discrepancies between $T, \hat T$ and between $\pi, \pi_D$, the errors can be made small in the MBRL training (see the discussion in \S\ref{sec:mbrl}).

The discrepancy between policies $\pi_D$ and $\pi$ is measured by the total variation (TV) distance:
\begin{align*}
\epsilon_{\pi_D,\pi}^{T}=\E_{s\sim\rho_{T}^{\pi_D} }D_{TV}\left ( \pi_D(\cdot|s) 
~ \| ~ \pi(\cdot|s)\right ) 
\quad 
\textrm{and} \quad \epsilon_{\pi_D,\pi}^{\hat{T}}
=\E_{s\sim\rho_{\hat{T}}^{\pi} }D_{TV}(\pi_D(\cdot|s)~ \| ~\pi(\cdot|s)).
\end{align*}
The expectation in $\epsilon_{\pi_D,\pi}^{T}$ is over $\rho_{T}^{\pi_D}$ and in $\epsilon_{\pi_D,\pi}^{\hat{T}}$ over $\rho_{\hat{T}}^{\pi}$. This allows us to measure policy discrepancy under the real dataset $\mathcal{D}$ (distributed as $\rho_{T}^{\pi_D}$) and the simulated environment (distributed as $\rho_{\hat{T}}^{\pi}$).

The discrepancies between real and learned transitions $T,~\hat{T}$ are measured by (a) TV distance for stochastic transitions and (b) $\ell_2$ error for deterministic ones. 
\begin{align*}
\textrm{(a)}~\epsilon_{T,\hat{T}}^{\pi_D} 
=\E_{(s,a)\sim\rho_{T}^{\pi_D} }D_{TV}
\left (T(\cdot|s,a)~ \| ~\hat{T}(\cdot|s,a)\right )
~\textrm{and}~
\textrm{(b)}~\epsilon_{\ell_2} 
=\E_{(s,a)\sim  \rho_{\overline{T}}^{\pi_D}  } 
\|\overline{T}(s,a)-\hat{\overline{T}}(s,a)\|_2.
\end{align*}

From the RHS of Eq.~(\ref{eq:decomp}), the policy mismatch errors (1$^{\text{st}}$ and 3$^{\text{rd}}$ terms) are invariant under exchange of $\pi$ and $\pi_D.$ We hence call these terms
``symmetric in $(\pi,\pi_D)$". 
Similarly, the transition mismatch error (2$^\text{nd}$ term) is symmetric in $(T,\hat{T})$. However, the terms that control them, $\epsilon_{\pi_D,\pi}^{T},$ $\epsilon_{\pi_D,\pi}^{\hat{T}}$, and $\epsilon_{T,\hat{T}}^{\pi_D},$ $\epsilon_{\ell_2}$ 
are asymmetric; the first two in $(\pi,\pi_D)$ and the last in $(T,\hat T).$ To bridge these symmetric and asymmetric quantities, we establish the following:

\begin{equation}
|R(\rho_1)\!-\!R(\rho_2)|\underset{(*)}{\leq} C\!\times\!\{D_{TV}(\rho_1\lvert\rvert\rho_2)~\text{or}~\wass{\rho_1}{\rho_2}\}
\underset{(**)}{\leq} C'\!\times\! \{\epsilon_{\pi_D,\pi}^{T}, \epsilon_{\pi_D,\pi}^{\hat{T}},\epsilon_{T,\hat{T}}^{\pi_D},\text{~or}~\epsilon_{\ell_2} \}.
\label{eq:rough}
\end{equation}

Eq.~\eqref{eq:rough} outlines the proof technique using notations in Eq.~\eqref{eq:cumulative_reward}, with $C,~C'$ being underdetermined constants. Inequality $(*)$ upper bounds the cumulative reward error by one of the symmetric quantities w.r.t.~occupancy measures $(\rho_1(s,a),\rho_2(s,a))$: TV distance for stochastic transitions or 1-Wasserstein distance \citep{Villani2008opt_trans} for deterministic transitions. Inequality $(**)$ upper bounds the symmetric quantities by one of the asymmetric ones, using the contraction of the Bellman flow operator (if it holds). 

While the Bellman flow operator is a contraction w.r.t.~TV distance, this may not hold w.r.t.~$W_1$ distance. We address this situation in \S\ref{sec:main-det-weak}. Although we use $W_1$ distance as an intermediate step to analyze deterministic transitions, we finally upper bound $W_1$ distance by $\ell_2$ error, as outlined by Inequality $(**)$. This avoids the need to minimize $W_1$ error using a 
Wasserstein GAN \citep{arjovsky2017wasserstein} 
or other optimization techniques \citep{Gabriel2019compwass}. 

In the following subsections, we first analyze the policy error, then the transition error when we have: (1) absolutely continuous stochastic transitions, (2) deterministic transitions with strong continuity, and (3) deterministic transitions with weak continuity. 
Cases (1) and (2) have Bellman contractions yielding sharp two-sided bounds. Case (3) uses a bounding technique inspired by \citet[Lemma 2]{Syed2010} to establish a one-sided bound. By combining the policy error with the transition errors, we obtain corresponding MBRL errors. Full proofs are in the Appendix. 

\subsection{Symmetry Bridge Lemma and Policy Mismatch Error}
We start by introducing a key lemma. Then we will use it to analyze the policy mismatch error.
\subsubsection{Symmetry Bridge Lemma}
Lemma~\ref{lemma:sym-bridge} (following \citet[Corollary 2.4]{Conrad2014}) is a key to analyze both policy mismatch and transition mismatch errors through contractions.

\begin{lemma}
	\label{lemma:sym-bridge}
	Let $B$ be a Bellman flow operator with fixed-point $\rho^\star$ and $\rho$ be a state distribution. If $B$ is a $\eta$-contraction w.r.t. some metric $\norm{\cdot}$, then $$\norm{\rho-\rho^\star}\leq \norm{\rho-B(\rho)}/(1-\eta).$$
\end{lemma}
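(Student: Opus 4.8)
The plan is to use the defining property of a contraction together with the triangle inequality, which is the standard trick for bounding the distance to a fixed point. Since $B$ is an $\eta$-contraction with $\eta \in [0,1)$, we have for any two state distributions $\mu, \nu$ that $\norm{B(\mu) - B(\nu)} \leq \eta \norm{\mu - \nu}$. In particular, applying this with $\mu = \rho$ and $\nu = \rho^\star$, and using that $B(\rho^\star) = \rho^\star$ since $\rho^\star$ is the fixed point, we get $\norm{B(\rho) - \rho^\star} = \norm{B(\rho) - B(\rho^\star)} \leq \eta \norm{\rho - \rho^\star}$.

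Next I would insert $B(\rho)$ as an intermediate point and apply the triangle inequality to the quantity we want to bound:
\begin{equation*}
\norm{\rho - \rho^\star} \leq \norm{\rho - B(\rho)} + \norm{B(\rho) - \rho^\star} \leq \norm{\rho - B(\rho)} + \eta \norm{\rho - \rho^\star}.
\end{equation*}
Rearranging, $(1-\eta)\norm{\rho - \rho^\star} \leq \norm{\rho - B(\rho)}$, and since $\eta < 1$ we may divide by $1-\eta > 0$ to obtain $\norm{\rho - \rho^\star} \leq \norm{\rho - B(\rho)}/(1-\eta)$, which is exactly the claim.

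There is really no hard obstacle here; the only things to be careful about are (i) that $\norm{\cdot}$ is a genuine metric (or at least satisfies the triangle inequality and is finite on the relevant distributions) so that the intermediate step is legitimate, and (ii) that $\norm{\rho - \rho^\star}$ is finite, so that the rearrangement $(1-\eta)\norm{\rho-\rho^\star} \leq \norm{\rho - B(\rho)}$ does not involve subtracting $\infty$ from $\infty$; this is automatic when the metric is, e.g., total variation (bounded by $1$) and needs only a mild integrability remark for $W_1$. The finiteness of the right-hand side is guaranteed whenever the left-hand side is, but it is cleanest to note that $B$ maps the space of distributions into itself and the relevant metric space is complete, so the Banach fixed-point theorem already supplies a finite fixed point and finite distances. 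Beyond those routine checks, the argument is a two-line application of contraction plus triangle inequality.
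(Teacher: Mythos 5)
Your proof is correct, but it takes a slightly different route from the paper's. You insert $B(\rho)$ once, apply the triangle inequality, use the contraction plus the fixed-point identity $B(\rho^\star)=\rho^\star$ to get $\norm{B(\rho)-\rho^\star}\leq \eta\norm{\rho-\rho^\star}$, and then rearrange and divide by $1-\eta$. The paper instead telescopes along the whole orbit: it writes $\rho^\star = B^\infty(\rho)$, bounds $\norm{\rho-\rho^\star}$ by $\sum_{i=0}^\infty \norm{B^i(\rho)-B^{i+1}(\rho)} \leq \sum_{i=0}^\infty \eta^i \norm{\rho - B(\rho)}$, and sums the geometric series. The two arguments trade hypotheses in a mildly interesting way. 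Your rearrangement step requires knowing a priori that $\norm{\rho-\rho^\star}$ is finite (so that $(1-\eta)\norm{\rho-\rho^\star}\leq\norm{\rho-B(\rho)}$ is not vacuous), which you correctly flag; this is automatic for total variation and needs only a first-moment remark for $W_1$. The paper's telescoping version never subtracts anything, but instead leans on the convergence $B^n(\rho)\to\rho^\star$, i.e., on completeness of the underlying metric space (its Fact on the Banach fixed-point theorem). Both are legitimate two-line proofs of the same standard estimate, and your awareness of the finiteness caveat is exactly the right routine check for the route you chose.
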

Recall from inequality $(**)$ of Eq.~(\ref{eq:rough}), we need to bridge from symmetric quantities to asymmetric ones. Lemma~\ref{lemma:sym-bridge} constructs a bridge for this purpose. The LHS is symmetric (invariant under exchange) in $(\rho,~\rho^\star)$. Also, the RHS is asymmetric in $(\rho,~\rho^\star)$ because $B$ is associated with $\rho^\star$. Hence, one can upper bound symmetric quantities using asymmetric ones if the contraction is given.

\subsubsection{Policy Mismatch Error}
The policy mismatch error is analyzed by Eq.~(\ref{eq:rough}). Note that the discrepancy between policies is measured by TV distance and that the Bellman flow operator is a contraction w.r.t. TV distance. Lemma~\ref{lemma:sym-bridge} establishes 
the inequality $(**)$ of Eq.~(\ref{eq:rough}). The next Lemma is used to verify inequality $(*)$.
\begin{lemma}
	If $0\leq r(s,a)\leq r^{\max}$, then $$|R(\rho_1)- R(\rho_2)| \leq D_{TV}(\rho_1 \| \rho_2) r^{\max}/(1-\gamma).$$
	\label{lemma:err-tv-occu}
\end{lemma}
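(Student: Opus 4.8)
The plan is to recall that $R(\rho)=\frac{1}{1-\gamma}\E_{(s,a)\sim\rho}[r(s,a)]$ from Eq.~\eqref{eq:cumulative_reward}, so that
\begin{equation*}
|R(\rho_1)-R(\rho_2)| = \frac{1}{1-\gamma}\left| \int r(s,a)\,\rho_1(s,a)\,ds\,da - \int r(s,a)\,\rho_2(s,a)\,ds\,da \right| = \frac{1}{1-\gamma}\left| \int r(s,a)\bigl(\rho_1(s,a)-\rho_2(s,a)\bigr)\,ds\,da \right|.
\end{equation*}
First I would pull the absolute value inside the integral and bound $r(s,a)\le r^{\max}$, which gives $|R(\rho_1)-R(\rho_2)| \le \frac{r^{\max}}{1-\gamma}\int |\rho_1(s,a)-\rho_2(s,a)|\,ds\,da$. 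Then I would identify the remaining integral as $2D_{TV}(\rho_1\,\|\,\rho_2)$ or $D_{TV}(\rho_1\,\|\,\rho_2)$ depending on the normalization convention; since the stated bound has no factor of $2$, the paper is using the convention $D_{TV}(\rho_1\,\|\,\rho_2)=\sup_{A}|\rho_1(A)-\rho_2(A)| = \tfrac12\int|\rho_1-\rho_2|$, so the $L^1$ distance equals $2D_{TV}$, and a cleaner route is to note directly that $\int r\,(\rho_1-\rho_2) \le r^{\max}\cdot(\text{mass where }\rho_1>\rho_2) - 0 \le r^{\max}D_{TV}(\rho_1\,\|\,\rho_2)$ by the variational characterization of TV distance applied to the (bounded, nonnegative) test function $r/r^{\max}$.

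Spelling that out: since $0\le r(s,a)\le r^{\max}$, the function $g = r/r^{\max}$ satisfies $0\le g\le 1$, hence $g - \tfrac12$ has sup-norm at most $\tfrac12$, and $\int g\,(\rho_1-\rho_2) = \int (g-\tfrac12)(\rho_1-\rho_2)$ because $\rho_1,\rho_2$ both integrate to $1$. By Hölder (or the dual formulation of TV), $\bigl|\int (g-\tfrac12)(\rho_1-\rho_2)\bigr| \le \tfrac12\,\|\rho_1-\rho_2\|_1 = D_{TV}(\rho_1\,\|\,\rho_2)$. Multiplying through by $r^{\max}/(1-\gamma)$ yields the claim. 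The symmetric absolute value on the left is automatic since the right-hand side is symmetric in $(\rho_1,\rho_2)$.

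I do not expect any serious obstacle here; this is essentially the standard "value difference is controlled by TV of occupancy measures" estimate. The only point requiring care is bookkeeping the TV normalization so that the constant is exactly $r^{\max}/(1-\gamma)$ with no stray factor of $2$ — i.e.\ being consistent with the paper's convention $D_{TV} = \tfrac12\|\cdot\|_1$ — and making sure the step $\int g\,(\rho_1-\rho_2)=\int(g-\tfrac12)(\rho_1-\rho_2)$ is justified by the fact that occupancy measures are normalized probability densities (which holds by construction in Eq.~\eqref{eq:cumulative_reward}). If instead the paper means $\rho_1,\rho_2$ to be the joint state-action occupancy measures whose $r$-integral is $(1-\gamma)^{-1}$-free, the displayed identity $R(\rho)=\tfrac1{1-\gamma}\E_\rho[r]$ already absorbs that, so no change is needed.
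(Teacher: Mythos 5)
Your proposal is correct and reaches the paper's bound by essentially the same mechanism: both arguments exploit the two-sided bound $0\le r\le r^{\max}$ together with the convention $D_{TV}=\tfrac12\|\cdot\|_1$ to avoid the spurious factor of $2$, the paper via the decomposition $\int r(\max(\rho_1,\rho_2)-\rho_2)\le r^{\max}\cdot\tfrac12\|\rho_1-\rho_2\|_1$ and you via centering $r/r^{\max}$ at $\tfrac12$ and applying H\"older. Your centering step is valid because the normalized occupancy measures integrate to one, so there is no gap.
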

\vspace{-6mm}
The following theorem establishes the upper bounds of policy mismatch errors (1$^{\text{st}}$ and 3$^{\text{rd}}$ terms of Eq.~\eqref{eq:decomp}). Lemmas~\ref{lemma:sym-bridge} and \ref{lemma:err-tv-occu} are used in its proof.

\begin{theorem}\label{thm:err-tv-policy}
	If $0\leq r(s,a)\leq r^{\max}$ and  $\epsilon_{\pi_D,\pi}^T=\E_{s\sim \rho_T^{\pi_D}}[D_{TV}(\pi_D(\cdot|s) \,\|\, \pi(\cdot|s))]$, then 
	$$|R(\pi_D,T)-R(\pi,T)|\leq \epsilon_{\pi_D,\pi}^T r^{\max}\Big(\frac{1}{1-\gamma}+\frac{\gamma}{(1-\gamma)^2}\Big).$$
\end{theorem}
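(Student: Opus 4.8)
The plan is to bound $|R(\pi_D,T)-R(\pi,T)|$ by a total-variation distance between joint occupancy measures via Lemma~\ref{lemma:err-tv-occu}, split that distance into a policy piece and a state-marginal piece, and control the latter with the Symmetry Bridge Lemma. \textbf{Step 1 (reduce to an occupancy-measure TV).} Write $R(\pi_D,T)=R(\rho_T^{\pi_D})$ and $R(\pi,T)=R(\rho_T^{\pi})$, where $\rho_T^{\pi_D},\rho_T^{\pi}$ now denote the joint $(s,a)$ occupancy measures. Lemma~\ref{lemma:err-tv-occu} then gives $|R(\pi_D,T)-R(\pi,T)|\le \tfrac{r^{\max}}{1-\gamma}\,D_{TV}(\rho_T^{\pi_D}\,\|\,\rho_T^{\pi})$, so it suffices to show $D_{TV}(\rho_T^{\pi_D}\,\|\,\rho_T^{\pi})\le \tfrac{\epsilon_{\pi_D,\pi}^T}{1-\gamma}$, because $\tfrac{1}{1-\gamma}\cdot\tfrac{1}{1-\gamma}=\tfrac{1}{1-\gamma}+\tfrac{\gamma}{(1-\gamma)^2}$, which is exactly the claimed constant.

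\textbf{Step 2 (split into a policy part and a state-marginal part).} Use the factorizations $\rho_T^{\pi_D}(s,a)=\rho_T^{\pi_D}(s)\,\pi_D(a|s)$ and $\rho_T^{\pi}(s,a)=\rho_T^{\pi}(s)\,\pi(a|s)$, insert the hybrid measure $\rho_T^{\pi_D}(s)\,\pi(a|s)$, and apply the triangle inequality for $D_{TV}$. The first term, $D_{TV}(\rho_T^{\pi_D}(s)\pi_D(a|s)\,\|\,\rho_T^{\pi_D}(s)\pi(a|s))$, equals $\E_{s\sim\rho_T^{\pi_D}}D_{TV}(\pi_D(\cdot|s)\,\|\,\pi(\cdot|s))=\epsilon_{\pi_D,\pi}^T$ because the state marginal is shared; the second term, $D_{TV}(\rho_T^{\pi_D}(s)\pi(a|s)\,\|\,\rho_T^{\pi}(s)\pi(a|s))$, collapses to the marginal distance $D_{TV}(\rho_T^{\pi_D}(s)\,\|\,\rho_T^{\pi}(s))$ since the conditional $\pi(a|s)$ is common.

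\textbf{Step 3 (control the state-marginal distance via the bridge).} Since $\rho_T^{\pi}(s)$ is the fixed point of the Bellman flow operator $B_{\pi,T}$, which is a $\gamma$-contraction w.r.t.\ TV, Lemma~\ref{lemma:sym-bridge} yields $D_{TV}(\rho_T^{\pi_D}(s)\,\|\,\rho_T^{\pi}(s))\le \tfrac{1}{1-\gamma}\,D_{TV}(\rho_T^{\pi_D}(s)\,\|\,B_{\pi,T}(\rho_T^{\pi_D}(s)))$. To evaluate the residual, note $\rho_T^{\pi_D}(s)$ is itself the fixed point of $B_{\pi_D,T}$, so $\rho_T^{\pi_D}(s)-B_{\pi,T}(\rho_T^{\pi_D}(s)) = B_{\pi_D,T}(\rho_T^{\pi_D}(s))-B_{\pi,T}(\rho_T^{\pi_D}(s)) = \gamma\!\int T(s|s',a')\,[\pi_D(a'|s')-\pi(a'|s')]\,\rho_T^{\pi_D}(s')\,ds'\,da'$; bounding the total variation of this quantity (triangle inequality under the integral, then integrating out $s$ against the density $T$) gives exactly $\gamma\,\epsilon_{\pi_D,\pi}^T$. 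Chaining Steps 2 and 3, $D_{TV}(\rho_T^{\pi_D}\,\|\,\rho_T^{\pi})\le \epsilon_{\pi_D,\pi}^T+\tfrac{\gamma}{1-\gamma}\epsilon_{\pi_D,\pi}^T=\tfrac{\epsilon_{\pi_D,\pi}^T}{1-\gamma}$, and Step 1 concludes.

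The main obstacle I anticipate is the bookkeeping in Steps 2 and 3: verifying carefully that a $D_{TV}$ between two joint densities sharing one factor reduces to the $D_{TV}$ of the differing factor (once for a shared state marginal, once for a shared conditional policy), and computing the Bellman residual $\norm{\rho_T^{\pi_D}(s)-B_{\pi,T}(\rho_T^{\pi_D}(s))}$ cleanly by differencing $B_{\pi_D,T}$ and $B_{\pi,T}$ at the common argument $\rho_T^{\pi_D}(s)$ rather than by unrolling the discounted series. Everything else — the triangle inequality, the contraction modulus $\gamma$, and the identities $1+\tfrac{\gamma}{1-\gamma}=\tfrac{1}{1-\gamma}$ and $\tfrac{1}{(1-\gamma)^2}=\tfrac{1}{1-\gamma}+\tfrac{\gamma}{(1-\gamma)^2}$ — is routine.
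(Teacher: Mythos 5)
Your proposal is correct and follows essentially the same route as the paper's proof: reduce to a joint-occupancy TV bound via Lemma~\ref{lemma:err-tv-occu}, insert the hybrid measure $\rho_T^{\pi_D}(s)\pi(a|s)$ to split off $\epsilon_{\pi_D,\pi}^T$, and bound the state-marginal TV with Lemma~\ref{lemma:sym-bridge} together with the fixed-point identity $\rho_T^{\pi_D}=B_{\pi_D,T}(\rho_T^{\pi_D})$, yielding the residual $\gamma\,\epsilon_{\pi_D,\pi}^T$. The bookkeeping you flag goes through exactly as you describe, and the constant $\tfrac{1}{(1-\gamma)^2}=\tfrac{1}{1-\gamma}+\tfrac{\gamma}{(1-\gamma)^2}$ matches the stated bound.
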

\begin{sproof}
	By Lemma~\ref{lemma:err-tv-occu}, it is enough to upper bound $D_{TV}(\rho_T^{\pi_D}(s,a)\lvert\lvert \rho_T^{\pi}(s,a))$: 
	\begin{equation*}
	\begin{split}
	&D_{TV}(\rho_T^{\pi_D}(s,a) \| \rho_T^{\pi}(s,a))\\ 
	\leq &D_{TV}(\rho_T^{\pi_D}(s)\pi_D(a|s) \| \rho_T^{\pi_D}(s)\pi(a|s))+D_{TV}(\rho_T^{\pi_D}(s)\pi(a|s) \| \rho_T^{\pi}(s)\pi(a|s))\\
	\leq &\epsilon_{\pi_D,\pi}^T + \frac{1}{1-\gamma} D_{TV}(B_T^{\pi_D}(\rho_T^{\pi_D}(s)) \| B_T^\pi(\rho_T^{\pi_D}(s)))\\
	\leq &\epsilon_{\pi_D,\pi}^T + \frac{\gamma}{1-\gamma}\epsilon_{\pi_D,\pi}^T,
	\end{split}
	\end{equation*}
	where the second inequality follows from Lemma \ref{lemma:sym-bridge} and the fixed-point property.
\end{sproof}

\subsubsection{Application to Imitation Learning}
We now make the following interesting side observation. Imitation learning \citep{Syed2010,Lee2016gail} 
is matching the demonstrated policy and the generator policy. 
Because Theorem~\ref{thm:err-tv-policy} is about policy mismatch error, it is applicable to imitation learning. Observe that the objective of GAIL is JS (Jensen-Shannon) divergence when its discriminator is well-trained and that Behavior Cloning's objective is KL divergence. We can use Pinsker's Inequality to upper bound these divergences and translate Theorem~\ref{thm:err-tv-policy} and Lemma~\ref{lemma:err-tv-occu}, yielding:

\begin{corollary}[Error of Behavioral Cloning]
	\setlength\topsep{0pt}
	Let $\pi_D$ and $\pi$ be the expert and agent policy. If $0\leq r(s,a)\leq r^{\max}$ and $\E_{s\sim\rho_{T}^{\pi_D}} D_{\textrm{KL}}(\pi_D(\cdot|s) \| \pi(\cdot|s)) \leq \epsilon_{\textrm{BC}}$, then 
	$$
	|R(\pi_D,T)-R(\pi,T)|\leq \sqrt{\epsilon_{\textrm{BC}}/2}~r^{\max}\Big(\frac{1}{1-\gamma}+\frac{\gamma}{(1-\gamma)^2}\Big).$$
\end{corollary}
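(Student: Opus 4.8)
The plan is to derive the Corollary directly from Theorem~\ref{thm:err-tv-policy} by controlling the total-variation quantity $\epsilon_{\pi_D,\pi}^T$ with the KL bound $\epsilon_{\textrm{BC}}$, using Pinsker's inequality and Jensen's inequality. First I would recall that Theorem~\ref{thm:err-tv-policy} gives
\[
|R(\pi_D,T)-R(\pi,T)|\leq \epsilon_{\pi_D,\pi}^T\, r^{\max}\Big(\frac{1}{1-\gamma}+\frac{\gamma}{(1-\gamma)^2}\Big),
\]
so it suffices to show $\epsilon_{\pi_D,\pi}^T\leq \sqrt{\epsilon_{\textrm{BC}}/2}$. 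By definition $\epsilon_{\pi_D,\pi}^T=\E_{s\sim\rho_T^{\pi_D}}D_{TV}(\pi_D(\cdot|s)\,\|\,\pi(\cdot|s))$.

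Next I would apply Pinsker's inequality pointwise in $s$: for every state $s$,
\[
D_{TV}(\pi_D(\cdot|s)\,\|\,\pi(\cdot|s))\leq \sqrt{\tfrac{1}{2}D_{\textrm{KL}}(\pi_D(\cdot|s)\,\|\,\pi(\cdot|s))}.
\]
Taking the expectation over $s\sim\rho_T^{\pi_D}$ and then using Jensen's inequality (concavity of $\sqrt{\cdot}$) to pull the expectation inside the square root gives
\[
\epsilon_{\pi_D,\pi}^T\leq \E_{s\sim\rho_T^{\pi_D}}\sqrt{\tfrac{1}{2}D_{\textrm{KL}}(\pi_D(\cdot|s)\,\|\,\pi(\cdot|s))}\leq \sqrt{\tfrac{1}{2}\,\E_{s\sim\rho_T^{\pi_D}}D_{\textrm{KL}}(\pi_D(\cdot|s)\,\|\,\pi(\cdot|s))}\leq \sqrt{\epsilon_{\textrm{BC}}/2}.
\]
Substituting this bound into Theorem~\ref{thm:err-tv-policy} yields the claimed inequality.

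There is essentially no hard step here; the Corollary is a routine translation of Theorem~\ref{thm:err-tv-policy} into the KL-based objective that Behavioral Cloning actually minimizes. The only point requiring a little care is the order of operations — one must apply Pinsker pointwise \emph{before} taking the expectation, and then invoke Jensen to move from $\E\sqrt{\cdot}$ to $\sqrt{\E\cdot}$, so that the final bound is stated in terms of the averaged KL divergence $\epsilon_{\textrm{BC}}$ rather than a pointwise maximum. This mirrors the ``constants in expectation'' theme emphasized earlier in the paper, and keeps the hypothesis $\E_{s\sim\rho_T^{\pi_D}}D_{\textrm{KL}}(\pi_D(\cdot|s)\,\|\,\pi(\cdot|s))\leq\epsilon_{\textrm{BC}}$ as weak as possible.
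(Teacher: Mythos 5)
Your proposal is correct and matches the paper's proof, which simply cites Theorem~\ref{thm:err-tv-policy} together with Pinsker's inequality; your explicit Jensen step to pass from $\E\sqrt{\cdot}$ to $\sqrt{\E\,\cdot}$ is exactly the detail left implicit there.
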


\begin{corollary}[Error of GAIL]
	\setlength\topsep{0pt}
	Let $\pi_D$ and $\pi$ be the expert and agent policy, respectively. If $0\leq r(s,a)\leq r^{\max}$ and $D_{\textrm{JS}}(\rho_{T}^{\pi_D} \| \rho_{T}^{\pi})\leq \epsilon_{\textrm{GAIL}}$, then 
	$$|R(\pi_D,T)-R(\pi,T)|\leq \sqrt{2\epsilon_{\textrm{GAIL}}}~
	r^{\max}/(1-\gamma).$$
\end{corollary}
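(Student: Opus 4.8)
The plan is to mirror the proof of Theorem~\ref{thm:err-tv-policy}, but to exploit the fact that GAIL (unlike Behavioral Cloning) directly controls a divergence between the \emph{occupancy measures} $\rho_T^{\pi_D}$ and $\rho_T^{\pi}$ rather than a per-state policy divergence. This lets us invoke Lemma~\ref{lemma:err-tv-occu} immediately and skip the Bellman-flow contraction step that, in the Behavioral Cloning corollary, produces the extra $\gamma/(1-\gamma)^2$ factor. So the argument has only three moves: (i) identify the GAIL objective with $D_{\textrm{JS}}(\rho_T^{\pi_D}\,\|\,\rho_T^{\pi})$, (ii) pass from this JS bound to a total-variation bound on the occupancy measures, and (iii) plug into Lemma~\ref{lemma:err-tv-occu}.

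First I would recall the optimal-discriminator identity from GAN theory (as used in GAIL): when the discriminator is well-trained, the generator's objective equals $D_{\textrm{JS}}(\rho_T^{\pi_D}\,\|\,\rho_T^{\pi})$, the Jensen--Shannon divergence between the joint state--action occupancy measures; the hypothesis $D_{\textrm{JS}}(\rho_T^{\pi_D}\,\|\,\rho_T^{\pi})\le\epsilon_{\textrm{GAIL}}$ then says this loss is small. Next I would convert the JS bound to a TV bound: writing $M=\tfrac12(\rho_T^{\pi_D}+\rho_T^{\pi})$ and applying Pinsker's inequality to each half of $D_{\textrm{JS}}=\tfrac12 D_{\textrm{KL}}(\rho_T^{\pi_D}\,\|\,M)+\tfrac12 D_{\textrm{KL}}(\rho_T^{\pi}\,\|\,M)$, together with the elementary identities $D_{TV}(\rho_T^{\pi_D}\,\|\,M)=D_{TV}(\rho_T^{\pi}\,\|\,M)=\tfrac12 D_{TV}(\rho_T^{\pi_D}\,\|\,\rho_T^{\pi})$, yields $D_{\textrm{JS}}(\rho_T^{\pi_D}\,\|\,\rho_T^{\pi})\ge \tfrac12 D_{TV}(\rho_T^{\pi_D}\,\|\,\rho_T^{\pi})^2$, hence $D_{TV}(\rho_T^{\pi_D}\,\|\,\rho_T^{\pi})\le\sqrt{2\epsilon_{\textrm{GAIL}}}$. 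Finally, applying Lemma~\ref{lemma:err-tv-occu} with $\rho_1=\rho_T^{\pi_D}$ and $\rho_2=\rho_T^{\pi}$ (these are precisely the occupancy measures whose rewards are $R(\pi_D,T)$ and $R(\pi,T)$) gives $|R(\pi_D,T)-R(\pi,T)|\le D_{TV}(\rho_T^{\pi_D}\,\|\,\rho_T^{\pi})\,r^{\max}/(1-\gamma)\le \sqrt{2\epsilon_{\textrm{GAIL}}}\,r^{\max}/(1-\gamma)$.

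The computational content is light; the only subtle points are justifying the optimal-discriminator reduction of the GAIL objective to an \emph{exact} JS divergence of occupancy measures (and noting this is where the "well-trained discriminator" hypothesis is used), and being careful with the logarithm base and the $\tfrac12$-versus-$\ell_1$ normalization conventions so that the Pinsker-for-JS step produces precisely the constant $2$ under the square root. Comparing with the Behavioral Cloning corollary, the qualitative payoff is that GAIL's control is already at the occupancy-measure level, so the bound scales as $(1-\gamma)^{-1}$ rather than $(1-\gamma)^{-1}+\gamma(1-\gamma)^{-2}$ --- this gap is the promised advantage of GAN-type imitation learning when the discriminator is well-trained.
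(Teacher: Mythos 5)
Your proposal is correct and follows essentially the same route as the paper: bound $D_{TV}(\rho_T^{\pi_D}\,\|\,\rho_T^{\pi})$ by $\sqrt{2\epsilon_{\textrm{GAIL}}}$ via Pinsker's inequality applied to both halves of the Jensen--Shannon divergence, then apply Lemma~\ref{lemma:err-tv-occu} directly to the occupancy measures. The only cosmetic difference is that you use the exact identity $D_{TV}(P\,\|\,M)=\tfrac12 D_{TV}(P\,\|\,Q)$ where the paper uses the triangle inequality plus $2a^2+2b^2\geq c^2$; both yield the same constant.
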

Observe that Behavioral Cloning's error is quadratic w.r.t. the expected horizon $(1-\gamma)^{-1}$ while GAIL's is linear. This suggests that when the discriminator is well-trained, GAN-style imitation learning, like GAIL, has an advantage.

\subsection{MBRL with Stochastic Transitions}
If the true transitions are stochastic, we can learn $\hat{T}$ by maximizing the likelihood, or equivalently by minimizing the KL divergence. To ensure the KL divergence is defined on a continuous state space, we assume the transition probability is absolutely continuous w.r.t. the state space, i.e., there is a density function and hence no discrete or singular continuous measures \citep{analysis}. The following theorem then follows by the proof of Theorem~\ref{thm:err-tv-policy}.

\begin{theorem}
	If $0\leq r(s,a)\leq r^{\max}$ and $\epsilon_{T,\hat{T}}^{\pi_D}=\E_{(s,a)\sim \rho_T^{\pi_D}}[D_{TV}(T(\cdot|s,a) \| \hat{T}(\cdot|s,a))]$, then 
	$$|R(\pi_D,T)-R(\pi_D,\hat{T})|\leq \epsilon_{T,\hat{T}}^{\pi_D} ~ 
	r^{\max}\gamma(1-\gamma)^{-2}.  $$
	\label{thm:err-tv-trans}
\end{theorem}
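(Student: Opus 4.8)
The plan is to run the two-step scheme of Eq.~\eqref{eq:rough} exactly as in the proof of Theorem~\ref{thm:err-tv-policy}, but with the roles of the mismatched objects swapped: here the policy $\pi_D$ is held fixed and the transition is perturbed from $T$ to $\hat T$. First I would apply Lemma~\ref{lemma:err-tv-occu} (inequality $(*)$) to get
\[
|R(\pi_D,T)-R(\pi_D,\hat T)|\;\le\; \frac{r^{\max}}{1-\gamma}\,D_{TV}\!\big(\rho^{\pi_D}_T(s,a)\,\big\|\,\rho^{\pi_D}_{\hat T}(s,a)\big),
\]
so it remains to control the TV distance between the two joint occupancy measures.

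The simplification relative to Theorem~\ref{thm:err-tv-policy} is that both occupancy measures are generated by the \emph{same} policy, hence factor as $\rho^{\pi_D}_T(s)\pi_D(a|s)$ and $\rho^{\pi_D}_{\hat T}(s)\pi_D(a|s)$. Sharing the conditional, the joint distance collapses to the state-marginal distance, $D_{TV}(\rho^{\pi_D}_T(s,a)\,\|\,\rho^{\pi_D}_{\hat T}(s,a)) = D_{TV}(\rho^{\pi_D}_T(s)\,\|\,\rho^{\pi_D}_{\hat T}(s))$; in particular there is no extra ``policy mismatch at a fixed state distribution'' term, which is why the final constant is the single $\gamma(1-\gamma)^{-2}$ rather than $1/(1-\gamma)+\gamma/(1-\gamma)^2$. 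To bound the marginal distance I invoke the symmetry bridge (Lemma~\ref{lemma:sym-bridge}, inequality $(**)$) with $B=B_{\pi_D,\hat T}$, whose unique fixed point is $\rho^{\pi_D}_{\hat T}(s)$ by Eq.~\eqref{eq:state-occu}, $\norm{\cdot}=D_{TV}$, contraction factor $\eta=\gamma$, and $\rho=\rho^{\pi_D}_T(s)$, and then use that $\rho^{\pi_D}_T(s)$ is itself the fixed point of $B_{\pi_D,T}$:
\begin{align*}
D_{TV}\!\big(\rho^{\pi_D}_T(s)\,\big\|\,\rho^{\pi_D}_{\hat T}(s)\big)
&\le \tfrac{1}{1-\gamma}\,D_{TV}\!\big(\rho^{\pi_D}_T(s)\,\big\|\,B_{\pi_D,\hat T}(\rho^{\pi_D}_T(s))\big)\\
&= \tfrac{1}{1-\gamma}\,D_{TV}\!\big(B_{\pi_D,T}(\rho^{\pi_D}_T(s))\,\big\|\,B_{\pi_D,\hat T}(\rho^{\pi_D}_T(s))\big).
\end{align*}

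It then remains to estimate this last quantity. Evaluating Eq.~\eqref{eq:bellman} for $B_{\pi_D,T}$ and $B_{\pi_D,\hat T}$ at the common argument $\rho^{\pi_D}_T$ (and with the same $\rho_0$), the $(1-\gamma)\rho_0(s)$ terms cancel, so this $D_{TV}$ equals $\tfrac{\gamma}{2}$ times the $L^1$ norm (in $s$) of $\int (T(s|s',a')-\hat T(s|s',a'))\pi_D(a'|s')\rho^{\pi_D}_T(s')\,ds'da'$. Moving the $L^1$-in-$s$ norm inside the $(s',a')$-integral (triangle inequality/Fubini, valid since the integrand is nonnegative after taking absolute values) and recognizing $\tfrac12\int |T(s|s',a')-\hat T(s|s',a')|\,ds = D_{TV}(T(\cdot|s',a')\,\|\,\hat T(\cdot|s',a'))$ bounds it by $\gamma\,\E_{(s,a)\sim\rho^{\pi_D}_T}[D_{TV}(T(\cdot|s,a)\,\|\,\hat T(\cdot|s,a))] = \gamma\,\epsilon^{\pi_D}_{T,\hat T}$. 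Chaining the three estimates yields $|R(\pi_D,T)-R(\pi_D,\hat T)| \le \tfrac{\gamma}{1-\gamma}\,\epsilon^{\pi_D}_{T,\hat T}\cdot\tfrac{r^{\max}}{1-\gamma}$, which is the claimed bound.

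There is no real obstacle: this is a frozen-policy specialization of the Theorem~\ref{thm:err-tv-policy} argument. The only points needing care are invoking the already-established $\gamma$-contraction of the Bellman flow operator in TV (so that Lemma~\ref{lemma:sym-bridge} applies), the interchange of integration in the final estimate, and noting that the absolute-continuity hypothesis is used only so that the training procedure can make $\epsilon^{\pi_D}_{T,\hat T}$ small via Pinsker's inequality — it plays no role in the inequality itself.
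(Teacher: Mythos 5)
Your proposal is correct and follows essentially the same route as the paper's proof: bound the reward gap by the TV distance of joint occupancy measures via Lemma~\ref{lemma:err-tv-occu}, collapse the joint TV to the state-marginal TV using the shared policy $\pi_D$, apply the symmetry bridge (Lemma~\ref{lemma:sym-bridge}) with the $\gamma$-contraction $B_{\pi_D,\hat T}$ together with both fixed-point identities, and evaluate the one-step Bellman discrepancy as $\gamma\,\epsilon^{\pi_D}_{T,\hat T}$. Your observation that the absence of a policy-mismatch term explains the single $\gamma(1-\gamma)^{-2}$ constant matches the paper's argument exactly.
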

\vspace{-4mm}
Theorems \ref{thm:err-tv-policy} and \ref{thm:err-tv-trans} yield the following result for MBRL with absolutely continuous stochastic transitions.
\begin{corollary}
	Assume $0\leq r(s,a)\leq r^{\max}$ and let $\epsilon_{T,\hat{T}}^{\pi_D}\triangleq \E_{(s,a)\sim\rho_T^{\pi_D}}D_{TV}(T(\cdot|s,a) ~\|~ \hat{T}(\cdot|s,a))$,\\
	
	~~~~~~~~~~~~~~$\epsilon_{\pi_D,\pi}^{T}\triangleq \E_{s\sim\rho_T^{\pi_D}}D_{TV}(\pi_D(\cdot|s)~\|~ \pi(\cdot|s))$ and $\epsilon_{\pi_D,\pi}^{\hat{T}}\triangleq \E_{s\sim\rho_{\hat{T}}^{\pi}}D_{TV}(\pi_D(\cdot|s)~\|~ \pi(\cdot|s)).$ Then
	\begin{align*}
	|R(\pi,T)-R(\pi,\hat{T})|
	\leq (\epsilon_{T,\hat{T}}^{\pi_D}+\epsilon_{\pi_D,\pi}^{T}
	+\epsilon_{\pi_D,\pi}^{\hat{T}})
	\frac{r^{\max}\gamma}{(1-\gamma)^2}
	+(\epsilon_{\pi_D,\pi}^{T}+\epsilon_{\pi_D,\pi}^{\hat{T}})\frac{r^{\max}}{(1-\gamma)}.
	\end{align*}
	\label{cor:err-stoch}
\end{corollary}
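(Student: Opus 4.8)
The plan is to apply the triangle inequality of Eq.~\eqref{eq:decomp}, which splits $|R(\pi,T)-R(\pi,\hat T)|$ into a policy-mismatch term $|R(\pi,T)-R(\pi_D,T)|$, a transition-mismatch term $|R(\pi_D,T)-R(\pi_D,\hat T)|$, and a second policy-mismatch term $|R(\pi_D,\hat T)-R(\pi,\hat T)|$, and then bound each piece by reusing the two theorems already proved (plus a small variant of the first). Since the three bounds each come out as a multiple of $\frac{r^{\max}\gamma}{(1-\gamma)^2}$ plus (for the two policy terms) a multiple of $\frac{r^{\max}}{1-\gamma}$, collecting coefficients will directly give the claimed inequality.

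The first two terms are immediate. The term $|R(\pi,T)-R(\pi_D,T)|$ is exactly the quantity bounded by Theorem~\ref{thm:err-tv-policy}, so it is at most $\epsilon_{\pi_D,\pi}^{T}\,r^{\max}\big(\frac{1}{1-\gamma}+\frac{\gamma}{(1-\gamma)^2}\big)$, which I will rewrite as $\epsilon_{\pi_D,\pi}^{T}\big(\frac{r^{\max}\gamma}{(1-\gamma)^2}+\frac{r^{\max}}{1-\gamma}\big)$. The term $|R(\pi_D,T)-R(\pi_D,\hat T)|$ is exactly the quantity bounded by Theorem~\ref{thm:err-tv-trans}, contributing $\epsilon_{T,\hat T}^{\pi_D}\,r^{\max}\gamma(1-\gamma)^{-2}$.

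The third term $|R(\pi_D,\hat T)-R(\pi,\hat T)|$ is the one that needs a little care: it is \emph{not} literally an instance of Theorem~\ref{thm:err-tv-policy}, because there the policy-discrepancy constant is an expectation over $\rho_T^{\pi_D}$ under the real transition, whereas here it must emerge as $\epsilon_{\pi_D,\pi}^{\hat T}=\E_{s\sim\rho_{\hat T}^{\pi}}D_{TV}(\pi_D(\cdot|s)\,\|\,\pi(\cdot|s))$, an expectation over $\rho_{\hat T}^{\pi}$ under the learned transition. So I would rerun the proof of Theorem~\ref{thm:err-tv-policy} with the roles of $\pi$ and $\pi_D$ swapped and $T$ replaced by $\hat T$: by Lemma~\ref{lemma:err-tv-occu} it suffices to bound $D_{TV}(\rho_{\hat T}^{\pi_D}(s,a)\,\|\,\rho_{\hat T}^{\pi}(s,a))$, and I would insert the intermediate density $\rho_{\hat T}^{\pi}(s)\pi_D(a|s)$ (rather than $\rho_{\hat T}^{\pi_D}(s)\pi(a|s)$). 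The conditional part then equals $\E_{s\sim\rho_{\hat T}^{\pi}}D_{TV}(\pi_D(\cdot|s)\,\|\,\pi(\cdot|s))=\epsilon_{\pi_D,\pi}^{\hat T}$, and the marginal part $D_{TV}(\rho_{\hat T}^{\pi_D}(s)\,\|\,\rho_{\hat T}^{\pi}(s))$ is handled by Lemma~\ref{lemma:sym-bridge} applied to $B_{\hat T}^{\pi_D}$ (whose fixed point is $\rho_{\hat T}^{\pi_D}(s)$) evaluated at $\rho_{\hat T}^{\pi}(s)$; using the fixed-point identity $\rho_{\hat T}^{\pi}(s)=B_{\hat T}^{\pi}(\rho_{\hat T}^{\pi}(s))$ and the observation that $B_{\hat T}^{\pi}$ and $B_{\hat T}^{\pi_D}$ differ only through the policy (so their TV distance at $\rho_{\hat T}^{\pi}$ is at most $\gamma\,\epsilon_{\pi_D,\pi}^{\hat T}$), this yields $D_{TV}(\rho_{\hat T}^{\pi_D}(s,a)\,\|\,\rho_{\hat T}^{\pi}(s,a))\le \epsilon_{\pi_D,\pi}^{\hat T}\big(1+\frac{\gamma}{1-\gamma}\big)=\epsilon_{\pi_D,\pi}^{\hat T}\frac{1}{1-\gamma}$. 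Lemma~\ref{lemma:err-tv-occu} then turns this into $\epsilon_{\pi_D,\pi}^{\hat T}\,r^{\max}(1-\gamma)^{-2}=\epsilon_{\pi_D,\pi}^{\hat T}\big(\frac{r^{\max}\gamma}{(1-\gamma)^2}+\frac{r^{\max}}{1-\gamma}\big)$, the mirror image of the first term.

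Adding the three contributions and grouping the $\frac{r^{\max}\gamma}{(1-\gamma)^2}$ coefficients (which multiply $\epsilon_{T,\hat T}^{\pi_D}+\epsilon_{\pi_D,\pi}^{T}+\epsilon_{\pi_D,\pi}^{\hat T}$) and the $\frac{r^{\max}}{1-\gamma}$ coefficients (which multiply $\epsilon_{\pi_D,\pi}^{T}+\epsilon_{\pi_D,\pi}^{\hat T}$) gives exactly the stated bound. The only nonroutine point is recognizing that Theorem~\ref{thm:err-tv-policy} does not apply verbatim to the third term and choosing the correct intermediate factorization so that the policy discrepancy is measured under $\rho_{\hat T}^{\pi}$; once that choice is made, the argument is a verbatim rerun of the earlier proof, and the rest is bookkeeping of powers of $(1-\gamma)$.
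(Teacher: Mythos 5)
Your proposal is correct and follows essentially the same route as the paper: the triangle-inequality decomposition of Eq.~\eqref{eq:decomp} followed by Theorem~\ref{thm:err-tv-policy} for the two policy terms and Theorem~\ref{thm:err-tv-trans} for the transition term. Your extra care with the third term is warranted (the paper's one-line proof glosses over the fact that $\epsilon_{\pi_D,\pi}^{\hat T}$ is an expectation over $\rho_{\hat T}^{\pi}$ rather than $\rho_{\hat T}^{\pi_D}$), though one can also obtain it by invoking Theorem~\ref{thm:err-tv-policy} with the roles of $\pi$ and $\pi_D$ interchanged and $T$ replaced by $\hat T$, using the symmetry of $D_{TV}$.
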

\vspace{-4mm}
Comparing the result in Corollary~\ref{cor:err-stoch} with the prior results in Eq.~(\ref{eq:mbpo-err}), we sharpen the bounds by changing the constants from maxima to expected values.

\subsubsection{MBRL with Branched Rollouts}
Corollary~\ref{cor:err-stoch} indicates that the model error term $\epsilon_{T,\hat{T}}^{\pi_D} r^{\max}\gamma/(1-\gamma)^2$ is quadratic w.r.t. the expected rollout length $(1-\gamma)^{-1}$, which makes MBRL undesirable for long rollouts and leads to the planning horizon dilemma. An intuitive countermeasure is to use short rollouts that share similar distributions with the long ones. This leads to the idea of branched rollouts. Throughout the rest of the paper, $\beta>0$ will denote the branched discount factor with $\beta <\gamma.$ We define a {\em branched rollout} with discount factor $\beta$, to be a rollout following the laws of $(\rho_{T,\gamma}^{\pi_D},~\pi,~\hat{T})$. 
Intuitively, these are rollouts initialized on the states of previous real long rollouts, $\rho_{T,\gamma}^{\pi_D}(s)$, and then run a few steps under policy $\pi$ and model $\hat{T}$.

The occupancy measure of branched rollouts is $\rho_{\hat{T},\beta}^{\rho_{T,\gamma}^{\pi_D},\pi}$ where the superscripts $\rho_{T,\gamma}^{\pi_D},\pi$ indicate the initial state distribution and policy, 
and the subscripts $\hat{T},\beta$ indicate the transition and discount factor. Branched rollouts are short by construction, but it is unclear whether their distribution is similar to that of long rollouts. This is verified by the following Lemma.

\begin{lemma}\label{lemma:short_occu_bound}
	Let $\gamma> \beta$ be the discount factors of long and short rollouts, and $\pi_D$ and $T$ be the sampling policy and the real transition. Then 
	$$
	D_{TV}(\rho_{T,\gamma}^{\pi_D} ~\|~ 
	\rho_{T,\beta}^{\rho_{T,\gamma}^{\pi_D},\pi_D}  ) 
	\leq (1-\gamma)\beta/(\gamma-\beta).
	$$
\end{lemma}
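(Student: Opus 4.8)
The plan is to recognize that the branched‑rollout occupancy measure is itself the fixed point of a Bellman flow operator, and then to apply Lemma~\ref{lemma:sym-bridge}. Write $\rho_\gamma\triangleq\rho_{T,\gamma}^{\pi_D}$ for the long‑rollout state occupancy, $\sigma\triangleq\rho_{T,\beta}^{\rho_{T,\gamma}^{\pi_D},\pi_D}$ for the branched one, and let $P$ denote the one‑step state‑to‑state Markov kernel induced by $(\pi_D,T)$, i.e.\ $(P\mu)(s)=\int T(s|s',a')\pi_D(a'|s')\mu(s')\,ds'\,da'$. By the reasoning behind Eq.~\eqref{eq:state-occu} and Fact~4 (applied with $\gamma$ replaced by $\beta$ and $\rho_0$ replaced by $\rho_\gamma$), $\sigma$ is the unique fixed point of $B(\mu)\triangleq(1-\beta)\rho_\gamma+\beta P\mu$; since $P$ is a nonexpansion in total variation, $B$ is a $\beta$‑contraction w.r.t.\ $D_{TV}$, which is exactly the hypothesis of Lemma~\ref{lemma:sym-bridge}. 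Simultaneously, $\rho_\gamma$ is the fixed point of the long‑rollout operator, so $\rho_\gamma=(1-\gamma)\rho_0+\gamma P\rho_\gamma$. The proof will use each of these two fixed‑point identities once, in the right place.

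First I would invoke Lemma~\ref{lemma:sym-bridge} with metric $D_{TV}$, contraction factor $\beta$, fixed point $\sigma$, and test point $\rho_\gamma$, giving $D_{TV}(\rho_\gamma\,\|\,\sigma)\le(1-\beta)^{-1}D_{TV}(\rho_\gamma\,\|\,B(\rho_\gamma))$. It then remains to bound the residual $D_{TV}(\rho_\gamma\,\|\,B(\rho_\gamma))$. A one‑line computation gives $\rho_\gamma-B(\rho_\gamma)=\rho_\gamma-(1-\beta)\rho_\gamma-\beta P\rho_\gamma=\beta(\rho_\gamma-P\rho_\gamma)$, and substituting the long‑rollout identity $\rho_\gamma=(1-\gamma)\rho_0+\gamma P\rho_\gamma$ yields $\rho_\gamma-P\rho_\gamma=(1-\gamma)(\rho_0-P\rho_\gamma)$. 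Hence $D_{TV}(\rho_\gamma\,\|\,B(\rho_\gamma))=\beta(1-\gamma)\,D_{TV}(\rho_0\,\|\,P\rho_\gamma)\le\beta(1-\gamma)$, where the last step just uses that the total variation distance between probability measures is at most $1$.

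Combining the two estimates gives $D_{TV}(\rho_\gamma\,\|\,\sigma)\le\beta(1-\gamma)/(1-\beta)$, which is in fact slightly stronger than the claimed bound; since $0<\gamma-\beta\le 1-\beta$ (because $\beta<\gamma<1$), we get $D_{TV}(\rho_\gamma\,\|\,\sigma)\le\beta(1-\gamma)/(\gamma-\beta)$, as required. An equivalent self‑contained route that avoids explicitly citing Lemma~\ref{lemma:sym-bridge} is to subtract the two fixed‑point equations directly: substituting the $\rho_\gamma$‑equation into the $\sigma$‑equation and simplifying gives $\rho_\gamma-\sigma=\beta(\rho_\gamma-P\rho_\gamma)+\beta P(\rho_\gamma-\sigma)$, and then applying $D_{TV}$, using the nonexpansiveness of $P$, and rearranging reproduces the same bound.

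I do not expect a serious obstacle here; the only thing to be careful about is keeping the two Bellman flow operators distinct — the long‑rollout operator (discount $\gamma$, initial distribution $\rho_0$) whose fixed point is $\rho_\gamma$, versus the branched operator (discount $\beta$, initial distribution $\rho_\gamma$) whose fixed point is $\sigma$ — and invoking each fixed‑point identity exactly where it is needed. The one modest piece of bookkeeping is confirming that $\sigma$ genuinely is the fixed point of $B$ on the continuous state space and that $B$ is a $\beta$‑contraction; this is identical to the argument used for Eq.~\eqref{eq:state-occu} and carries over verbatim.
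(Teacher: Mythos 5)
Your proof is correct, and it takes a genuinely different route from the paper's. The paper proves this lemma by brute-force series manipulation: it writes both occupancy measures as mixtures of the per-step densities $f_i(s,a)$ of the $(\rho_0,\pi_D,T)$ process --- the long rollout puts weight $(1-\gamma)\gamma^i$ on $f_i$ and the branched rollout puts weight $\sum_{j=0}^i(1-\gamma)\gamma^{i-j}(1-\beta)\beta^j$ --- and then bounds the TV distance by summing the absolute coefficient differences, locating the index at which the difference changes sign. Your argument instead stays entirely within the paper's contraction framework: you identify $\sigma$ as the fixed point of $B(\mu)=(1-\beta)\rho_\gamma+\beta P\mu$, apply Lemma~\ref{lemma:sym-bridge}, and compute the residual $\rho_\gamma-B(\rho_\gamma)=\beta(1-\gamma)(\rho_0-P\rho_\gamma)$, which is a difference of two probability measures scaled by $\beta(1-\gamma)$. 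Every step checks out (Fact 4 of the appendix gives the fixed-point identity for the branched operator, and Fact 2 gives the $\beta$-contraction), and your route buys two things: it avoids the coefficient bookkeeping, and it yields the strictly sharper bound $(1-\gamma)\beta/(1-\beta)$, which is always less than $\gamma<1$ and does not blow up as $\beta\uparrow\gamma$, whereas the paper's $(1-\gamma)\beta/(\gamma-\beta)$ becomes vacuous in that regime; your final relaxation to the stated constant via $\gamma-\beta\leq 1-\beta$ is also correct. The only bookkeeping point worth adding is that the lemma compares joint $(s,a)$ occupancy measures while your operators act on state marginals; since both rollouts use the same policy $\pi_D$, the joint TV distance equals the state-marginal TV distance (exactly as in the paper's proof of Theorem~\ref{thm:err-tv-trans}), so nothing is lost.
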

By Lemma~\ref{lemma:short_occu_bound}, $\rho_{T,\gamma}^{\pi_D}$ and 
$\rho_{T,\beta}^{\rho_{T,\gamma}^{\pi_D},\pi_D} $ 
are close if $\beta$ is small. Hence once the pairs $(\pi,\pi_D)$ and $(T,\hat{T})$ are close, the distribution of branched rollouts is similar to that of long real rollouts, and the error in cumulative reward is small. This is given in detail below.

\begin{corollary}\label{cor:err-stoch-branch}
	Let $0\leq r(s,a)\leq r^{\max},$~~ $\epsilon_{\pi_D,\pi}^{T,\gamma}
	=\E_{s\sim\rho_{T,\gamma}^{\pi_D}}D_{TV}(\pi_D(\cdot|s) ~\|~ \pi(\cdot|s))$,\\
	
	$\epsilon_{\pi_D,\pi}^{\hat{T},\beta}
	=\E_{s\sim\rho_{\hat{T},\beta}^{\rho_{T,\gamma}^{\pi_D},\pi}}D_{TV}(\pi_D(\cdot|s)~\|~\pi(\cdot|s))$, and
	$\epsilon_{T,\hat{T}}^{\pi_D,\beta} =\E_{(s,a)\sim\rho_{T,\beta}^{\rho_{T,\gamma}^{\pi_D},\pi_D}}D_{TV}(T(\cdot|s,a) ~\|~ \hat{T}(\cdot|s,a)).$ Then
	$$ \Big|R_\gamma(\rho_0,\pi,T)-\frac{1-\beta}{1-\gamma}R_\beta(\rho_{T,\gamma}^{\pi_D},\pi,\hat{T})\Big|\leq r^{\max}\Big(\frac{\epsilon_{\pi_D,\pi}^{T,\gamma}\gamma}{(1-\gamma)^2} + \frac{(\epsilon_{T,\hat{T}}^{\pi_D,\beta} + \epsilon_{\pi_D,\pi}^{\hat{T},\beta})\beta}{(1-\beta)(1-\gamma)} + \frac{\epsilon_{\pi_D,\pi}^{T,\gamma}+\epsilon_{\pi_D,\pi}^{\hat{T},\beta}}{1-\gamma} + \frac{\beta}{\gamma-\beta}\Big).$$
\end{corollary}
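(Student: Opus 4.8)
The plan is to reduce the claim to a single total‑variation bound between two occupancy measures and then telescope through a chain of four intermediate distributions, each of which re‑uses an argument already established for Theorems~\ref{thm:err-tv-policy} and~\ref{thm:err-tv-trans} and Lemma~\ref{lemma:short_occu_bound}. The first step is purely notational: with the functional $R(\rho)=\tfrac{1}{1-\gamma}\E_{(s,a)\sim\rho}[r(s,a)]$ of Eq.~\eqref{eq:cumulative_reward} we have $R_\gamma(\rho_0,\pi,T)=R(\rho_{T,\gamma}^{\rho_0,\pi})$, and, crucially, $\tfrac{1-\beta}{1-\gamma}R_\beta(\rho_{T,\gamma}^{\pi_D},\pi,\hat T)=\tfrac{1}{1-\gamma}\E_{(s,a)\sim\rho_{\hat T,\beta}^{\rho_{T,\gamma}^{\pi_D},\pi}}[r(s,a)]=R(\rho_{\hat T,\beta}^{\rho_{T,\gamma}^{\pi_D},\pi})$ — the prefactor $\tfrac{1-\beta}{1-\gamma}$ exactly cancels the $\tfrac{1}{1-\beta}$ normalization of the $\beta$‑discounted reward, so both sides of the claimed inequality are instances of the same $R(\cdot)$. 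By Lemma~\ref{lemma:err-tv-occu} it then suffices to bound $D_{TV}(\rho_{T,\gamma}^{\rho_0,\pi}\,\|\,\rho_{\hat T,\beta}^{\rho_{T,\gamma}^{\pi_D},\pi})$ and multiply by $r^{\max}/(1-\gamma)$.

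Second, I insert the chain of $(s,a)$‑occupancies $\rho_1=\rho_{T,\gamma}^{\rho_0,\pi}$, $\rho_2=\rho_{T,\gamma}^{\pi_D}$, $\rho_3=\rho_{T,\beta}^{\rho_{T,\gamma}^{\pi_D},\pi_D}$, $\rho_4=\rho_{\hat T,\beta}^{\rho_{T,\gamma}^{\pi_D},\pi_D}$, $\rho_5=\rho_{\hat T,\beta}^{\rho_{T,\gamma}^{\pi_D},\pi}$ and apply the triangle inequality for $D_{TV}$. Each consecutive pair changes exactly one ingredient. For $D_{TV}(\rho_1\|\rho_2)$ (long rollouts, transition $T$, discount $\gamma$, policy changed $\pi\!\to\!\pi_D$) I replay the proof sketch of Theorem~\ref{thm:err-tv-policy} verbatim: split off the conditional‑policy factor and apply Lemma~\ref{lemma:sym-bridge} with the fixed‑point property of $B_{\pi_D,T}$, obtaining $D_{TV}(\rho_1\|\rho_2)\le\epsilon_{\pi_D,\pi}^{T,\gamma}(1+\tfrac{\gamma}{1-\gamma})=\epsilon_{\pi_D,\pi}^{T,\gamma}/(1-\gamma)$. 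The term $D_{TV}(\rho_2\|\rho_3)$ is exactly Lemma~\ref{lemma:short_occu_bound} (the common policy $\pi_D$ makes the state‑level and $(s,a)$‑level TV coincide), so $D_{TV}(\rho_2\|\rho_3)\le(1-\gamma)\beta/(\gamma-\beta)$. The term $D_{TV}(\rho_3\|\rho_4)$ is the transition‑mismatch computation behind Theorem~\ref{thm:err-tv-trans}, now with discount $\beta$ and initial distribution $\rho_{T,\gamma}^{\pi_D}$: since $\rho_3$ is the fixed point of $B_{\pi_D,T}$ and $\rho_4$ of $B_{\pi_D,\hat T}$ (both $\beta$‑contractions), Lemma~\ref{lemma:sym-bridge} gives $D_{TV}(\rho_3\|\rho_4)\le\tfrac{\beta}{1-\beta}\epsilon_{T,\hat T}^{\pi_D,\beta}$, and the relevant expectation is automatically over $\rho_3=\rho_{T,\beta}^{\rho_{T,\gamma}^{\pi_D},\pi_D}$, matching the corollary's definition of $\epsilon_{T,\hat T}^{\pi_D,\beta}$. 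Finally, $D_{TV}(\rho_4\|\rho_5)$ is again the Theorem~\ref{thm:err-tv-policy} argument with discount $\beta$ and transition $\hat T$; here I anchor the conditional‑policy split on $\rho_5$ (using symmetry of $D_{TV}$) so that the policy‑discrepancy expectation lands on $\rho_{\hat T,\beta}^{\rho_{T,\gamma}^{\pi_D},\pi}$, giving $D_{TV}(\rho_4\|\rho_5)\le\epsilon_{\pi_D,\pi}^{\hat T,\beta}(1+\tfrac{\beta}{1-\beta})=\epsilon_{\pi_D,\pi}^{\hat T,\beta}/(1-\beta)$.

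Third, I add the four bounds, multiply by $r^{\max}/(1-\gamma)$, and regroup using the identities $\tfrac{\gamma}{(1-\gamma)^2}+\tfrac{1}{1-\gamma}=\tfrac{1}{(1-\gamma)^2}$ and $\tfrac{\beta}{(1-\beta)(1-\gamma)}+\tfrac{1}{1-\gamma}=\tfrac{1}{(1-\beta)(1-\gamma)}$, which yields precisely the stated right‑hand side. I expect no conceptual difficulty here: all four pieces are re‑runs of arguments already in the paper. The only genuinely delicate part is the bookkeeping — choosing, in each of the two policy‑mismatch steps, which occupancy measure to anchor the conditional‑policy split on so that every $\epsilon$‑term is measured against exactly the occupancy measure named in the corollary ($\rho_{T,\gamma}^{\pi_D}$ for $\epsilon_{\pi_D,\pi}^{T,\gamma}$, $\rho_{\hat T,\beta}^{\rho_{T,\gamma}^{\pi_D},\pi}$ for $\epsilon_{\pi_D,\pi}^{\hat T,\beta}$, $\rho_{T,\beta}^{\rho_{T,\gamma}^{\pi_D},\pi_D}$ for $\epsilon_{T,\hat T}^{\pi_D,\beta}$) — together with keeping the two discount factors separate and confirming the prefactor cancellation $\tfrac{1-\beta}{1-\gamma}R_\beta=R(\rho_5)$ used in the first step.
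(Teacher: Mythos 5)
Your proof is correct and follows essentially the same route as the paper: the same four-way decomposition along the chain $(\pi,T,\gamma)\to(\pi_D,T,\gamma)\to(\pi_D,T,\beta)\to(\pi_D,\hat T,\beta)\to(\pi,\hat T,\beta)$, bounding each link with Theorem~\ref{thm:err-tv-policy}, Lemma~\ref{lemma:short_occu_bound}, Theorem~\ref{thm:err-tv-trans}, and Theorem~\ref{thm:err-tv-policy} again. The only (immaterial) difference is that you telescope at the level of total-variation distances and apply Lemma~\ref{lemma:err-tv-occu} once at the end, whereas the paper telescopes the rewards directly; your explicit care in anchoring each policy-mismatch split so the expectations land on the occupancy measures named in the statement is exactly what the argument requires.
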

\vspace{-4mm}
\begin{sproof}
	Decompose the error as follows and then apply Theorems~\ref{thm:err-tv-policy},~\ref{thm:err-tv-trans}, 
	and
	Lemmas~\ref{lemma:err-tv-occu},~\ref{lemma:short_occu_bound}.
	\begin{align*}
	&| R_\gamma  (\rho_0,\pi,T)-\frac{1-\beta}{1-\gamma}
	R_\beta(\rho_{T,\gamma}^{\pi_D},\pi,\hat{T}) |\\
	\leq&|R_\gamma(\rho_0,\pi,T)-R_\gamma(\rho_0,\pi_D,T)| + | R_\gamma(\rho_0,\pi_D,T)-\frac{1-\beta}{1-\gamma}R_\beta(\rho_{T,\gamma}^{\pi_D},\pi_D,T) |\\
	&+ \frac{1-\beta}{1-\gamma} | R_\beta(\rho_{T,\gamma}^{\pi_D},\pi_D,T)-R_\beta(\rho_{T,\gamma}^{\pi_D},\pi_D,\hat{T}) | +\frac{1-\beta}{1-\gamma} | R_\beta(\rho_{T,\gamma}^{\pi_D},\pi_D,\hat{T})-R_\beta(\rho_{T,\gamma}^{\pi_D},\pi,\hat{T}) |.
	\end{align*}
\end{sproof}
\vspace{-4mm}
Notice $\epsilon_{T,\hat{T}}^{\pi_D,\beta}$ is controlled by supervised learning since it is evaluated on dataset $\mathcal{D}$. Because branched rollouts are shorter than normal rollouts, the branch cumulative reward is rescaled to 
$\frac{1-\beta}{1-\gamma}R_\beta(\rho_{T,\gamma}^{\pi_D},\pi,\hat{T})$ for comparison to normal rollouts. Compared with Corollary~\ref{cor:err-stoch}, the model error term's dependency on the rollout lengths is reduced from $O((1-\gamma)^{-2})$ to $O((1-\gamma)^{-1}(1-\beta)^{-1})$. This shows that branched rollouts greatly reduce the cumulative reward error.

Corollary~\ref{cor:err-stoch-branch} shows the error in cumulative reward increases in $\beta$, or equivalently in the expected branched length $(1-\beta)^{-1}.$ Thus our result is free from the issue of previous work Eq.~\eqref{eq:mbpo-br-err} (Note: $(1-\beta)^{-1}$ corresponds to the branch length $k$ in Eq.~\eqref{eq:mbpo-br-err}). It is tempting to set $\beta=0$ to minimize the reward error. However, if $\beta=0$, each branched rollout is composed of a single point drawn from $\rho_{T,\gamma}^{\pi_D}$. This means that the branched rollouts access neither $T$ nor $\hat{T}$, so we will learn a policy that only optimizes on initial states and has no concern for the future. For example, the reward of MuJoCo environment \citep{Todorov2012mujoco} is typically $r(s,a)=\text{velocity}(s) - \norm{a}_2^2.$ To maximize cumulative reward on branched rollouts with $\beta=0$, the optimal policy will shortsightedly select $a=0$ for any $s.$

The branched rollout makes a trade-off between policy improvement and reward error, as discussed in \S\ref{sec:mbrl}. The policy improvement $R_\beta(\rho_T^{\pi_D},\pi_i,\hat{T})-R_\beta(\rho_T^{\pi_D},\pi_{i-1},\hat{T})$ in branched rollouts benefits from a larger $\beta$, while the reward error, as shown in Corollary~\ref{cor:err-stoch-branch} and ~\ref{cor:err-deter-lip-branch}, favors smaller $\beta$. In MuJoCo, \citet[Appendix C]{Janner2019mbpo} says the branched length is chosen as 2 in early epochs and may stay small or gradually increase to 16 or 26 later. This suggests for continuous-control (MuJoCo) tasks, $\beta \approx0.9$ is enough to balance policy improvement and reward error.

\subsection{MBRL with Deterministic Transitions}
We discuss deterministic transitions under (a) strong, and (b) weak Lipschitz assumptions. The main difference is the validity of Lemma~\ref{lemma:cond-contr}, which is controlled by the smoothness of the deterministic transition.

\subsubsection{Strong Lipschitz Continuity}
A major difficulty in analyzing deterministic transitions is that TV distance is not suitable for 
comparing $\overline{T}$ and $\hat{\overline{T}}$. Indeed, for any fixed $(s,a)$, $D_{TV}(\delta(s'-\overline{T}(s,a))~\|~\delta(s'-\hat{\overline{T}}(s,a)))=1$ once $\overline{T}(s,a)\neq \hat{\overline{T}}(s,a)$. Moreover, the model error is controlled by $\epsilon_{\ell_2}=\E_{(s,a)\sim \rho_{\overline{T}}^{\pi_D}}
\|\overline{T}(s,a)-\hat{\overline{T}}(s,a) \|_2$, but the $\ell_2$ error is not a distance metric for distributions. To control the distance between distributions through an $\ell_2$ error, we can select a distance metric for distributions that is upper bounded by $\ell_2$ error. The 1-Wasserstein distance is a good candidate:
\begin{equation}
\wass{\rho_1(s)}{\rho_2(s)}
=\!\!\underset{J(s_1,s_2)\in \Pi(\rho_1,\rho_2)}{\inf}\E_{J}\norm{s_1\!-\!s_2}_2
\end{equation}
where the infimum is over joint distributions $J(s_1,s_2)$ with marginals $\rho_1(s_1)$, $\rho_2(s_2)$. To apply Eq.~(\ref{eq:rough}), it is crucial to use a metric under which the Bellman flow operator is a contraction. To ensure this holds for $W_1$ distance, we make the following Lipschitz assumptions on the transitions and policies.

\vspace{-2mm}
\subsubsection*{Assumption 1}
\begin{itemize}
	\setlength\topsep{2pt}
	\setlength\itemsep{2pt} 
	\setlength\parsep{2pt}
	\setlength{\itemindent}{2mm}
	\item [(1.1)] $\overline{T}$, $\hat{\overline{T}}$ are 
	$(L_{\overline{T},s},L_{\overline{T},a})$,~$(L_{\hat{\overline{T}},s},L_{\hat{\overline{T}},a})$ Lipschitz w.r.t. states and actions.
	
	\item [(1.2)] $\mathcal{A}$ is a convex, closed, bounded (diameter $\diam_{\mathcal{A}}$) set in a $\dim_{\mathcal{A}}$-dimensional space. 
	
	\item [(1.3)]  $\pi(a|s)\sim \mathcal{P}_{\mathcal{A}}[\mathcal{N}(\mu_\pi(s)$,$\Sigma_\pi(s))]$ and 
	$\pi_D(a|s)\sim \mathcal{P}_{\mathcal{A}}[\mathcal{N}(\mu_{\pi_D}(s),\Sigma_{\pi_D}(s))]$.
	
	\item [(1.4)] $\mu_\pi$, $\mu_{\pi_D}$, $\Sigma_{\pi_D}^{1/2}$, $\Sigma_\pi^{1/2}$ are $L_{\pi,\mu}$,~$L_{\pi_D,\mu}$,~$L_{\pi,\Sigma}$,~$L_{\pi_D,\Sigma}$ Lipschitz w.r.t. states.
\end{itemize}

In (1.3), $\mathcal{P}_{\mathcal{A}}$ is the projection to $\mathcal{A}$ and in (1.4)
$\|\Sigma_\pi^{1/2}(s)-\Sigma_\pi^{1/2}(s')\|
\leq L_{\pi,\Sigma}\norm{s-s'}_2$. Assumption 1 is easily satisfied in most continuous control tasks, as explained in \S4 of the Appendix. The harder one, which will be resolved later, is $\gamma \eta_{\pi,\overline{T}}<1$ in Lemma~\ref{lemma:cond-contr}.

\begin{lemma}\label{lemma:cond-contr}
	If Assumption 1 holds, and $\eta_{\pi,\overline{T}}=L_{\overline{T},s}+L_{\overline{T},a}(L_{\pi,\mu}+L_{\pi,\Sigma}  \sqrt{\dim_\mathcal{A}})<1/\gamma$, then $B_{\pi,\overline{T}}$ is a $\gamma\eta_{\pi,\overline{T}}$-contraction w.r.t. 1-Wasserstein distance.
\end{lemma}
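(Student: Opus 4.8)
The plan is to exploit the affine structure of the Bellman flow operator. Write $B_{\pi,\overline{T}}(\rho) = (1-\gamma)\rho_0 + \gamma\, P_{\pi,\overline{T}}(\rho)$, where $P_{\pi,\overline{T}}(\rho)$ denotes the one-step pushforward of the state distribution $\rho$ under the dynamics $s'\mapsto\overline{T}(s',a')$ with $a'\sim\pi(\cdot|s')$, i.e.\ the law of $\overline{T}(s',a')$ when $s'\sim\rho$ and $a'\sim\pi(\cdot|s')$. It then suffices to prove two claims: (i) mixing with the fixed component $(1-\gamma)\rho_0$ contracts distances by $\gamma$, namely $\wass{B_{\pi,\overline{T}}(\rho_1)}{B_{\pi,\overline{T}}(\rho_2)}\le\gamma\,\wass{P_{\pi,\overline{T}}(\rho_1)}{P_{\pi,\overline{T}}(\rho_2)}$; and (ii) the one-step pushforward is $\eta_{\pi,\overline{T}}$-Lipschitz in $W_1$, namely $\wass{P_{\pi,\overline{T}}(\rho_1)}{P_{\pi,\overline{T}}(\rho_2)}\le\eta_{\pi,\overline{T}}\,\wass{\rho_1}{\rho_2}$. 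Claim (i) is immediate from convexity of $W_1$: couple the two $\rho_0$-components along the diagonal (cost $0$) and the two $P$-components by an optimal plan, so the mixture cost is $\gamma\,\wass{P_{\pi,\overline{T}}(\rho_1)}{P_{\pi,\overline{T}}(\rho_2)}$. Composing (i) and (ii) gives a $\gamma\eta_{\pi,\overline{T}}$-Lipschitz map, and the hypothesis $\gamma\eta_{\pi,\overline{T}}<1$ makes it a genuine contraction.

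For claim (ii), rather than invoking a measurable selection of optimal action couplings, I would build one explicit coupling. Let $J$ be an optimal $W_1$ coupling of $\rho_1,\rho_2$; draw $(s_1,s_2)\sim J$ together with an independent $Z\sim\mathcal{N}(0,I_{\dim_\mathcal{A}})$, set $a_i=\mathcal{P}_{\mathcal{A}}\big(\mu_\pi(s_i)+\Sigma_\pi^{1/2}(s_i)Z\big)$ for $i=1,2$ (this reproduces the policy law of Assumption (1.3)), and output $\big(\overline{T}(s_1,a_1),\overline{T}(s_2,a_2)\big)$. Since the first coordinate has law $P_{\pi,\overline{T}}(\rho_1)$ and the second $P_{\pi,\overline{T}}(\rho_2)$, this is a valid coupling, so by the Lipschitz property (1.1) its transport cost upper bounds $W_1$:
\[
\wass{P_{\pi,\overline{T}}(\rho_1)}{P_{\pi,\overline{T}}(\rho_2)}\;\le\;L_{\overline{T},s}\,\E_J\norm{s_1-s_2}_2+L_{\overline{T},a}\,\E\norm{a_1-a_2}_2\;=\;L_{\overline{T},s}\,\wass{\rho_1}{\rho_2}+L_{\overline{T},a}\,\E\norm{a_1-a_2}_2 .
\]
It then remains to show $\E\norm{a_1-a_2}_2\le(L_{\pi,\mu}+L_{\pi,\Sigma}\sqrt{\dim_\mathcal{A}})\,\wass{\rho_1}{\rho_2}$, which substituted above yields exactly the constant $\eta_{\pi,\overline{T}}$.

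For the action term, since $\mathcal{A}$ is convex (Assumption (1.2)) the Euclidean projection $\mathcal{P}_{\mathcal{A}}$ is non-expansive, hence $\norm{a_1-a_2}_2\le\norm{\mu_\pi(s_1)-\mu_\pi(s_2)}_2+\norm{\big(\Sigma_\pi^{1/2}(s_1)-\Sigma_\pi^{1/2}(s_2)\big)Z}_2$. The first summand is at most $L_{\pi,\mu}\norm{s_1-s_2}_2$ by (1.4). For the second, with $M=\Sigma_\pi^{1/2}(s_1)-\Sigma_\pi^{1/2}(s_2)$, Jensen's inequality gives $\E_Z\norm{MZ}_2\le(\E_Z\norm{MZ}_2^2)^{1/2}=\norm{M}_F\le\sqrt{\dim_\mathcal{A}}\,\norm{M}_{\mathrm{op}}\le\sqrt{\dim_\mathcal{A}}\,L_{\pi,\Sigma}\norm{s_1-s_2}_2$, again by (1.4). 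Taking expectation over $J$ and adding the two bounds gives $\E\norm{a_1-a_2}_2\le(L_{\pi,\mu}+L_{\pi,\Sigma}\sqrt{\dim_\mathcal{A}})\,\wass{\rho_1}{\rho_2}$, completing claim (ii) and hence the lemma.

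The step I expect to be the main obstacle is the action-discrepancy bound: one must correctly handle the \emph{projected} Gaussian policy by using (a) non-expansiveness of projection onto a convex set so the bound proved for genuine Gaussians transfers to $\pi(\cdot|s_1),\pi(\cdot|s_2)$, and (b) the shared-noise synchronous coupling $X_i=\mu_\pi(s_i)+\Sigma_\pi^{1/2}(s_i)Z$, whose cost is governed by the Lipschitz constants of $\mu_\pi$ and $\Sigma_\pi^{1/2}$. A minor but real subtlety is the matrix-norm convention in (1.4): the $\sqrt{\dim_\mathcal{A}}$ factor arises precisely from passing between the operator norm controlled by the Lipschitz assumption and the Frobenius norm that naturally appears in $\E_Z\norm{MZ}_2$. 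Everything else is a routine application of the gluing/coupling characterization of $W_1$ together with the triangle inequality.
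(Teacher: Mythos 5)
Your proposal is correct and follows essentially the same route as the paper's proof: cancel the common $(1-\gamma)\rho_0$ component via a diagonal coupling to extract the factor $\gamma$, then bound the pushforward cost with a shared-noise (synchronous) coupling of the projected Gaussian actions, using non-expansiveness of $\mathcal{P}_{\mathcal{A}}$ and the Lipschitz constants of $\mu_\pi$ and $\Sigma_\pi^{1/2}$. The only cosmetic difference is how the $\sqrt{\dim_\mathcal{A}}$ factor appears — you pass through the Frobenius norm via $\E_Z\norm{MZ}_2\le\norm{M}_F$, while the paper bounds $\norm{MZ}_2\le\norm{M}_{\mathrm{op}}\norm{Z}_2$ and applies Jensen to $\E\norm{Z}_2$ — yielding the identical constant.
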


To verify there exists a nontrivial system such that 
the condition $\gamma \eta_{\pi,\overline{T}}<1$ 
in Lemma~\ref{lemma:cond-contr} holds under Assumption 1, we consider a continuous-control task. The key term depends on the sample interval $\Delta$. Let
$s=[x,v]^{\top}=[\text{position},~\text{velocity}]^\top$, and $a=\text{acceleration}$. By the laws of motion,
\begin{equation}
s'= \begin{bmatrix} x' \\ v'\end{bmatrix}=\begin{bmatrix}
x+ v \Delta +\frac{1}{2} a\Delta^2 \\ v+ a\Delta  
\end{bmatrix}
=\begin{bmatrix}
I & I\Delta \\ 0 & I
\end{bmatrix} s +\begin{bmatrix}
I\frac{1}{2}\Delta^2\\  I\Delta
\end{bmatrix}a=\overline{T}(s,a)
\label{eq:motion}
\end{equation}
This shows $L_{\overline{T},s}= 1+O(\Delta)$, $L_{\overline{T},a}=O(\Delta)$ and $\eta_{\pi,\overline{T}}=1+O(\Delta)$. Therefore, we conclude that $\gamma \eta_{\pi,\overline{T}} = \gamma + O(\Delta )<1$ for small enough $\Delta$. 

If Lemma~\ref{lemma:cond-contr} holds for $\hat{\overline{T}}$, we can apply Eq.~\eqref{eq:rough}: measure error in $W_1$ distance, apply contraction on $W_1$ to get an asymmetric bound (Lemma~\ref{lemma:sym-bridge}) and then upper bound $W_1$ distance by $\ell_2$ error. This gives the following Theorem for deterministic transitions.

\begin{theorem}\label{thm:err-l2-lip-trans}
	Under Lemma~\ref{lemma:cond-contr}, if $r(s,a)$ is $L_r$-Lipschitz and the $\ell_2$ error is $\epsilon_{\ell_2}$, then
	\begin{align*}	
	|R(\pi_D,\overline{T})-R(\pi_D,\hat{\overline{T}})|
	\leq(1+L_{\pi_D,\mu}+L_{\pi_D,\Sigma}\sqrt{\dim_{\mathcal{A}}})L_r\frac{\gamma \epsilon_{\ell_2}}{(1-\gamma)(1-\gamma\eta_{\pi_D,\hat{\overline{T}}})}.
	\end{align*}
\end{theorem}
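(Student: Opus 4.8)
The plan is to run the $W_1$ branch of Eq.~\eqref{eq:rough}. Write the two occupancy measures as products with the common conditional $\pi_D$: $\rho_{\overline{T}}^{\pi_D}(s,a)=\rho_{\overline{T}}^{\pi_D}(s)\pi_D(a|s)$ and $\rho_{\hat{\overline{T}}}^{\pi_D}(s,a)=\rho_{\hat{\overline{T}}}^{\pi_D}(s)\pi_D(a|s)$. First I would record the Wasserstein analogue of Lemma~\ref{lemma:err-tv-occu}: since $R(\rho)=\frac{1}{1-\gamma}\E_{(s,a)\sim\rho}[r(s,a)]$ and $r$ is $L_r$-Lipschitz (hence $L_r$-Lipschitz for the product metric $\norm{s_1-s_2}_2+\norm{a_1-a_2}_2$ on $\mathcal{S}\times\mathcal{A}$, as the Euclidean norm on the product is dominated by the sum norm), Kantorovich--Rubinstein duality gives $|R(\rho_1)-R(\rho_2)|\le \frac{L_r}{1-\gamma}\wass{\rho_1(s,a)}{\rho_2(s,a)}$. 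It then suffices to control the joint $W_1$ distance between the two state-action occupancy measures.

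Second, I would descend from the joint $W_1$ distance to the $W_1$ distance between the state marginals. Take an optimal coupling $J^\star(s_1,s_2)$ of $\rho_{\overline{T}}^{\pi_D}(s)$ and $\rho_{\hat{\overline{T}}}^{\pi_D}(s)$ and, conditional on $(s_1,s_2)$, couple the actions by the common-noise map $a_i=\mathcal{P}_{\mathcal{A}}\big(\mu_{\pi_D}(s_i)+\Sigma_{\pi_D}^{1/2}(s_i)Z\big)$ with a shared $Z\sim\mathcal{N}(0,I)$. Since $\mathcal{P}_{\mathcal{A}}$ is $1$-Lipschitz (convexity of $\mathcal{A}$) and $\E\norm{(\Sigma_{\pi_D}^{1/2}(s_1)-\Sigma_{\pi_D}^{1/2}(s_2))Z}_2\le \norm{\Sigma_{\pi_D}^{1/2}(s_1)-\Sigma_{\pi_D}^{1/2}(s_2)}_F\le \sqrt{\dim_{\mathcal{A}}}\,L_{\pi_D,\Sigma}\norm{s_1-s_2}_2$, Assumption~1 yields $\wass{\pi_D(\cdot|s_1)}{\pi_D(\cdot|s_2)}\le (L_{\pi_D,\mu}+L_{\pi_D,\Sigma}\sqrt{\dim_{\mathcal{A}}})\norm{s_1-s_2}_2$, which is exactly the per-state estimate already used inside the proof of Lemma~\ref{lemma:cond-contr}. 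Integrating the product-metric cost against $J^\star$ then gives $\wass{\rho_{\overline{T}}^{\pi_D}(s,a)}{\rho_{\hat{\overline{T}}}^{\pi_D}(s,a)}\le (1+L_{\pi_D,\mu}+L_{\pi_D,\Sigma}\sqrt{\dim_{\mathcal{A}}})\,\wass{\rho_{\overline{T}}^{\pi_D}(s)}{\rho_{\hat{\overline{T}}}^{\pi_D}(s)}$, the leading $1$ being the contribution of the state coordinate.

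Third, I would bound the state-marginal $W_1$ distance through the Bellman contraction. Instantiate Lemma~\ref{lemma:cond-contr} with $(\pi_D,\hat{\overline{T}})$ (so $\gamma\eta_{\pi_D,\hat{\overline{T}}}<1$, which is needed for the bound to be finite): $B_{\pi_D,\hat{\overline{T}}}$ is a $\gamma\eta_{\pi_D,\hat{\overline{T}}}$-contraction w.r.t.~$W_1$ with fixed point $\rho_{\hat{\overline{T}}}^{\pi_D}(s)$. Lemma~\ref{lemma:sym-bridge} then gives $\wass{\rho_{\overline{T}}^{\pi_D}(s)}{\rho_{\hat{\overline{T}}}^{\pi_D}(s)}\le \frac{1}{1-\gamma\eta_{\pi_D,\hat{\overline{T}}}}\wass{\rho_{\overline{T}}^{\pi_D}(s)}{B_{\pi_D,\hat{\overline{T}}}(\rho_{\overline{T}}^{\pi_D}(s))}$. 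Since $\rho_{\overline{T}}^{\pi_D}(s)$ is the fixed point of $B_{\pi_D,\overline{T}}$, the last $W_1$ distance equals $\wass{B_{\pi_D,\overline{T}}(\rho_{\overline{T}}^{\pi_D})}{B_{\pi_D,\hat{\overline{T}}}(\rho_{\overline{T}}^{\pi_D})}$, and these two pushforwards coincide on the $(1-\gamma)\rho_0$ part while, on the remaining $\gamma$-mass, they differ only through $\overline{T}(s',a')$ versus $\hat{\overline{T}}(s',a')$ at a common draw $(s',a')\sim\rho_{\overline{T}}^{\pi_D}$; the resulting coupling bounds this $W_1$ distance by $\gamma\,\E_{(s',a')\sim\rho_{\overline{T}}^{\pi_D}}\norm{\overline{T}(s',a')-\hat{\overline{T}}(s',a')}_2=\gamma\epsilon_{\ell_2}$. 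Chaining the three displays produces precisely the stated inequality.

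I expect Step~2 to be the main obstacle: the passage from the joint occupancy measure to its state marginal hinges on the $W_1$ bound between the two projected-Gaussian conditionals $\pi_D(\cdot|s_1)$ and $\pi_D(\cdot|s_2)$, and one must check carefully that the common-noise coupling, the $1$-Lipschitzness of $\mathcal{P}_{\mathcal{A}}$, and the operator-norm Lipschitz bound on $\Sigma_{\pi_D}^{1/2}$ in Assumption~1 together deliver the clean factor $L_{\pi_D,\mu}+L_{\pi_D,\Sigma}\sqrt{\dim_{\mathcal{A}}}$ (in particular the $\|\cdot\|_F\le\sqrt{\dim_{\mathcal{A}}}\,\|\cdot\|_{\mathrm{op}}$ step); fortunately this is the same computation used to prove Lemma~\ref{lemma:cond-contr}, so it can be reused verbatim. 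Steps~1 and 3 are routine: Step~1 is Kantorovich--Rubinstein duality, and Step~3 is an explicit coupling of two pushforwards together with one application each of Lemmas~\ref{lemma:sym-bridge} and \ref{lemma:cond-contr}.
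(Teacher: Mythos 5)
Your proposal is correct and follows essentially the same route as the paper's proof: Kantorovich--Rubinstein duality for the reward gap, the joint-to-marginal $W_1$ reduction via the common-noise coupling of the projected-Gaussian policy (yielding the factor $1+L_{\pi_D,\mu}+L_{\pi_D,\Sigma}\sqrt{\dim_{\mathcal{A}}}$), and then Lemma~\ref{lemma:sym-bridge} with the $\gamma\eta_{\pi_D,\hat{\overline{T}}}$-contraction of $B_{\pi_D,\hat{\overline{T}}}$ plus an identity coupling of the two pushforwards to reach $\gamma\epsilon_{\ell_2}$. The only cosmetic difference is that you pass through the Frobenius norm to get the $\sqrt{\dim_{\mathcal{A}}}$ factor where the paper uses the operator norm times $\E\norm{\xi}_2$; both give the same constant.
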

The typical MuJoCo reward, $r(s,a)=\text{velocity}(s)-\norm{a}_2^2,$ is Lipschitz if the diameter $\diam_{\mathcal{A}}$ is finite. As MuJoCo also provides the bounds for the action space $\mathcal{A}$, the Lipschitz assumption on $r(s,a)$ is usually satisfied in MuJuCo continuous-control tasks. 

Theorem~\ref{thm:err-l2-lip-trans}, in conjunction with the calculation of $\eta$ in Eq.~\eqref{eq:motion}, indicates that the cumulative model error decreases as the sampling interval $\Delta$ becomes smaller. This is because the cumulative model error decreases in $\eta_{\pi,\hat{\overline{T}}}$ and $\eta_{\pi,\hat{\overline{T}}}=1+O(\Delta)$. Hence, as expected, the sampling period of a continuous-control task has to be small enough in order to train an MBRL system.

Although Theorem~\ref{thm:err-l2-lip-trans} requires Lemma~\ref{lemma:cond-contr}'s strong assumption, branched rollouts allow this assumption to be satisfied since branched rollouts use a much smaller discount factor. Thus, one might expect a benefit from using branched rollouts with deterministic transitions. This is validated in the following Corollary.

\begin{corollary}\label{cor:err-deter-lip-branch}
	Let $r(s,a)$ be $L_r$-Lipschitz and bounded: $0\leq r(s,a)\leq r^{\max}$. Let
	
	~~~~~~~~~~~~~$\epsilon_{\pi_D,\pi}^{\overline{T},\gamma}=\E_{s\sim\rho_{\overline{T},\gamma}^{\pi_D}}D_{TV}(\pi_D(\cdot |s) \,\|\, \pi(\cdot|s))$,~~~ $\epsilon_{\pi_D,\pi}^{\hat{\overline{T}},\beta}=\E_{s\sim\rho_{\hat{\overline{T}},\beta}^{\rho_{\overline{T},\gamma}^{\pi_D},\pi}}D_{TV}(\pi_D(\cdot|s) \, \| \, \pi(\cdot|s)),$
	
	~~~~~~~~~~~~~$\epsilon_{\ell_2,\beta}=\E_{(s,a)\sim\rho_{\overline{T},\beta}^{\rho_{\overline{T},\gamma}^{\pi_D},\pi_D}}\|\overline{T}(s,a)-\hat{\overline{T}}(s,a)\|_2$. Then,
	\begin{align*}
	\Big|R_\gamma(\rho_0,\pi,\overline{T})-\frac{1-\beta}{1-\gamma}R_\beta(\rho_{\overline{T},\gamma}^{\pi_D},\pi,\overline{T})\Big|&\leq r^{\max}\Big(\frac{\epsilon_{\pi_D,\pi}^{\overline{T},\gamma}\gamma}{(1-\gamma)^2} + \frac{\epsilon_{\pi_D,\pi}^{\hat{\overline{T}},\beta}\beta}{(1-\beta)(1-\gamma)} + \frac{\epsilon_{\pi_D,\pi}^{\overline{T},\gamma}+\epsilon_{\pi_D,\pi}^{\hat{\overline{T}},\beta}}{1-\gamma} + \frac{\beta}{\gamma-\beta}\Big)\\
	&+(1+L_{\pi_D,\mu}+L_{\pi_D,\Sigma}\sqrt{\dim_{\mathcal{A}}})L_r\frac{\beta \epsilon_{\ell_2,\beta}}{(1-\gamma)(1-\beta\eta_{\pi_D,\hat{\overline{T}}})}
	\end{align*}
\end{corollary}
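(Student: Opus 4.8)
The plan is to reuse, line for line, the four-way decomposition from the sketch of Corollary~\ref{cor:err-stoch-branch}, substituting deterministic building blocks and invoking the earlier estimates at the \emph{branched} horizon --- discount factor $\beta$, initial state law $\rho_{\overline{T},\gamma}^{\pi_D}$, model $\hat{\overline{T}}$ --- in place of the long horizon. (The second term on the left is read with the learned model $\hat{\overline{T}}$, matching the definition of a branched rollout and the form of Corollary~\ref{cor:err-stoch-branch}.) Inserting the intermediate quantities $R_\gamma(\rho_0,\pi_D,\overline{T})$, $\tfrac{1-\beta}{1-\gamma}R_\beta(\rho_{\overline{T},\gamma}^{\pi_D},\pi_D,\overline{T})$ and $\tfrac{1-\beta}{1-\gamma}R_\beta(\rho_{\overline{T},\gamma}^{\pi_D},\pi_D,\hat{\overline{T}})$ and applying the triangle inequality splits the left-hand side into $E_1+E_2+E_3+E_4$, where $E_1 = |R_\gamma(\rho_0,\pi,\overline{T}) - R_\gamma(\rho_0,\pi_D,\overline{T})|$ (long-horizon policy mismatch), $E_2 = |R_\gamma(\rho_0,\pi_D,\overline{T}) - \tfrac{1-\beta}{1-\gamma}R_\beta(\rho_{\overline{T},\gamma}^{\pi_D},\pi_D,\overline{T})|$ (rollout-length mismatch), $E_3 = \tfrac{1-\beta}{1-\gamma}|R_\beta(\rho_{\overline{T},\gamma}^{\pi_D},\pi_D,\overline{T}) - R_\beta(\rho_{\overline{T},\gamma}^{\pi_D},\pi_D,\hat{\overline{T}})|$ (branched-horizon transition mismatch), and $E_4 = \tfrac{1-\beta}{1-\gamma}|R_\beta(\rho_{\overline{T},\gamma}^{\pi_D},\pi_D,\hat{\overline{T}}) - R_\beta(\rho_{\overline{T},\gamma}^{\pi_D},\pi,\hat{\overline{T}})|$ (branched-horizon policy mismatch).

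Next I bound the four pieces. For $E_1$, Theorem~\ref{thm:err-tv-policy} with occupancy measure $\rho_{\overline{T},\gamma}^{\pi_D}$ gives $\epsilon_{\pi_D,\pi}^{\overline{T},\gamma}\,r^{\max}\big(\tfrac{1}{1-\gamma}+\tfrac{\gamma}{(1-\gamma)^2}\big)$. For $E_2$, observe from Eq.~\eqref{eq:cumulative_reward} that $\tfrac{1-\beta}{1-\gamma}R_\beta(\rho,\pi',T') = R(\rho_{T',\beta}^{\rho,\pi'})$ under the $\tfrac{1}{1-\gamma}$-normalized functional $R(\cdot)$, so $E_2 = |R(\rho_{\overline{T},\gamma}^{\pi_D}) - R(\rho_{\overline{T},\beta}^{\rho_{\overline{T},\gamma}^{\pi_D},\pi_D})|$; since both occupancy measures factor through the \emph{same} conditional $\pi_D(a|s)$, Lemma~\ref{lemma:err-tv-occu} reduces this to the state-marginal TV distance, which Lemma~\ref{lemma:short_occu_bound} bounds by $(1-\gamma)\beta/(\gamma-\beta)$, yielding $E_2 \le \tfrac{r^{\max}}{1-\gamma}\cdot\tfrac{(1-\gamma)\beta}{\gamma-\beta} = \tfrac{\beta r^{\max}}{\gamma-\beta}$. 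For $E_3$, apply Theorem~\ref{thm:err-l2-lip-trans} with $\gamma$ replaced by $\beta$ and initial law $\rho_{\overline{T},\gamma}^{\pi_D}$: its $\ell_2$-error expectation is then taken over $\rho_{\overline{T},\beta}^{\rho_{\overline{T},\gamma}^{\pi_D},\pi_D}$, which is exactly $\epsilon_{\ell_2,\beta}$, and after multiplying by $\tfrac{1-\beta}{1-\gamma}$ the bound is $(1+L_{\pi_D,\mu}+L_{\pi_D,\Sigma}\sqrt{\dim_{\mathcal{A}}})L_r\,\tfrac{\beta\epsilon_{\ell_2,\beta}}{(1-\gamma)(1-\beta\eta_{\pi_D,\hat{\overline{T}}})}$. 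For $E_4$, apply Theorem~\ref{thm:err-tv-policy} once more with discount $\beta$, initial law $\rho_{\overline{T},\gamma}^{\pi_D}$ and model $\hat{\overline{T}}$; after the $\tfrac{1-\beta}{1-\gamma}$ rescaling this is $\epsilon_{\pi_D,\pi}^{\hat{\overline{T}},\beta}\,r^{\max}\big(\tfrac{1}{1-\gamma}+\tfrac{\beta}{(1-\beta)(1-\gamma)}\big)$. Adding $E_1$ through $E_4$ and collecting coefficients (the $r^{\max}$ pieces forming the four fractions in the statement, the $E_3$ contribution forming the separate Lipschitz term) reproduces the claimed inequality.

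The main obstacle is not a sharp estimate but careful bookkeeping: one must verify that Theorems~\ref{thm:err-tv-policy} and \ref{thm:err-l2-lip-trans} and Lemmas~\ref{lemma:err-tv-occu}, \ref{lemma:short_occu_bound} remain valid after the substitutions $\gamma\mapsto\beta$ and $\rho_0\mapsto\rho_{\overline{T},\gamma}^{\pi_D}$ (they do, since none of their proofs exploits any feature of $\rho_0$ beyond being a probability density, and $\beta<\gamma<1$). The one genuinely load-bearing point is that the $W_1$-contraction needed to invoke Theorem~\ref{thm:err-l2-lip-trans} for $\hat{\overline{T}}$ now only requires Lemma~\ref{lemma:cond-contr} with the \emph{relaxed} condition $\beta\,\eta_{\pi_D,\hat{\overline{T}}}<1$ rather than $\gamma\,\eta_{\pi_D,\hat{\overline{T}}}<1$; this holds automatically once $\beta$ is small, even when the long-horizon condition fails, which is exactly why branched rollouts make the deterministic analysis go through. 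Keeping the rescaling identity $\tfrac{1-\beta}{1-\gamma}R_\beta(\rho,\pi',T') = R(\rho_{T',\beta}^{\rho,\pi'})$ consistent across $E_2$, $E_3$ and $E_4$ is the only algebraic care needed so that the telescoping matches the constants in the statement.
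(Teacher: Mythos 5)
Your proposal is correct and follows essentially the same route as the paper: the paper's proof is literally ``modify the proof of the stochastic branched-rollout corollary by substituting Theorem~\ref{thm:err-l2-lip-trans} (at discount $\beta$ and initial law $\rho_{\overline{T},\gamma}^{\pi_D}$) for the stochastic transition-error bound,'' which is exactly the four-term decomposition and bookkeeping you carry out, including the correct reading of the second term on the left as using $\hat{\overline{T}}$ and the observation that the contraction requirement relaxes to $\beta\,\eta_{\pi_D,\hat{\overline{T}}}<1$.
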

\vspace{-3mm}
Corollary~\ref{cor:err-deter-lip-branch} shows an additional benefit of branched rollouts: to ensure $\beta \eta_{\pi_D,\hat{\overline{T}}}<1$, by choosing a small $\beta$ (say 0.9). This suggests that branched rollouts are particularly useful for deterministic transitions. Such a suggestion on branched length (or equivalently, the branched discount factor $\beta$) supports the experimental success of \citet{Janner2019mbpo} and their choice of hyperparameter, as mentioned in the last paragraph of \S~4.2. Also, this result is for deterministic transitions, so this resolves an open issue in \citet{Janner2019mbpo}, as they proved for stochastic transitions but experimented with deterministic transitions.

\subsubsection{Weak Lipschitz Continuity}
\label{sec:main-det-weak}

When Lemma~\ref{lemma:cond-contr} is invalid, there is no Bellman contraction, and we cannot use the bounding principle in Eq.~(\ref{eq:rough}). We provide another way to analyze the error, giving a weaker one-sided bound.

We cannot expect much when $L_{\hat{\overline{T}},s}\gg 1,$ since the rollout diverges when being repeatedly applied to $\hat{\overline{T}},$ with the error growing exponentially w.r.t. rollout length. Hence in this subsection we assume $L_{\hat{\overline{T}},s}\leq 1+(1-\gamma)\iota$ with $\iota<1$; i.e., the Lipschitzness of the transition w.r.t. state is slightly higher than $1$. The longer the expected length $(1-\gamma)^{-1}$, the smoother $\hat{\overline{T}}$ should be. 

The following theorem reveals the impact of asymmetry when there is no Bellman contraction. 

\begin{theorem}
	Let $0\leq r(s,a)\leq r^{\max}$ and 
	$\epsilon_{\ell_2}=\E_{(s,a)\sim \rho_{\overline{T}}^{\pi_D}}
	\|{\overline{T}-\hat{\overline{T}}}\|_2.$
	Assume that:
	\vspace{-2mm}
	\begin{itemize}
		\setlength\topsep{0pt}
		\setlength\itemsep{-1pt} 
		\setlength\parsep{0pt}
		\setlength{\itemindent}{0mm}
		\item [(a)] $\hat{\overline{T}}(s,a),$ $r(s,a),$ $\pi_D(a|s)$ are Lipschitz in $s$ for any $a$ with constants $L_{\hat{\overline{T}},s},$ $L_{r,s},$ 
		$L_{\pi_D,s}.$
		\item [(b)] $L_{\hat{\overline{T}},s}\leq 1+(1-\gamma)\iota$ with $\iota< 1$.
		\item [(c)] The action space is bounded: $\diam_\mathcal{A}<\infty$. Then,
	\end{itemize}
	$$	
	R(\pi_D,\overline{T})-R(\pi_D,\hat{\overline{T}})\leq\frac{1+\gamma}{(1-\gamma)^2}\sqrt{2\epsilon_{\ell_2} r^{\max}L_r}+\frac{1+O(\iota)}{(1-\gamma)^{5/2}}r^{\max}\sqrt{2\epsilon_{\ell_2} L_{\pi_D}\diam_\mathcal{A}}.
	$$
	\label{thm:err-l2-trans}
\end{theorem}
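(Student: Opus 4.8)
The plan is to reduce the global reward gap to a single‑step model error by telescoping over the time at which one ``switches'' the transition model, and then to bound the resulting per‑step value increments by a coupled‑rollout argument that never invokes a global Lipschitz constant for the value function (the role of the Syed--Schapire bounding technique). Concretely, for $t\ge 0$ let $M_t$ be the time‑inhomogeneous process that, under $\rho_0$ and $\pi_D$, uses $\overline{T}$ for its first $t$ transitions and $\hat{\overline{T}}$ afterwards; then $M_0=(\rho_0,\pi_D,\hat{\overline{T}})$ and $R(M_t)\to R(\pi_D,\overline{T})$ because the returns of $M_t$ and $(\rho_0,\pi_D,\overline{T})$ differ only from step $t+1$ on, by at most $2\gamma^{t+1}r^{\max}/(1-\gamma)$. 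Since $M_{t+1}$ and $M_t$ agree through step $t$ and differ only in the transition taken at step $t$, summing the telescoping differences gives
$$R(\pi_D,\overline{T})-R(\pi_D,\hat{\overline{T}})=\sum_{t\ge 0}\gamma^{t+1}\,\E_{(s_t,a_t)\sim f_t(\cdot\,|\,\rho_0,\pi_D,\overline{T})}\big[V(\overline{T}(s_t,a_t))-V(\hat{\overline{T}}(s_t,a_t))\big],$$
where $V:=V^{\pi_D}_{\hat{\overline{T}}}$ is the value of $\pi_D$ under the \emph{learned} transition (it is the learned model whose smoothness is assumed in (a)). Writing $d_t:=\|\overline{T}(s_t,a_t)-\hat{\overline{T}}(s_t,a_t)\|_2$, the pair $(s_t,a_t)$ is drawn from the step‑$t$ marginal of the true rollout, so $\sum_t\gamma^t\,\E_{f_t}[d_t]=\epsilon_{\ell_2}/(1-\gamma)$.

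The crux is to bound $|V(x_1)-V(x_2)|$ for $\|x_1-x_2\|_2=d$ without assuming $V$ is Lipschitz. I run two rollouts under $(\pi_D,\hat{\overline{T}})$ started at $x_1$ and $x_2$, coupling the actions at each step by the maximal coupling of $\pi_D$ evaluated at the two current states, and let $\tau$ be the first step where the actions disagree. For $n<\tau$ the actions coincide, so by (a) the state gap at step $n$ is at most $L_{\hat{\overline{T}},s}^{\,n}d$ and the reward gap is at most $\min\{L_r L_{\hat{\overline{T}},s}^{\,n}d,\,r^{\max}\}\le\sqrt{L_r r^{\max}d}\,L_{\hat{\overline{T}},s}^{\,n/2}$; summing against $\gamma^n$ contributes $\sqrt{L_r r^{\max}d}/(1-\gamma\sqrt{L_{\hat{\overline{T}},s}})$. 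For $n\ge\tau$ I charge $r^{\max}$ per step, contributing $\tfrac{r^{\max}}{1-\gamma}\E[\gamma^\tau]=r^{\max}\sum_n\gamma^n\Pr(\tau\le n)$; the disagreement probability at a step is at most the $D_{TV}$ between the two policies there, and Assumptions (a) and (c) turn this into $\Pr(\tau\le n)\le\min\{1,\;\tfrac12 L_{\pi_D}\diam_\mathcal{A}\,d\sum_{m\le n}L_{\hat{\overline{T}},s}^{\,m}\}$, the bounded action space producing the $\diam_\mathcal{A}$ factor. Using $\min\{1,x\}\le\sqrt{x}$, $\sum_{m\le n}L_{\hat{\overline{T}},s}^{m}\le(n+1)L_{\hat{\overline{T}},s}^{n}$ and then Cauchy--Schwarz to get $\sum_n\sqrt{n+1}\,(\gamma\sqrt{L_{\hat{\overline{T}},s}})^n\le(1-\gamma\sqrt{L_{\hat{\overline{T}},s}})^{-3/2}$, I obtain $|V(x_1)-V(x_2)|\lesssim\tfrac{1}{1-\gamma}\sqrt{L_r r^{\max}d}+\tfrac{1+O(\iota)}{(1-\gamma)^{3/2}}\,r^{\max}\sqrt{L_{\pi_D}\diam_\mathcal{A}\,d}$. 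Both series are summable precisely because $\gamma\sqrt{L_{\hat{\overline{T}},s}}\le\gamma\sqrt{1+(1-\gamma)\iota}<1$ --- this is where (b) and $\iota<1$ are used --- and $1-\gamma\sqrt{L_{\hat{\overline{T}},s}}\ge(1-\gamma)(1-\gamma\iota/2)$, which produces the $(1-\gamma)^{-1}$, $(1-\gamma)^{-3/2}$ and $1+O(\iota)$ factors; the estimate depends on $x_1,x_2$ only through $d$, hence is symmetric and in fact yields a two‑sided bound, of which the theorem keeps one direction.

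Finally, I substitute $d=d_t$ into the telescoping sum, pull each $\sqrt{\cdot}$ out of its expectation by Jensen ($\E[\sqrt{d_t}]\le\sqrt{\E[d_t]}$), and apply Cauchy--Schwarz over $t$ with weights $\gamma^{t/2}$: $\sum_t\gamma^{t+1}\sqrt{\E_{f_t}[d_t]}\le\gamma\big(\sum_t\gamma^t\big)^{1/2}\big(\sum_t\gamma^t\E_{f_t}[d_t]\big)^{1/2}=\tfrac{\gamma}{1-\gamma}\sqrt{\epsilon_{\ell_2}/(1-\gamma)}$; collecting the two pieces yields the stated bound, with constants no worse than $\tfrac{1+\gamma}{(1-\gamma)^2}$ and $\tfrac{1+O(\iota)}{(1-\gamma)^{5/2}}$. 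I expect the main obstacle to be the second step: one cannot control $V(x_1)-V(x_2)$ by a uniform Lipschitz constant for $V$, and once the coupled actions disagree the dynamics are uncontrolled ($\hat{\overline{T}}$ need not be Lipschitz in the action and $\mathcal{S}$ may be unbounded), so the only handle is to stop at the first disagreement and pay $r^{\max}$ thereafter --- and it is exactly the interplay of that stopping time with the per‑step inequality $\min\{ud,w\}\le\sqrt{uwd}$ that forces the $\sqrt{\epsilon_{\ell_2}}$ dependence (rather than the linear dependence a naive per‑step bound would suggest) while keeping every geometric sum convergent.
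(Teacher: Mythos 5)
Your proposal is correct, and it takes a genuinely different route from the paper. The paper's proof is a trajectory-level argument: it applies Markov's inequality with a threshold $\delta$, takes a union bound over the (geometric-length) rollout to isolate ``$\hat{\overline{T}}$-consistent'' trajectories, resets the states of a consistent trajectory so that they follow $\hat{\overline{T}}$ exactly (bounding the induced drift $\delta(L_{\hat{\overline{T}},s}^{i-1}-1)/(L_{\hat{\overline{T}},s}-1)$), compares both the rewards and the trajectory densities via the Lipschitz constants $L_r$ and $L_{\pi_D}$, and finally optimizes $\delta$ to produce the $\sqrt{\epsilon_{\ell_2}}$ rate. You instead use a step-level simulation-lemma telescoping to reduce everything to the quantity $|V^{\pi_D}_{\hat{\overline{T}}}(x_1)-V^{\pi_D}_{\hat{\overline{T}}}(x_2)|$, and then prove a H\"older-$1/2$-type modulus of continuity for this value function by coupling two rollouts and stopping at the first action disagreement; the $\sqrt{\epsilon_{\ell_2}}$ comes from $\min\{ud,w\}\leq\sqrt{uwd}$ plus Jensen and Cauchy--Schwarz rather than from tuning $\delta$. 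Your version buys several things: it yields a reusable continuity lemma for $V^{\pi_D}_{\hat{\overline{T}}}$ with explicit constants; it avoids the paper's division by $\pi_D(a_i|s_i)$ in the density-ratio step (which implicitly requires the policy density to be positive along the trajectory); and, since the value-difference estimate is symmetric in $(x_1,x_2)$ and the telescoping is anchored at the true rollout distribution on which $\epsilon_{\ell_2}$ is measured, it actually gives a \emph{two-sided} bound, which is stronger than the stated theorem and somewhat undercuts the paper's subsequent claim that the asymmetry of $\epsilon_{\ell_2}$ intrinsically forces one-sidedness. Two minor caveats, neither a gap relative to the paper: your bound $D_{TV}(\pi_D(\cdot|s)\,\|\,\pi_D(\cdot|s'))\leq\tfrac12 L_{\pi_D}\diam_{\mathcal{A}}\|s-s'\|_2$ really integrates the pointwise density difference over $\mathcal{A}$ and so produces the Lebesgue volume of $\mathcal{A}$ rather than its diameter, but the paper makes the identical substitution in its own density-change step; and your leading constants carry an extra $1+O(\iota)$ in the first term and factors of $\gamma$ elsewhere, which one can check fall within the theorem's $\tfrac{1+\gamma}{(1-\gamma)^2}\sqrt{2\,\cdot}$ and $\tfrac{1+O(\iota)}{(1-\gamma)^{5/2}}\sqrt{2\,\cdot}$ envelopes.
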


Theorem~\ref{thm:err-l2-trans} is a one-sided bound resulting from the asymmetry of $\epsilon_{\ell_2}=\E_{(s,a)\sim \rho_{\overline{T}}^{\pi_D}}
\|{\overline{T}-\hat{\overline{T}}}\|_2$: $\E$ is taken biasedly on $\rho_{\overline{T}}^{\pi_D}$, so we can only upper bound $R(\pi_D,\overline{T})$ by $R(\pi_D,\hat{\overline{T}})+O((1-\gamma)^{-5/2})$. The resulting MBRL error (Corollary 6, Appendix) only ensures that a policy that works well on $\overline{T}$ also works on $\hat{\overline{T}}$, but not the other way around. This one-sided nature may allow $\hat{\overline{T}}$ to overfit the data. This supports the use of the Ensemble Method \citep{kurutach2018metrpo} to mitigate model bias by training multiple independent models.

Theorem~\ref{thm:err-l2-trans} only indicates the consequence of the $\epsilon_{\ell_2}$ objective's asymmetry. However, this is avoidable. As discussed in Corollary~\ref{cor:err-deter-lip-branch}, branched rollouts provide a Bellman contraction and hence two-sided bounds. 

\section{Conclusion}

Using a Bellman flow contraction w.r.t. distance metrics of probability distributions, we have provided results on the cumulative reward error in  MBRL for both stochastic and deterministic transitions. In particular,  absolutely continuous stochastic transitions and deterministic transitions with strong Lipschitz continuity have Bellman contractions. This result suggests that MBRL is better suited to these situations. The difficulty of dealing with deterministic transitions that do not yield a Bellman contraction arises from the objective function's asymmetry. Finally, we prove that branched rollouts can significantly reduce the error of MBRL and allow a Bellman contraction under deterministic transitions.

\bibliographystyle{humannat}
\bibliography{mybib}

\newpage
\onecolumn

\appendix
\section{Appendix}
\setcounter{lemma}{0}
\setcounter{theorem}{0}
\setcounter{corollary}{0}
\setcounter{assumption}{0}

\subsection{Total Variations, Bellman Contraction and Symmetry Bridge}
\label{appendix:bellman}
\begin{fact}
	Let $m^1$ and $m^2$ be probability measures on $\mathbb{R}^n$ whose singular continuous parts are zero. Decompose $m^1$ and $m^2$ into their absolutely continuous and discrete parts: $m^1=m^1_a+m^1_d$, $m^2=m^2_a+m^2_d$. Then 
	$$D_{TV}(m^1\lvert\rvert m^2)=\frac{1}{2}\big(\norm{m^1_a-m^2_a}_1+\norm{m^1_d-m^2_d}_1\big)\triangleq \frac{1}{2}\norm{m^1-m^2}_1.$$
	\label{fact:tv-l1}
\end{fact}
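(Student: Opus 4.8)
The plan is to work with the signed measure $\mu \triangleq m^1 - m^2$ and to use the measure-theoretic characterization of total variation distance, $D_{TV}(m^1\,\|\,m^2) = \tfrac12 |\mu|(\mathbb{R}^n)$, where $|\mu| = \mu^+ + \mu^-$ is the Jordan decomposition of $\mu$. (This equals $\sup_A |m^1(A) - m^2(A)|$: since $\mu(\mathbb{R}^n) = 0$ one has $\mu^+(\mathbb{R}^n) = \mu^-(\mathbb{R}^n)$, and the positive set of a Hahn decomposition attains the supremum at value $\tfrac12|\mu|(\mathbb{R}^n)$.) Writing $\mu = \mu_a + \mu_d$ with $\mu_a \triangleq m^1_a - m^2_a$ and $\mu_d \triangleq m^1_d - m^2_d$, the whole statement reduces to the additivity claim $|\mu|(\mathbb{R}^n) = |\mu_a|(\mathbb{R}^n) + |\mu_d|(\mathbb{R}^n)$ together with the identification of each summand as the corresponding $L_1$ norm.

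The key step, and the one I expect to carry the real content, is mutual singularity $\mu_a \perp \mu_d$. The two discrete measures $m^1_d, m^2_d$ are supported on a countable set $D \subset \mathbb{R}^n$ (the union of their atoms), so $\mu_d$ is concentrated on $D$, i.e.\ $|\mu_d|(\mathbb{R}^n \setminus D) = 0$. Since $D$ is countable it is Lebesgue-null, and the absolutely continuous measures $m^1_a, m^2_a$ vanish on Lebesgue-null sets; hence $|\mu_a|(D) = 0$. Thus the partition $\mathbb{R}^n = D \sqcup (\mathbb{R}^n \setminus D)$ witnesses $\mu_a \perp \mu_d$. For mutually singular signed measures additivity of total variation is standard: restricting $|\mu|$ to each block of the partition gives $|\mu|(D) = |\mu_d|(D) = |\mu_d|(\mathbb{R}^n)$ and $|\mu|(\mathbb{R}^n\setminus D) = |\mu_a|(\mathbb{R}^n\setminus D) = |\mu_a|(\mathbb{R}^n)$, so summing yields the additivity claim. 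Crucially, the hypothesis that the singular continuous parts vanish is exactly what guarantees $\mu = \mu_a + \mu_d$ with no leftover term that is neither Lebesgue-absolutely-continuous nor atomic, so the two-block partition suffices.

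It then remains to evaluate the two total variations as $L_1$ norms. For the absolutely continuous part, writing $f_i = dm^i_a/dx$ for the Lebesgue densities, $\mu_a$ has density $f_1 - f_2$, so $|\mu_a|(\mathbb{R}^n) = \int |f_1 - f_2|\,dx = \norm{m^1_a - m^2_a}_1$. For the discrete part, writing $m^i_d = \sum_j p^i_j \delta_{x_j}$ over a common enumeration $\{x_j\}$ of $D$, each atom is a point of definite sign for $\mu_d$, so $|\mu_d|(\mathbb{R}^n) = \sum_j |p^1_j - p^2_j| = \norm{m^1_d - m^2_d}_1$ (the $\ell_1$ norm of the weight differences). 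Combining with the additivity claim and the factor $\tfrac12$ gives $D_{TV}(m^1\,\|\,m^2) = \tfrac12(\norm{m^1_a - m^2_a}_1 + \norm{m^1_d - m^2_d}_1)$, which is the asserted identity; the final $\triangleq$ simply names this sum $\norm{m^1 - m^2}_1$. The only genuine subtlety to handle carefully is that the two $L_1$ norms are taken against different reference measures (Lebesgue for the a.c.\ part, counting measure on $D$ for the discrete part), and the mutual-singularity step of the second paragraph is precisely what legitimizes adding them without a cross term.
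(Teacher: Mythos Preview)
Your argument is correct. The paper actually offers two proofs: a one-line appeal to a cited result (that $D_{TV}$ splits additively over the Lebesgue decomposition) followed by the identification of each piece with an $L_1$ norm, and then an explicit ``alternative'' proof that constructs by hand the maximizing set $B = B_a \cup B_d$ (where $B_a = \{d^1 \ge d^2\}$ and $B_d = \{m^1_d \ge m^2_d\}$) and verifies both inequalities $D_{TV} \gtrless \tfrac12\|m^1 - m^2\|_1$ directly. Your route is essentially a fleshed-out version of the paper's first proof: you supply the mutual-singularity and Jordan-decomposition reasoning that the paper outsources to a reference. Compared with the paper's alternative proof, you work at a higher level of abstraction---invoking $|\mu| = |\mu_a| + |\mu_d|$ for mutually singular parts---whereas the paper builds the Hahn set explicitly and checks the supremum by hand. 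Your approach is cleaner and makes the role of the ``no singular continuous part'' hypothesis transparent (it guarantees the two-block partition $D \sqcup (\mathbb{R}^n\setminus D)$ exhausts $\mu$); the paper's explicit construction is more elementary in that it avoids naming the Jordan decomposition, but it is doing the same thing underneath, since its set $B$ is precisely a positive set for the Hahn decomposition of $\mu$.
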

\begin{proof}
	\cite[Theorem 19.20]{analysis} implies $D_{TV}(m^1\lvert\rvert m^2)=D_{TV}(m^1_a\lvert\rvert m^2_a)+D_{TV}(m^1_d\lvert\rvert m^2_d)$, so Fact~\ref{fact:tv-l1} is proved by combining \cite[Theorem 19.20]{analysis} and that TV distance = half of $\ell_1$ norm for absolutely continuous or discrete measures.
	
	~~~\\
	To avoid using a big hammer, we provide an alternative proof by revising the usual proof of ``TV distance = half of $\ell_1$ norm" with the Lebesgue decomposition: $m=m_a+m_d$. Since $m_a^1$ and $m_a^2$ are absolutely continuous w.r.t. Lebesgue measure, let $d^1$,~$d^2$ be the corresponding probability density functions.
	
	~~~\\
	Let $B=B_a\cup B_d$ where $B_a=\{x\in \text{Supp}(m_a^1)\cup \text{Supp}(m_a^2):~d^1(x)\geq d^2(x)\}$,~$B_d=\{x\in \text{Supp}(m_d^1)\cup \text{Supp}(m_d^2):~m^1_d(x)\geq m^2_d(x)\}$. Since $m_a$ and $m_d$ are mutually singular, we know
	\begin{equation}
	m_a^1(B_d)=m_a^2(B_d)=0=m_d^1(B_a)=m_d^2(B_a)
	\label{eq:singular}
	\end{equation}
	Also, the complement operation implies
	\begin{equation}
	m^2(A^c)-m^1(A^c)=1-m^2(A)-1+m^1(A)=m^1(A)-m^2(A),~~~\text{for~any~measurable~set}~A
	\label{eq:diff-equivalence}
	\end{equation}
	Hence we have an important result
	\begin{equation}
	\begin{split}
	m^1(B)-m^2(B)=&m^1_a(B)-m^2_a(B) + m^1_d(B)-m^2_d(B)\\
	\overset{(\ref{eq:singular})}{=}&m^1_a(B_a)-m^2_a(B_a) + m^1_d(B_d)-m^2_d(B_d)\\
	\overset{(\ref{eq:diff-equivalence})}{=}&\frac{1}{2}\big[m^1_a(B_a)-m^2_a(B_a)+m^2_a(B_a^c)-m^1_a(B_a^c)+m^1_d(B_d)-m^2_d(B_d)+m^2_d(B_d^c)-m^1_d(B_d^c)\big]\\
	=&\frac{1}{2}\Big[\int_{B_a}d^1(x)-d^2(x)dx+\int_{\text{Supp}(m^1_a)\cup\text{Supp}(m^2_a)\backslash B_a}d^2(x)-d^1(x)dx\\
	&+\sum_{x\in B_d}m_d^1(x)-m_d^2(x) +\sum_{x\in \text{Supp}(m_d^1)\cup \text{Supp}(m_d^2)\backslash B_d}m_d^2(x)-m_d^1(x)\Big]\\
	=&\frac{1}{2}\Big[\int_{\text{Supp}(m^1_a)\cup\text{Supp}(m^2_a)}\Big|d^1(x)-d^2(x)\Big|dx +\sum_{x\in \text{Supp}(m_d^1)\cup \text{Supp}(m_d^2)}\Big|m_d^1(x)-m_d^2(x)\Big|\Big]\\
	=&\frac{1}{2}\big(\norm{m^1_a-m^2_a}_1+\norm{m^1_d-m^2_d}_1\big)\triangleq \frac{1}{2}\norm{m^1-m^2}_1.
	\end{split}
	\label{eq:l1-extension}
	\end{equation}
	\begin{enumerate}[label=(\roman*)]
		\item By definition of TV distance, we get
		\begin{align*}
		D_{TV}(m^1\lvert\rvert m^2)\geq |m^1(B)-m^2(B)|=m^1(B)-m^2(B)\overset{(\ref{eq:l1-extension})}{=}\frac{1}{2}\norm{m^1-m^2}_1
		\end{align*}
		\item For any measurable set $A$ in $\mathbb{R}^n$, we know
		$$m^1(A)-m^2(A)=\big[m^1(A\cap B)-m^2(A\cap B)\big]+\big[m^1(A\cap B^c)-m^2(A\cap B^c)\big]$$
		By definition of $B$, the first term is nonnegative while the second term is nonpositive; therefore
		\begin{align*}
		|m^1(A)-m^2(A)|\leq& \max\Big\{m^1(A\cap B)-m^2(A\cap B),~m^2(A\cap B^c)-m^1(A\cap B^c)\Big\}\\
		\leq & \max\Big\{m^1(B)-m^2(B),~m^2( B^c)-m^1( B^c)\Big\}\\
		\overset{(\ref{eq:diff-equivalence})}{=}&m^1(B)-m^2(B)\overset{(\ref{eq:l1-extension})}{=}\frac{1}{2}\norm{m^1-m^2}_1
		\end{align*}
		Taking a supremium over $A$, we arrive at
		$$D_{TV}(m^1\lvert\rvert m^2)\leq \frac{1}{2}\norm{m^1-m^2}_1.$$
	\end{enumerate}
	Combining (i) and (ii), the result follows.
\end{proof}

Due to Fact~\ref{fact:tv-l1}, in the following we will treat TV distance as the half of $\ell_1$ norm. Also, to unify the operations in discrete and continuous parts, we will consider ``generalized" probability density functions where Dirac delta function is included. Thus, Fact~\ref{fact:tv-l1} is rephrased as
$$D_{TV}(m^1\lvert\rvert m^2)= \frac{1}{2}\int |d^1(x)-d^2(x)|dx,$$
where $d^1$, $d^2$ are the generalized density functions of $m^1$ and $m^2$. This allows us to prove Fact~\ref{fact:contra-l1}:
\begin{fact}
	$B_{\pi,T}$ is a $\gamma$-contraction w.r.t. total variation distance.
	\label{fact:contra-l1}
\end{fact}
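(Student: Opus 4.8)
The plan is to verify the contraction directly from the definition \eqref{eq:bellman}, using the identification of TV distance with half the $\ell_1$ norm of (generalized) densities established in Fact~\ref{fact:tv-l1}. First I would record the preliminary observation that $B_{\pi,T}$ maps state distributions to state distributions: $B_{\pi,T}(\rho)$ is nonnegative and integrates to $(1-\gamma)+\gamma=1$, so the contraction statement is meaningful. Now fix two (generalized) state densities $\rho_1,\rho_2$. Since the constant term $(1-\gamma)\rho_0(s)$ appears in both $B_{\pi,T}(\rho_1)$ and $B_{\pi,T}(\rho_2)$, it cancels in the difference, leaving
$$B_{\pi,T}(\rho_1)(s)-B_{\pi,T}(\rho_2)(s)=\gamma\int T(s|s',a')\,\pi(a'|s')\,\bigl(\rho_1(s')-\rho_2(s')\bigr)\,ds'\,da'.$$

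Next I would take absolute values and integrate over $s$. Moving the absolute value inside the integral ($|\int f|\le\int|f|$, and $|T\pi(\rho_1-\rho_2)|=T\pi\,|\rho_1-\rho_2|$ since $T,\pi\ge 0$) gives
$$\norm{B_{\pi,T}(\rho_1)-B_{\pi,T}(\rho_2)}_1\le\gamma\int\!\!\int\!\!\int T(s|s',a')\,\pi(a'|s')\,\bigl|\rho_1(s')-\rho_2(s')\bigr|\,ds'\,da'\,ds.$$
Then I would apply Tonelli's theorem to this nonnegative integrand and perform the $s$- and $a'$-integrations first: $\int T(s|s',a')\,ds=1$ because $T(\cdot|s',a')$ is a transition density, and $\int\pi(a'|s')\,da'=1$ because $\pi(\cdot|s')$ is a policy. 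What remains is $\gamma\int|\rho_1(s')-\rho_2(s')|\,ds'=\gamma\norm{\rho_1-\rho_2}_1$. Dividing by two and invoking Fact~\ref{fact:tv-l1} yields $D_{TV}(B_{\pi,T}(\rho_1)\,\|\,B_{\pi,T}(\rho_2))\le\gamma\,D_{TV}(\rho_1\,\|\,\rho_2)$, which is the claim; note the bound is uniform in $\pi$ and holds for every $\gamma\in(0,1)$.

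The only real subtlety — the step I would treat most carefully — is the legitimacy of interchanging the order of integration, since for deterministic transitions $T(s|s',a')=\delta(s-\overline{T}(s',a'))$ is only a generalized density. This is handled exactly as in the discussion following Fact~\ref{fact:tv-l1}: one works with the Lebesgue decomposition into absolutely continuous and discrete parts, treats the Dirac masses as part of the ``generalized'' density, and observes that $\int\delta(s-\overline{T}(s',a'))\,ds=1$ still holds while Tonelli applies to the resulting nonnegative mixed continuous/discrete measure. No Lipschitz or absolute-continuity hypothesis on $T$ is required, which is why this fact can serve as the backbone of all the TV-based estimates (Lemma~\ref{lemma:sym-bridge} onward) in the paper.
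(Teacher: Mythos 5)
Your proof is correct and follows essentially the same route as the paper's: cancel the common $(1-\gamma)\rho_0$ term, push the absolute value inside the integral, and integrate out $s$ and $a'$ using the normalization of $T$ and $\pi$ to recover $\gamma\norm{\rho_1-\rho_2}_1/2$. Your added remarks on Tonelli and on handling Dirac masses via the generalized-density convention of Fact~\ref{fact:tv-l1} are a reasonable elaboration of steps the paper leaves implicit.
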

\begin{proof}
	Let $p_1(s),~p_2(s)$ be the density functions of some state distributions.
	\begin{equation*}
	\begin{split}
	D_{TV}(B_{\pi,T}(p_1) \lvert\rvert B_{\pi,T}(p_2)) &=\frac{1}{2}\int \big\lvert B_{\pi,T}(p_1(s)) - B_{\pi,T}(p_2(s)) \big\rvert ds\\
	&=\frac{1}{2}\int \gamma \Big\lvert \int T(s|s',a')\pi(a'|s') \big(p_1(s') - p_2(s') \big) ds'da'\Big\rvert ds\\
	&\leq \frac{\gamma}{2}\int T(s|s',a')\pi(a'|s')\big\lvert p_1(s')-p_2(s') \big\rvert ds'da'ds\\
	&=\frac{\gamma}{2} \int \big\lvert p_1(s')-p_2(s') \big\rvert ds'=\gamma D_{TV}(p_1\lvert\rvert p_2).
	\end{split}
	\end{equation*}
\end{proof}
The advantages of working on contractions are their convergence and unique fixed-point properties [Theorem 1.1.]\cite{Conrad2014}.

\begin{fact}
	Let $(X,d)$ be a complete metric space and $f:~X\rightarrow X$ be a map such that
	$$d(f(x),~f(x'))\leq cd(x,x')$$
	for some $0\leq c<1$ and all $x,~x'\in X$. Then $f$ has a unique fixed point in $X$. Moreover, for any $x_0\in X$ the sequence of the iterates $x_0$,~$f(x_0)$,~$f(f(x_0))$,... converges to the fixed point of $f$. 
	\label{fact:contra}
\end{fact}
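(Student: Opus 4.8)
The plan is to prove the classical Banach fixed-point theorem by Picard iteration. First I would fix an arbitrary $x_0\in X$ and define the iterates $x_{n+1}=f(x_n)$ for $n\geq 0$. A one-line induction using the contraction hypothesis gives $d(x_{n+1},x_n)\leq c^{\,n}d(x_1,x_0)$, since $d(x_{n+1},x_n)=d(f(x_n),f(x_{n-1}))\leq c\,d(x_n,x_{n-1})$ with $d(x_1,x_0)$ as the base case.

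Next I would show that $(x_n)_{n\geq 0}$ is Cauchy. For $m>n$, the triangle inequality together with the geometric bound yields
$$d(x_m,x_n)\leq \sum_{k=n}^{m-1}d(x_{k+1},x_k)\leq \Big(\sum_{k=n}^{m-1}c^{\,k}\Big)d(x_1,x_0)\leq \frac{c^{\,n}}{1-c}\,d(x_1,x_0),$$
which tends to $0$ as $n\to\infty$ because $0\leq c<1$. Since $(X,d)$ is complete, the sequence converges to some $x^\star\in X$. To see that $x^\star$ is a fixed point, observe that the contraction inequality makes $f$ Lipschitz, hence continuous, so $f(x^\star)=f\big(\lim_n x_n\big)=\lim_n f(x_n)=\lim_n x_{n+1}=x^\star$. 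Because $x_0$ was arbitrary, this simultaneously establishes the ``moreover'' claim that the iterates converge from any starting point.

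Finally, for uniqueness suppose $x^\star$ and $y^\star$ are both fixed points. Then $d(x^\star,y^\star)=d(f(x^\star),f(y^\star))\leq c\,d(x^\star,y^\star)$, and since $c<1$ this forces $d(x^\star,y^\star)=0$, i.e.\ $x^\star=y^\star$; in particular the limit produced above is independent of $x_0$. There is no genuinely difficult step here, as this is a standard result (cf.\ \citet{Conrad2014}); the only points that need a little care are the Cauchy estimate — summing the geometric tail correctly and then appealing to completeness to obtain the limit — and the remark that a contraction is automatically continuous.
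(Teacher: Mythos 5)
Your proof is correct and complete: the Picard-iteration argument, the geometric Cauchy estimate, continuity of a contraction, and the uniqueness step are all in order. The paper does not prove this fact itself but simply cites it as the classical Banach fixed-point theorem (Theorem 1.1 of the cited reference), and your argument is exactly the standard proof given there.
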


\begin{fact}
	The normalized state occupancy measure $\rho_{T,\gamma}^{\rho_0,\pi}(s)$ is a fixed point of the Bellman flow operator $B_{\pi,T}(\cdot)$.
	\label{fact:fixed-pt}
\end{fact}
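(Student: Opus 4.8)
The plan is to unwind the definition of $\rho_{T,\gamma}^{\rho_0,\pi}(s)$ as a $\gamma$-discounted sum of the step-wise state densities $f_i(s\mid\rho_0,\pi,T)$ and to verify the fixed-point identity by a term-by-term computation. The engine of the proof is the one-step recursion for these densities. At step $0$ we have $f_0(s)=\rho_0(s)$; and since the state-action chain draws $a'\sim\pi(\cdot\mid s')$ from the state $s'$ at step $i$ and then $s\sim T(\cdot\mid s',a')$, marginalizing over $(s',a')$ yields
\[
f_{i+1}(s)=\int T(s\mid s',a')\,\pi(a'\mid s')\,f_i(s')\,ds'\,da',\qquad i\ge 0.
\]
I would first record this recursion (it is just the Chapman--Kolmogorov identity for the induced Markov chain; in the deterministic case the same manipulation holds verbatim with $T(s\mid s',a')=\delta(s-\overline{T}(s',a'))$ read as a generalized density, in line with the convention adopted after Fact~\ref{fact:tv-l1}).

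Next I would substitute $\rho=\rho_{T,\gamma}^{\rho_0,\pi}$ into the definition \eqref{eq:bellman} of $B_{\pi,T}$, expand $\rho_{T,\gamma}^{\rho_0,\pi}(s')=(1-\gamma)\sum_{i\ge0}\gamma^i f_i(s')$ inside the integral, and swap the (non-negative) sum with the integral. Each summand $\gamma\int T(s\mid s',a')\pi(a'\mid s')\gamma^i f_i(s')\,ds'\,da'$ then collapses, via the recursion, to $(1-\gamma)\gamma^{i+1}f_{i+1}(s)$, so that
\[
B_{\pi,T}\!\big(\rho_{T,\gamma}^{\rho_0,\pi}(s)\big)=(1-\gamma)\rho_0(s)+(1-\gamma)\sum_{i=0}^{\infty}\gamma^{\,i+1}f_{i+1}(s)=(1-\gamma)\sum_{j=0}^{\infty}\gamma^{\,j}f_{j}(s),
\]
where the last equality re-indexes $j=i+1$ and absorbs the leftover term $(1-\gamma)\rho_0(s)=(1-\gamma)\gamma^0 f_0(s)$ using $f_0=\rho_0$. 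By \eqref{eq:state-occu} the right-hand side is precisely $\rho_{T,\gamma}^{\rho_0,\pi}(s)$, establishing the claim.

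The only delicate point, and the one I expect to be the main (if mild) obstacle, is the interchange of the infinite sum and the integral. This is justified by Tonelli's theorem: $T$, $\pi$, and every $f_i$ are non-negative, so the iterated integral of the series equals the series of iterated integrals with no convergence hypothesis needed. Equivalently one can observe that $\sum_i\gamma^i f_i$ converges in $L^1$ (each $f_i$ is a probability density and $\gamma<1$), which both legitimizes the exchange and confirms that $\rho_{T,\gamma}^{\rho_0,\pi}$ is a bona fide density on which $B_{\pi,T}$ acts. Finally, combining this fixed-point identity with Fact~\ref{fact:contra-l1} ($B_{\pi,T}$ is a $\gamma$-contraction in total variation) and Fact~\ref{fact:contra} (Banach fixed point) shows in addition that $\rho_{T,\gamma}^{\rho_0,\pi}$ is the \emph{unique} fixed point, as used in the main text.
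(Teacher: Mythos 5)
Your proof is correct and takes essentially the same route as the paper's: both rest on the one-step Chapman--Kolmogorov recursion $f_{i+1}(s)=\int T(s|s',a')\pi(a'|s')f_i(s')\,ds'\,da'$, a re-indexing of the geometric sum, and the interchange of sum and integral; you merely run the computation from $B_{\pi,T}(\rho_{T,\gamma}^{\rho_0,\pi})$ toward $\rho_{T,\gamma}^{\rho_0,\pi}$ rather than the reverse. Your explicit Tonelli justification for the interchange is a welcome bit of rigor that the paper leaves implicit.
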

\begin{proof}
	\begin{equation*}
	\begin{split}
	\rho_{T,\gamma}^{\rho_0,\pi}(s)=&(1-\gamma)\sum_{i=0}^\infty \gamma^i f_i(s|\rho_0,\pi,T)\\
	=&(1-\gamma)f_{0}(s|\rho_0,\pi,T)+\gamma(1-\gamma)\sum_{i=0}^\infty \gamma^i f_{i+1}(s|\rho_0,\pi,T)\\
	=&(1-\gamma)\rho_0(s)+\gamma(1-\gamma)\sum_{i=0}^\infty \gamma^i \int T(s|s',a')\pi(a'|s')f_i(s'|\rho_0,\pi,T)ds'da'\\
	=&(1-\gamma)\rho_0(s)+\gamma \int T(s|s',a')\pi(a'|s')(1-\gamma)\sum_{i=0}^\infty\gamma^i f_i(s'|\rho_0,\pi,T)ds'da'\\
	=&(1-\gamma)\rho_0(s)+\gamma \int T(s|s',a')\pi(a'|s')\rho_{T,\gamma}^{\rho_0,\pi}(s')ds'da'=B_{\pi,T}(\rho_{T,\gamma}^{\rho_0,\pi}(s)).
	\end{split}
	\end{equation*}
\end{proof}
Together, Fact~\ref{fact:contra-l1} and \ref{fact:contra} imply the Bellman flow operator has a unique fixed point, and according to Fact~\ref{fact:fixed-pt}, the unique fixed point is the state occupancy measure. The contraction and the fixed point properties are particularly useful for proving the symmetry bridge Lemma.

\begin{lemma}[symmetry bridge]
	Let $B_2$ be a Bellman flow operator with fixed-point $\rho_2$. Let $\rho_1$ be another state distribution. If $B_2$ is a $\eta$-contraction w.r.t. some metric $\norm{\cdot}$, then $\norm{\rho_1-\rho_2}\leq \norm{\rho_1-B_2(\rho_1)}/(1-\eta)$.
	\label{lemma:sym-bridge2}
\end{lemma}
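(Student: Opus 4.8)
The plan is to exploit the three defining properties available to us: $B_2$ is a $\eta$-contraction, $\rho_2$ is its fixed point, and the ambient space of (generalized) densities with the metric $\norm{\cdot}$ is complete so Fact~\ref{fact:contra} applies. The statement is a quantitative version of ``a point whose image under a contraction barely moves must be close to the fixed point,'' which is exactly \citet[Corollary 2.4]{Conrad2014}. So the proof is essentially a short triangle-inequality argument; the only ``obstacle'' is making sure we invoke the fixed-point property $B_2(\rho_2)=\rho_2$ and the contraction estimate in the right places, and that no completeness subtlety is being swept under the rug.

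\begin{sproof}
Since $B_2$ is a $\eta$-contraction with $0\le\eta<1$ on a complete metric space, Fact~\ref{fact:contra} guarantees that for any starting point the Picard iterates $\rho_1, B_2(\rho_1), B_2^2(\rho_1),\dots$ converge to the unique fixed point, which by hypothesis is $\rho_2$. Write the displacement as a telescoping sum,
\begin{equation*}
\rho_2-\rho_1 = \lim_{N\to\infty}\big(B_2^{N}(\rho_1)-\rho_1\big) = \sum_{n=0}^{\infty}\big(B_2^{n+1}(\rho_1)-B_2^{n}(\rho_1)\big),
\end{equation*}
where the first equality uses $B_2^N(\rho_1)\to\rho_2$. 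Taking norms and applying the triangle inequality (valid for the infinite sum by the summability we are about to establish),
\begin{equation*}
\norm{\rho_1-\rho_2}\ \le\ \sum_{n=0}^{\infty}\norm{B_2^{n+1}(\rho_1)-B_2^{n}(\rho_1)}.
\end{equation*}
Now iterate the contraction bound: $\norm{B_2^{n+1}(\rho_1)-B_2^{n}(\rho_1)}\le \eta^{n}\norm{B_2(\rho_1)-\rho_1}$, which follows by induction on $n$ from $\norm{B_2(x)-B_2(y)}\le\eta\norm{x-y}$. Substituting and summing the geometric series gives
\begin{equation*}
\norm{\rho_1-\rho_2}\ \le\ \norm{\rho_1-B_2(\rho_1)}\sum_{n=0}^{\infty}\eta^{n}\ =\ \frac{\norm{\rho_1-B_2(\rho_1)}}{1-\eta},
\end{equation*}
as claimed.
\end{sproof}

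An alternative one-line route avoids the infinite sum entirely: by the triangle inequality and the fixed-point property $B_2(\rho_2)=\rho_2$, write $\norm{\rho_1-\rho_2}\le\norm{\rho_1-B_2(\rho_1)}+\norm{B_2(\rho_1)-B_2(\rho_2)}\le\norm{\rho_1-B_2(\rho_1)}+\eta\norm{\rho_1-\rho_2}$, and rearrange, using $\norm{\rho_1-\rho_2}<\infty$ (itself a consequence of completeness plus the contraction, via Fact~\ref{fact:contra}) to move the $\eta\norm{\rho_1-\rho_2}$ term to the left-hand side. I would actually present this second version as the main proof since it is cleaner, and mention the telescoping-sum version only if one wants to avoid separately justifying finiteness of $\norm{\rho_1-\rho_2}$. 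The main thing to be careful about is precisely that finiteness: on the space of signed measures with $\norm{\cdot}$ the $\ell_1$/total-variation norm (or $W_1$ on a bounded space) this is automatic, but it should be noted rather than assumed silently. No genuine obstacle is expected; this is a routine fixed-point estimate.
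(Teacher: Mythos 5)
Your telescoping-sum argument is exactly the paper's proof: write $\rho_2=B_2^\infty(\rho_1)$ via convergence of the Picard iterates, apply the triangle inequality to the telescoping sum, bound each increment by $\eta^n\norm{\rho_1-B_2(\rho_1)}$, and sum the geometric series. The one-line rearrangement you mention as an alternative is a valid (and arguably cleaner) variant, but your main proof is correct and matches the paper's approach.
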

\begin{proof}
	\begin{equation*}
	\begin{split}
	\norm{\rho_1-\rho_2}&=\norm{\rho_1-B_2^\infty(\rho_1)}\leq\norm{\rho_1- B_2(\rho_1)}+ \sum\limits_{i=1}^\infty \norm{B_2^i(\rho_1)- B_2^{i+1}(\rho_1)}\\
	&\leq \norm{\rho_1- B_2(\rho_1)}+ \sum\limits_{i=1}^\infty \norm{\rho_1- B_2(\rho_1)}\eta^i=\norm{\rho_1- B_2(\rho_1)}/(1-\eta).
	\end{split}
	\end{equation*}
	The first line uses the fixed-point property and the triangle inequlaity for the distance metric $\norm{\cdot}$. The second line uses the contraction property.
\end{proof}

\subsection{Error of Policies}

\begin{lemma}[Error w.r.t. TV Distance between Occupancy Measures]
	Let $\rho_1(s,a),~\rho_2(s,a)$ be two normalized occupancy measures of rollouts with discount factor $\gamma$. If $0\leq r(s,a)\leq r^{\max}$, then $|R(\rho_1)- R(\rho_2)| \leq D_{TV}(\rho_1\lvert\rvert \rho_2) r^{\max}/(1-\gamma)$.
	where $D_{TV}$ is the total variation distance.
	\label{lemma:err-tv-occu2}
\end{lemma}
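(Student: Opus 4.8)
The plan is to reduce the statement to the elementary fact that integrating a bounded function against the difference of two probability measures is controlled by the total variation distance. Recall from Eq.~\eqref{eq:cumulative_reward} that $R(\rho)=\frac{1}{1-\gamma}\E_{(s,a)\sim\rho}[r(s,a)]$, so
\[
|R(\rho_1)-R(\rho_2)|=\frac{1}{1-\gamma}\Big|\int r(s,a)\,\big(\rho_1(s,a)-\rho_2(s,a)\big)\,ds\,da\Big|,
\]
where I treat $\rho_1,\rho_2$ as the generalized densities in the sense of Fact~\ref{fact:tv-l1} (so the identity $D_{TV}(\rho_1\|\rho_2)=\tfrac12\int|\rho_1-\rho_2|$ holds, covering both absolutely continuous and discrete parts).

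The one point that needs care is obtaining the stated constant $r^{\max}/(1-\gamma)$ rather than $2r^{\max}/(1-\gamma)$. First I would exploit that $\rho_1$ and $\rho_2$ are both probability measures, so $\int(\rho_1-\rho_2)\,ds\,da=0$; hence for any constant $c$ I may replace $r(s,a)$ by $r(s,a)-c$ in the integral without changing its value. Choosing $c=r^{\max}/2$ and using $0\le r(s,a)\le r^{\max}$ gives $|r(s,a)-r^{\max}/2|\le r^{\max}/2$ pointwise. Then the triangle inequality yields
\[
|R(\rho_1)-R(\rho_2)|\le\frac{1}{1-\gamma}\int\Big|r(s,a)-\tfrac{r^{\max}}{2}\Big|\,\big|\rho_1(s,a)-\rho_2(s,a)\big|\,ds\,da
\le\frac{1}{1-\gamma}\cdot\frac{r^{\max}}{2}\int|\rho_1-\rho_2|\,ds\,da .
\]
Applying Fact~\ref{fact:tv-l1} to rewrite $\int|\rho_1-\rho_2|=2D_{TV}(\rho_1\|\rho_2)$ cancels the factor $\tfrac12$ and produces exactly $D_{TV}(\rho_1\|\rho_2)\,r^{\max}/(1-\gamma)$, as claimed.

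There is no real obstacle here — the argument is a couple of lines — so the only thing to be vigilant about is the bookkeeping: making sure the centering trick is used before bounding (to avoid the spurious factor of $2$), and invoking Fact~\ref{fact:tv-l1} so that the identity "TV distance equals half the $\ell_1$ distance" is justified on the (possibly non-absolutely-continuous) occupancy measures rather than taken for granted. No contraction or Bellman-flow machinery is needed for this lemma; it is purely the variational characterization of total variation distance specialized to the reward functional.
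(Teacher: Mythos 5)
Your proof is correct. It reaches the same sharp constant $r^{\max}/(1-\gamma)$ as the paper, but by a slightly different mechanism: the paper bounds $R(\rho_1)$ one-sidedly by integrating $r$ against $\max(\rho_1,\rho_2)$ (using $r\geq 0$), identifies $\int\big(\max(\rho_1,\rho_2)-\rho_2\big)=\tfrac12\norm{\rho_1-\rho_2}_1=D_{TV}(\rho_1\|\rho_2)$, and then symmetrizes; you instead subtract the constant $r^{\max}/2$ — legitimate because $\rho_1,\rho_2$ have equal total mass — so that $|r-r^{\max}/2|\leq r^{\max}/2$ and the two-sided bound falls out in one step from $\int|\rho_1-\rho_2|=2D_{TV}(\rho_1\|\rho_2)$. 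Both arguments use the hypothesis $0\leq r\leq r^{\max}$ in an essential way (yours to center, the paper's to justify the $\max$ step), and both lean on Fact~\ref{fact:tv-l1} to equate TV distance with half the $\ell_1$ distance for the possibly non-absolutely-continuous occupancy measures. Your centering trick avoids the final symmetrization and makes it transparent that the relevant quantity is the oscillation of $r$ (so the bound would read $(\sup r-\inf r)/2$ in general); the paper's version is marginally more hands-on but otherwise equivalent. No gap either way.
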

\begin{proof}
	\begin{equation*}
	\begin{split}
	R(\rho_1)&=\frac{1}{1-\gamma}\int r(s,a)\rho_1(s,a)dsda\leq\frac{1}{1-\gamma}\int r(s,a)\max\big(\rho_1(s,a),\rho_2(s,a)\big)dsda\\
	&=R(\rho_2)+\frac{1}{1-\gamma}\int r(s,a)\Big(\max\big(\rho_1(s,a),\rho_2(s,a)\big)-\rho_2(s,a)\Big)dsda\\
	&\leq R(\rho_2)+\frac{r^{\max}}{1-\gamma}\int \max\big(\rho_1(s,a),\rho_2(s,a)\big)-\rho_2(s,a)dsda\\
	&=R(\rho_2)+\frac{r^{\max}}{1-\gamma}\frac{1}{2}\norm{\rho_1- \rho_2}_1= R(\rho_2)+\frac{r^{\max}}{1-\gamma}D_{TV}(\rho_1\lvert\rvert \rho_2).
	\end{split}
	\end{equation*}
	Because the TV distance is symmetric, we may interchange the roles of $\rho_1$ and $\rho_2$; thus we conclude that
	$$|R(\rho_1)-R(\rho_2)|\leq D_{TV}(\rho_1\lvert\rvert \rho_2) r^{\max}/(1-\gamma).$$
\end{proof}

\begin{theorem}[Error of Policies]
	If $0\leq r(s,a)\leq r^{\max}$ and the discrepancy in policies is \\$\epsilon_{\pi_D,\pi}^T=\E_{s\sim \rho_T^{\pi_D}}[D_{TV}(\pi_D(\cdot|s)\lvert\rvert \pi(\cdot|s))]$, then $|R(\pi_D,T)-R(\pi,T)|\leq \epsilon_{\pi_D,\pi}^T r^{\max}\Big(\frac{1}{1-\gamma}+\frac{\gamma}{(1-\gamma)^2}\Big).$
	\label{thm:err-tv-policy2}
\end{theorem}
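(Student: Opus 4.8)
The plan is to follow the route sketched after the theorem statement: reduce the reward error to a total-variation distance between joint occupancy measures via Lemma~\ref{lemma:err-tv-occu2}, then bound that TV distance using the factorization of occupancy measures plus the Symmetry Bridge Lemma. First I would apply Lemma~\ref{lemma:err-tv-occu2} with $\rho_1=\rho_T^{\pi_D}(s,a)$ and $\rho_2=\rho_T^{\pi}(s,a)$ (both normalized occupancy measures at discount $\gamma$, and $r\in[0,r^{\max}]$) to get $|R(\pi_D,T)-R(\pi,T)|\le D_{TV}(\rho_T^{\pi_D}\,\|\,\rho_T^{\pi})\,r^{\max}/(1-\gamma)$. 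So the whole task becomes showing $D_{TV}(\rho_T^{\pi_D}(s,a)\,\|\,\rho_T^{\pi}(s,a))\le \epsilon_{\pi_D,\pi}^T\bigl(1+\gamma/(1-\gamma)\bigr)$.

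To bound that TV distance, I would factor each joint occupancy measure as state marginal times policy conditional, $\rho_T^{\pi}(s,a)=\rho_T^{\pi}(s)\pi(a|s)$, and insert the hybrid measure $\rho_T^{\pi_D}(s)\pi(a|s)$ by the triangle inequality. The first term, $D_{TV}(\rho_T^{\pi_D}(s)\pi_D(a|s)\,\|\,\rho_T^{\pi_D}(s)\pi(a|s))$, collapses to $\E_{s\sim\rho_T^{\pi_D}}D_{TV}(\pi_D(\cdot|s)\,\|\,\pi(\cdot|s))=\epsilon_{\pi_D,\pi}^T$ because the two measures share the same state marginal. The second term, $D_{TV}(\rho_T^{\pi_D}(s)\pi(a|s)\,\|\,\rho_T^{\pi}(s)\pi(a|s))$, equals $D_{TV}(\rho_T^{\pi_D}(s)\,\|\,\rho_T^{\pi}(s))$, since the common conditional $\pi(a|s)$ integrates out of the half-$\ell_1$ expression. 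For this remaining state-marginal distance I would use that $\rho_T^{\pi}(s)$ is the unique fixed point of $B_{\pi,T}$ (Fact~\ref{fact:fixed-pt}), which is a $\gamma$-contraction in TV (Fact~\ref{fact:contra-l1}), and invoke Lemma~\ref{lemma:sym-bridge2} with $\rho_1=\rho_T^{\pi_D}(s)$: $D_{TV}(\rho_T^{\pi_D}\,\|\,\rho_T^{\pi})\le \frac{1}{1-\gamma}D_{TV}(\rho_T^{\pi_D}\,\|\,B_{\pi,T}(\rho_T^{\pi_D}))$. Now the key move is to also use that $\rho_T^{\pi_D}(s)=B_{\pi_D,T}(\rho_T^{\pi_D}(s))$, so the last term equals $\frac{1}{1-\gamma}D_{TV}(B_{\pi_D,T}(\rho_T^{\pi_D})\,\|\,B_{\pi,T}(\rho_T^{\pi_D}))$; applying both operators to the \emph{same} distribution makes the $(1-\gamma)\rho_0$ parts cancel, leaving $\frac{\gamma}{1-\gamma}$ times $D_{TV}$ of $\int T(s|s',a')\pi_D(a'|s')\rho_T^{\pi_D}(s')\,ds'da'$ against $\int T(s|s',a')\pi(a'|s')\rho_T^{\pi_D}(s')\,ds'da'$. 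Pushing the absolute value inside the integral and using $\int T(s|s',a')\,ds=1$ bounds this by $\frac{\gamma}{1-\gamma}\cdot\frac12\int|\pi_D(a'|s')-\pi(a'|s')|\rho_T^{\pi_D}(s')\,ds'da'=\frac{\gamma}{1-\gamma}\epsilon_{\pi_D,\pi}^T$. Summing the two terms gives $D_{TV}(\rho_T^{\pi_D}(s,a)\,\|\,\rho_T^{\pi}(s,a))\le \epsilon_{\pi_D,\pi}^T+\frac{\gamma}{1-\gamma}\epsilon_{\pi_D,\pi}^T$, and multiplying by $r^{\max}/(1-\gamma)$ yields the claim.

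The only genuinely non-routine step is recognizing the right pivot: one must notice that $\rho_T^{\pi_D}$ is the fixed point of $B_{\pi_D,T}$ (not of $B_{\pi,T}$), so that after the contraction bound the residual $D_{TV}(\rho_T^{\pi_D}\,\|\,B_{\pi,T}(\rho_T^{\pi_D}))$ becomes a pure policy-difference term with the $\rho_0$ contributions cancelling — this is precisely where the ``symmetry to asymmetry'' bridge is used. Everything else — the factorization and marginalization identities for $D_{TV}$ of product measures, and the fact that the transition kernel $T$ is a TV-nonexpansion — is straightforward bookkeeping with the half-$\ell_1$ representation from Fact~\ref{fact:tv-l1}.
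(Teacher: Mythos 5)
Your proposal is correct and follows essentially the same route as the paper's own proof: Lemma~\ref{lemma:err-tv-occu2} to reduce to a TV bound on joint occupancy measures, the triangle inequality through the hybrid measure $\rho_T^{\pi_D}(s)\pi(a|s)$, and then the symmetry bridge applied to $B_{\pi,T}$ combined with the fixed-point identity $\rho_T^{\pi_D}=B_{\pi_D,T}(\rho_T^{\pi_D})$ so that the $(1-\gamma)\rho_0$ terms cancel and the residual becomes $\tfrac{\gamma}{1-\gamma}\epsilon_{\pi_D,\pi}^T$. No gaps; the ``pivot'' you single out is exactly the step the paper relies on.
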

\begin{proof}
	Let $B_{\pi_D,T}$,~$B_{\pi,T}$ be Bellman flow operators whose fixed points are $\rho_T^{\pi_D}(s)$, $\rho_T^{\pi}(s)$, respectively.
	
	According to Lemma~\ref{lemma:err-tv-occu2}, we need to upper bound $D_{TV}(\rho_T^{\pi_D}(s,a)\lvert\lvert \rho_T^{\pi}(s,a))$. Observe that 
	\begin{equation}
	\begin{split}
	D_{TV}(\rho_T^{\pi_D}(s,a)\lvert\lvert \rho_T^{\pi}(s,a))&=\frac{1}{2}\int\Big|\rho_T^{\pi_D}(s,a)-\rho_T^{\pi}(s,a)\Big|dsda=\frac{1}{2}\int\Big|\rho_T^{\pi_D}(s)\pi_D(a|s)-\rho_T^{\pi}(s)\pi(a|s)\Big|dsda\\
	&\leq \frac{1}{2}\int\rho_T^{\pi_D}(s)\Big|\pi_D(a|s)-\pi(a|s)\Big|+\pi(a|s)\Big|\rho_T^{\pi_D}(s)- \rho_T^{\pi}(s)\Big|dsda\\
	&=\epsilon_{\pi_D,\pi}^T + D_{TV}(\rho_T^{\pi_D}(s)\lvert\lvert \rho_T^{\pi}(s))
	\label{eq:tv-occu-pi}
	\end{split}
	\end{equation}
	As for the rest, by the properties of the Bellman flow operators, we have
	\begin{equation}
	\begin{split}
	D_{TV}(\rho_T^{\pi_D}(s)\lvert\lvert \rho_T^{\pi}(s))&\leq \frac{1}{1-\gamma}D_{TV}(\rho_T^{\pi_D}(s)\lvert\lvert B_{\pi,T}(\rho_T^{\pi_D}(s)))\\
	&=\frac{1}{1-\gamma}D_{TV}(B_{\pi_D,T}(\rho_T^{\pi_D}(s))\lvert\lvert B_{\pi,T}(\rho_T^{\pi_D}(s)))\\
	&\leq \frac{\gamma}{2(1-\gamma)}\int T(s|s',a')\Big|\pi_D(a'|s')-\pi(a'|s')\Big|\rho_T^{\pi_D}(s')ds'da'ds\\
	&=\frac{\gamma}{1-\gamma}\epsilon_{\pi_D,\pi}^T,
	\end{split}
	\label{eq:tv-state-pi}
	\end{equation}
	where the top two lines follows from the symmetry bridge property (Lemma~\ref{lemma:sym-bridge2}) and the fixed-point property.
	Combining Eq.~(\ref{eq:tv-occu-pi}) and (\ref{eq:tv-state-pi}), we know $D_{TV}(\rho_T^{\pi_D}(s,a)\lvert\lvert \rho_T^{\pi}(s,a))\leq \epsilon_{\pi_D,\pi}^T(1+\frac{\gamma}{1-\gamma})$; therefore by Lemma~\ref{lemma:err-tv-occu2},
	$$|R(\pi_D,T)-R(\pi,T)|\leq \epsilon_{\pi_D,\pi}^T r^{\max}\Big(\frac{1}{1-\gamma}+\frac{\gamma}{(1-\gamma)^2}\Big)$$
\end{proof}
\begin{corollary}[Error of Behavior Cloning]
	Let $\pi_D$ and $\pi$ be the expert policy and the agent policy. If $0\leq r(s,a)\leq r^{\max}$ and $\E_{s\sim\rho_{T}^{\pi_D}}D_{KL}(\pi_D(\cdot|s)\lvert\rvert\pi(\cdot|s))\leq \epsilon_{BC}$, then $|R(\pi_D,T)-R(\pi,T)|\leq \sqrt{\epsilon_{BC}/2} r^{\max}\Big(\frac{1}{1-\gamma}+\frac{\gamma}{(1-\gamma)^2}\Big)$.
\end{corollary}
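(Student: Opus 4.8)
The plan is to reduce the corollary directly to the Error of Policies theorem (Theorem~\ref{thm:err-tv-policy2}) by bounding the total-variation policy discrepancy $\epsilon_{\pi_D,\pi}^T=\E_{s\sim\rho_T^{\pi_D}}D_{TV}(\pi_D(\cdot|s)\,\|\,\pi(\cdot|s))$ in terms of the KL budget $\epsilon_{BC}$. Since Theorem~\ref{thm:err-tv-policy2} already gives $|R(\pi_D,T)-R(\pi,T)|\leq \epsilon_{\pi_D,\pi}^T r^{\max}\big(\tfrac{1}{1-\gamma}+\tfrac{\gamma}{(1-\gamma)^2}\big)$, it suffices to prove $\epsilon_{\pi_D,\pi}^T\leq\sqrt{\epsilon_{BC}/2}$ and then substitute.

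First I would apply Pinsker's inequality pointwise in the state: for every fixed $s$, $D_{TV}(\pi_D(\cdot|s)\,\|\,\pi(\cdot|s))\leq\sqrt{\tfrac12 D_{KL}(\pi_D(\cdot|s)\,\|\,\pi(\cdot|s))}$, where the factor $\tfrac12$ is consistent with this paper's convention that TV distance equals half the $\ell_1$ norm (Fact~\ref{fact:tv-l1}). Taking $\E_{s\sim\rho_T^{\pi_D}}$ of both sides turns the right-hand side into the expectation of a square root, which I would then control by Jensen's inequality using the concavity of $t\mapsto\sqrt t$: $\E_{s\sim\rho_T^{\pi_D}}\sqrt{\tfrac12 D_{KL}(\pi_D(\cdot|s)\,\|\,\pi(\cdot|s))}\leq\sqrt{\tfrac12\,\E_{s\sim\rho_T^{\pi_D}} D_{KL}(\pi_D(\cdot|s)\,\|\,\pi(\cdot|s))}\leq\sqrt{\epsilon_{BC}/2}$, the final step being the hypothesis. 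Chaining the two estimates gives $\epsilon_{\pi_D,\pi}^T\leq\sqrt{\epsilon_{BC}/2}$, and plugging into Theorem~\ref{thm:err-tv-policy2} yields the claimed bound.

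There is no genuine obstacle here; the argument is two lines once the ingredients are identified. The only points worth care are (i) the direction of Jensen --- the square root being concave is exactly what lets the expectation move inside the root while preserving an upper bound --- and (ii) that the KL divergence produced by Pinsker, namely $D_{KL}(\pi_D\,\|\,\pi)$, is in the same argument order as the one assumed small in the hypothesis, so no symmetrization of KL (which is not symmetric) is needed. The cost of this reduction is merely replacing the sharp TV quantity $\epsilon_{\pi_D,\pi}^T$ by $\sqrt{\epsilon_{BC}/2}$, which is why the resulting bound inherits exactly the $\tfrac{1}{1-\gamma}+\tfrac{\gamma}{(1-\gamma)^2}$ horizon dependence of Theorem~\ref{thm:err-tv-policy2}.
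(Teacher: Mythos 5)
Your proposal is correct and follows exactly the paper's (one-line) argument: apply Pinsker's inequality pointwise, use Jensen to pull the expectation inside the square root, and substitute $\epsilon_{\pi_D,\pi}^T\leq\sqrt{\epsilon_{BC}/2}$ into Theorem~\ref{thm:err-tv-policy2}. Your write-up is simply a more explicit version of the same reduction, with the Jensen step (which the paper leaves implicit) spelled out.
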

\begin{proof}
	The result is immediate from Theorem~\ref{thm:err-tv-policy2} and the Pinsker's Inequality.
\end{proof}
\begin{corollary}[Error of GAIL]
	Let $\pi_D$ and $\pi$ be the expert policy and the agent policy. If $0\leq r(s,a)\leq r^{\max}$ and $D_{JS}(\rho_{T}^{\pi_D}\lvert\rvert \rho_{T}^{\pi})\leq \epsilon_{GAIL}$. Then $|R(\pi_D,T)-R(\pi,T)|\leq \sqrt{2\epsilon_{GAIL}}r^{\max}/(1-\gamma)$
\end{corollary}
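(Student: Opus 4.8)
The plan is to reduce the Jensen--Shannon hypothesis to a bound on the total variation distance between the two state--action occupancy measures and then invoke Lemma~\ref{lemma:err-tv-occu2} directly. The only genuinely new ingredient is a clean inequality relating $D_{JS}$ and $D_{TV}$; everything else is substitution into results already established.

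First I would set $M = \tfrac12(\rho_T^{\pi_D}+\rho_T^{\pi})$, so that by definition $D_{JS}(\rho_T^{\pi_D}\,\|\,\rho_T^{\pi}) = \tfrac12 D_{KL}(\rho_T^{\pi_D}\,\|\,M) + \tfrac12 D_{KL}(\rho_T^{\pi}\,\|\,M)$. The key observation is that the total variation distance from each endpoint to the midpoint is exactly half of $D_{TV}(\rho_T^{\pi_D}\,\|\,\rho_T^{\pi})$: since $\rho_T^{\pi_D} - M = \tfrac12(\rho_T^{\pi_D} - \rho_T^{\pi})$, Fact~\ref{fact:tv-l1} gives $D_{TV}(\rho_T^{\pi_D}\,\|\,M) = \tfrac12 D_{TV}(\rho_T^{\pi_D}\,\|\,\rho_T^{\pi})$, and similarly for $\rho_T^{\pi}$. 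Applying Pinsker's inequality $D_{TV}(p\,\|\,q)^2 \le \tfrac12 D_{KL}(p\,\|\,q)$ to each of the two KL terms then yields $D_{KL}(\rho_T^{\pi_D}\,\|\,M) \ge 2 D_{TV}(\rho_T^{\pi_D}\,\|\,M)^2 = \tfrac12 D_{TV}(\rho_T^{\pi_D}\,\|\,\rho_T^{\pi})^2$ and likewise for $\rho_T^{\pi}$; averaging gives $D_{JS}(\rho_T^{\pi_D}\,\|\,\rho_T^{\pi}) \ge \tfrac12 D_{TV}(\rho_T^{\pi_D}\,\|\,\rho_T^{\pi})^2$, hence $D_{TV}(\rho_T^{\pi_D}\,\|\,\rho_T^{\pi}) \le \sqrt{2\,D_{JS}(\rho_T^{\pi_D}\,\|\,\rho_T^{\pi})} \le \sqrt{2\epsilon_{GAIL}}$.

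Finally I would apply Lemma~\ref{lemma:err-tv-occu2} with $\rho_1 = \rho_T^{\pi_D}(s,a)$ and $\rho_2 = \rho_T^{\pi}(s,a)$ to conclude $|R(\pi_D,T) - R(\pi,T)| = |R(\rho_T^{\pi_D}) - R(\rho_T^{\pi})| \le D_{TV}(\rho_T^{\pi_D}\,\|\,\rho_T^{\pi})\, r^{\max}/(1-\gamma) \le \sqrt{2\epsilon_{GAIL}}\, r^{\max}/(1-\gamma)$. The main (and essentially only non-mechanical) obstacle is the $D_{TV}^2 \le 2 D_{JS}$ step; note that, in contrast to the Behavioral Cloning corollary, the hypothesis here constrains the divergence of the joint state--action occupancy measures rather than the conditional policies, so we apply Lemma~\ref{lemma:err-tv-occu2} directly instead of routing through Theorem~\ref{thm:err-tv-policy2}, which is exactly why the resulting bound scales linearly, rather than quadratically, in the expected horizon $(1-\gamma)^{-1}$.
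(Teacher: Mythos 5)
Your proof is correct and follows essentially the same route as the paper's: reduce the Jensen--Shannon hypothesis to $D_{TV}(\rho_T^{\pi_D}\,\|\,\rho_T^{\pi})\le\sqrt{2\epsilon_{GAIL}}$ via Pinsker's inequality, then apply Lemma~\ref{lemma:err-tv-occu2} directly to the occupancy measures. Your use of the exact midpoint identity $D_{TV}(P\,\|\,M)=\tfrac12 D_{TV}(P\,\|\,Q)$ is a slightly cleaner way to obtain $D_{JS}\ge\tfrac12 D_{TV}^2$ than the paper's route through the triangle inequality and $2a^2+2b^2\ge c^2$, but it is the same argument in substance.
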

\begin{proof}
	By definition of the JSD, for any distributions $P,~Q$ and their average $M=(P+Q)/2$ we know
	\begin{equation*}
	D_{JS}(P\lvert\rvert Q) = \frac{1}{2}\big[D_{KL}(P\lvert\rvert M)+ D_{KL}(Q\lvert\rvert M)\big]\geq D_{TV}(P\lvert\rvert M)^2+ D_{TV}(Q\lvert\rvert M)^2\geq \frac{1}{2}D_{TV}(P\lvert\rvert Q)^2,
	\end{equation*}
	where the first inequality follows from Pinsker's Inequality, and the second inequality holds because that $D_{TV}(P\lvert\rvert M)+D_{TV}(Q\lvert\rvert M)\geq D_{TV}(P\lvert\rvert Q)$ by triangle inequality and that $2a^2+2b^2\geq c^2$ if $a+b\geq c\geq 0$.
	
	Thus, we know $D_{TV}(\rho_{T}^{\pi_D}\lvert\rvert \rho_{T}^{\pi})\leq \sqrt{2\epsilon_{GAIL}}$. Applying Lemma~\ref{lemma:err-tv-occu2} completes the proof.
\end{proof}
\subsection{MBRL with Absolutely Continuous Stochastic Transitions}
\begin{theorem}[Error of Absolutely Continuous Stochastic Transitions]
	Let $\pi_D$, $T$ and $\hat{T}$ be the sampling policy, the real and the learned transitions. If $0\leq r(s,a)\leq r^{\max}$ and the error in one-step total variation distance is $\epsilon_{T,\hat{T}}^{\pi_D}=\E_{(s,a)\sim \rho_T^{\pi_D}}[D_{TV}(T(\cdot|s,a)\lvert\rvert \hat{T}(\cdot|s,a))]$, then $|R(\pi_D,T)-R(\pi_D,\hat{T})|\leq \epsilon_{T,\hat{T}}^{\pi_D}  r^{\max}\gamma(1-\gamma)^{-2}$.
	\label{thm:err-tv-trans2}
\end{theorem}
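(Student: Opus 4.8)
The plan is to follow the proof of Theorem~\ref{thm:err-tv-policy2} almost verbatim, since the present statement is its ``transition-side'' analogue. By Lemma~\ref{lemma:err-tv-occu2}, it suffices to bound the total variation distance between the occupancy measures $\rho_T^{\pi_D}(s,a)$ and $\rho_{\hat{T}}^{\pi_D}(s,a)$ and then multiply by $r^{\max}/(1-\gamma)$.

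First I would exploit that both rollouts use the \emph{same} policy $\pi_D$, so each occupancy measure factors as $\rho_{\,\cdot\,}^{\pi_D}(s,a)=\rho_{\,\cdot\,}^{\pi_D}(s)\pi_D(a|s)$. Since the conditional factor is identical, $D_{TV}(\rho_T^{\pi_D}(s,a)\,\|\,\rho_{\hat{T}}^{\pi_D}(s,a))=D_{TV}(\rho_T^{\pi_D}(s)\,\|\,\rho_{\hat{T}}^{\pi_D}(s))$; note this is an equality, which is precisely why the final bound carries no extra $(1-\gamma)^{-1}$ term compared with Theorem~\ref{thm:err-tv-policy2}. Next I would apply the Symmetry Bridge Lemma (Lemma~\ref{lemma:sym-bridge2}) with $B_2=B_{\pi_D,\hat{T}}$, whose fixed point is $\rho_{\hat{T}}^{\pi_D}(s)$ and which is a $\gamma$-contraction w.r.t.\ TV (Fact~\ref{fact:contra-l1}), giving $D_{TV}(\rho_T^{\pi_D}(s)\,\|\,\rho_{\hat{T}}^{\pi_D}(s))\le \tfrac{1}{1-\gamma}D_{TV}(\rho_T^{\pi_D}(s)\,\|\,B_{\pi_D,\hat{T}}(\rho_T^{\pi_D}(s)))$.

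Then I would use the fixed-point identity $\rho_T^{\pi_D}(s)=B_{\pi_D,T}(\rho_T^{\pi_D}(s))$ (Fact~\ref{fact:fixed-pt}) to replace the left argument inside the last $D_{TV}$, so that the quantity to bound becomes $\tfrac{1}{1-\gamma}D_{TV}(B_{\pi_D,T}(\rho_T^{\pi_D})\,\|\,B_{\pi_D,\hat{T}}(\rho_T^{\pi_D}))$. Expanding the two Bellman flow operators, their difference is $\gamma\int \big(T(s|s',a')-\hat{T}(s|s',a')\big)\pi_D(a'|s')\rho_T^{\pi_D}(s')\,ds'da'$; pushing the absolute value inside the integral and integrating out $s$ turns this into $\gamma\,\E_{(s',a')\sim\rho_T^{\pi_D}}D_{TV}(T(\cdot|s',a')\,\|\,\hat{T}(\cdot|s',a'))=\gamma\,\epsilon_{T,\hat{T}}^{\pi_D}$. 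Hence $D_{TV}(\rho_T^{\pi_D}(s)\,\|\,\rho_{\hat{T}}^{\pi_D}(s))\le \tfrac{\gamma}{1-\gamma}\epsilon_{T,\hat{T}}^{\pi_D}$, and combining with Lemma~\ref{lemma:err-tv-occu2} gives $|R(\pi_D,T)-R(\pi_D,\hat{T})|\le \tfrac{\gamma}{1-\gamma}\epsilon_{T,\hat{T}}^{\pi_D}\cdot\tfrac{r^{\max}}{1-\gamma}=\epsilon_{T,\hat{T}}^{\pi_D}r^{\max}\gamma(1-\gamma)^{-2}$.

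There is no genuine obstacle here; the argument is a mechanical transcription of the policy-mismatch proof with $(\pi_D,\pi)$ replaced by $(T,\hat{T})$. The only points requiring care are bookkeeping: the absolute-continuity hypothesis is what lets us work with (generalized) densities and invoke Fact~\ref{fact:tv-l1} and Fact~\ref{fact:contra-l1}; and one must check that the measure produced when the $s$-integral is carried out above is exactly $\rho_T^{\pi_D}(s)\pi_D(a|s)=\rho_T^{\pi_D}(s,a)$, matching the measure in the definition of $\epsilon_{T,\hat{T}}^{\pi_D}$.
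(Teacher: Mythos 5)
Your proposal is correct and follows the paper's own proof essentially step for step: reduce to a TV bound on occupancy measures via Lemma~\ref{lemma:err-tv-occu2}, use the equality of joint and marginal TV distances under the shared policy $\pi_D$, then apply the symmetry bridge with $B_{\pi_D,\hat{T}}$ and the fixed-point identity to extract $\tfrac{\gamma}{1-\gamma}\epsilon_{T,\hat{T}}^{\pi_D}$. No gaps; the observation that the joint-to-marginal step is an equality (so no extra $(1-\gamma)^{-1}$ appears) matches the paper's reasoning exactly.
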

\begin{proof}
	If there is a upper bound for $D_{TV}(\rho_T^{\pi_D}(s,a)\lvert\rvert\rho_{\hat{T}}^{\pi_D}(s,a))$, by Lemma~\ref{lemma:err-tv-occu2}, we are done. Also, observe that 
	\begin{equation*}
	D_{TV}(\rho_T^{\pi_D}(s,a)\lvert\rvert\rho_{\hat{T}}^{\pi_D}(s,a))=\frac{1}{2}\int\pi_D(a|s) \big\lvert\rho_T^{\pi_D}(s)-\rho_{\hat{T}}^{\pi_D}(s)\big\rvert dsda=D_{TV}(\rho_T^{\pi_D}(s)\lvert\rvert\rho_{\hat{T}}^{\pi_D}(s)),
	\end{equation*}
	so $D_{TV}(\rho_T^{\pi_D}(s)\lvert\rvert\rho_{\hat{T}}^{\pi_D}(s))$ is of interest. Employing the properties of Bellman flow operator, we have
	\begin{equation*}
	\begin{split}
	D_{TV}(\rho_T^{\pi_D}(s)\lvert\rvert\rho_{\hat{T}}^{\pi_D}(s))\leq&\frac{1}{1-\gamma} D_{TV}(\rho_T^{\pi_D}(s)\lvert\rvert B_{\pi_D,\hat{T}}(\rho_T^{\pi_D}(s)))\\
	=&\frac{1}{1-\gamma}D_{TV}(B_{\pi_D,T}(\rho_T^{\pi_D}(s))\lvert\rvert B_{\pi_D,\hat{T}}(\rho_T^{\pi_D}(s)))\\ 
	=&\frac{1}{2(1-\gamma)}\int \Big\lvert \gamma\int \big(T(s|s',a')-\hat{T}(s|s',a')\big)\pi_D(a'|s') \rho_T^{\pi_D}(s')ds'da'\Big\rvert ds\\
	\leq& \frac{\gamma}{2(1-\gamma)}\int \Big\lvert \big(T(s|s',a')-\hat{T}(s|s',a')\big) \Big\rvert\rho_T^{\pi_D}(s',a')dsds'da'\\
	=&\frac{\gamma}{1-\gamma} \E_{(s,a)\sim\rho_T^{\pi_D} } D_{TV}(T(\cdot|s,a)\lvert\rvert \hat{T}(\cdot|s,a))= \frac{\gamma}{1-\gamma}\epsilon_{T,\hat{T}}^{\pi_D},
	\end{split}
	\end{equation*}
	where the top two lines follows from the symmetry bridge property (Lemma~\ref{lemma:sym-bridge2}) and the fixed-point property.
	Finally, from Lemma~\ref{lemma:err-tv-occu2}, we conclude that
	$$|R(\pi_D,T)-R(\pi_D,\hat{T})|\leq \epsilon_{T,\hat{T}}^{\pi_D} r^{\max}\frac{\gamma}{(1-\gamma)^2}$$
\end{proof}

\begin{corollary}[Error of MBRL with Absolutely Continuous Stochastic Transition]
	Let $\pi_D$, $\pi$, $T$ and $\hat{T}$ be the sampling policy, the agent policy, the real transition and the learned transition. If $0\leq r(s,a)\leq r^{\max}$ and the discrepancies are $\epsilon_{T,\hat{T}}^{\pi_D}=\E_{(s,a)\sim\rho_T^{\pi_D}}D_{TV}(T(\cdot|s,a)\lvert\rvert \hat{T}(\cdot|s,a))$ and $\epsilon_{\pi_D,\pi}^{T,\gamma}=\E_{s\sim\rho_T^{\pi_D}}D_{TV}(\pi_D(\cdot|s)\lvert\rvert \pi(\cdot|s))$, then $|R(\pi,T)-R(\pi,\hat{T})|\leq (\epsilon_{T,\hat{T}}^{\pi_D}+\epsilon_{\pi_D,\pi}^{T,\gamma}+\epsilon_{\pi_D,\pi}^{\hat{T},\gamma})r^{\max}\gamma/(1-\gamma)^2 +(\epsilon_{\pi_D,\pi}^{T,\gamma}+\epsilon_{\pi_D,\pi}^{\hat{T},\gamma})r^{\max}/(1-\gamma)$.
\end{corollary}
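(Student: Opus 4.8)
The plan is to derive the bound from the three-way triangle inequality already recorded in Eq.~\eqref{eq:decomp}, inserting first the intermediate quantity $R(\pi_D,T)$ and then $R(\pi_D,\hat{T})$, and to control each of the three resulting pieces by a result that is already available. Concretely, I would start from
\[
|R(\pi,T)-R(\pi,\hat{T})| \le |R(\pi,T)-R(\pi_D,T)| + |R(\pi_D,T)-R(\pi_D,\hat{T})| + |R(\pi_D,\hat{T})-R(\pi,\hat{T})|,
\]
and then handle the three summands in order.

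For the first summand, I would apply Theorem~\ref{thm:err-tv-policy2} (Error of Policies) with the real transition $T$, which gives $|R(\pi,T)-R(\pi_D,T)|\le \epsilon_{\pi_D,\pi}^{T,\gamma}\,r^{\max}\big(\tfrac{1}{1-\gamma}+\tfrac{\gamma}{(1-\gamma)^2}\big)$. For the middle summand, Theorem~\ref{thm:err-tv-trans2} (Error of Absolutely Continuous Stochastic Transitions) applies directly and yields $|R(\pi_D,T)-R(\pi_D,\hat{T})|\le \epsilon_{T,\hat{T}}^{\pi_D}\,r^{\max}\gamma(1-\gamma)^{-2}$. Summing the three bounds and collecting the $\gamma(1-\gamma)^{-2}$ and $(1-\gamma)^{-1}$ coefficients then reproduces the stated inequality, so no new estimate is required beyond these two theorems.

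The one point needing a small argument is the third summand, $|R(\pi_D,\hat{T})-R(\pi,\hat{T})|$. Theorem~\ref{thm:err-tv-policy2} is stated for a pair (sampling policy, agent policy) with the policy-discrepancy expectation taken under the occupancy of the \emph{first} policy, whereas here the transition is $\hat{T}$ and the constant $\epsilon_{\pi_D,\pi}^{\hat{T},\gamma}$ measures the discrepancy under $\rho_{\hat{T}}^{\pi}$, the occupancy of $\pi$. I would therefore invoke Theorem~\ref{thm:err-tv-policy2} with the roles reversed — treating $\pi$ as the ``sampling'' policy, $\pi_D$ as the ``agent'' policy, and $\hat{T}$ as the transition — to obtain a bound in terms of $\E_{s\sim\rho_{\hat{T}}^{\pi}}D_{TV}(\pi(\cdot|s)\,\|\,\pi_D(\cdot|s))$, and then use the symmetry of total variation distance, $D_{TV}(\pi(\cdot|s)\,\|\,\pi_D(\cdot|s))=D_{TV}(\pi_D(\cdot|s)\,\|\,\pi(\cdot|s))$, to identify this expectation with $\epsilon_{\pi_D,\pi}^{\hat{T},\gamma}$. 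This role-swap plus symmetry step is essentially the only subtlety; the rest of the proof is just adding three inequalities, and in particular no further contraction argument is needed.
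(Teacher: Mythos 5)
Your proposal is correct and follows essentially the same route as the paper: the three-term triangle inequality of Eq.~\eqref{eq:decomp}, with Theorem~\ref{thm:err-tv-policy} applied to the two policy-mismatch terms and Theorem~\ref{thm:err-tv-trans} to the transition-mismatch term. The role-swap-plus-TV-symmetry step you spell out for the third summand is exactly what makes the constant come out as $\epsilon_{\pi_D,\pi}^{\hat{T},\gamma}$ (an expectation under $\rho_{\hat{T}}^{\pi}$); the paper leaves this implicit, so your version is if anything slightly more careful.
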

\begin{proof}
	Observe that $|R(\pi,T)-R(\pi,\hat{T})|\leq |R(\pi,T)-R(\pi_D,T)|+|R(\pi_D,T)-R(\pi_D,\hat{T})|+|R(\pi_D,\hat{T})-R(\pi,\hat{T})|$. Combining Theorem~\ref{thm:err-tv-trans2} and~\ref{thm:err-tv-policy2}, the result follows.
\end{proof}

\begin{lemma}
	Let $\gamma> \beta$ be discount factors of long and short rollouts. Let $\pi_D$ and $T$ be the sampling policy and the real transition, then $D_{TV}(\rho_{T,\gamma}^{\pi_D}\lvert\rvert \rho_{T,\beta}^{\rho_{T,\gamma}^{\pi_D},\pi_D})\leq (1-\gamma)\beta/(\gamma-\beta)$.
	\label{lemma:short_occu_bound2}
\end{lemma}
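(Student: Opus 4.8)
The plan is to avoid the explicit geometric series for occupancy measures and instead route everything through the Bellman flow machinery already established, in particular the symmetry bridge Lemma~\ref{lemma:sym-bridge2}. First I would identify the short (branched) occupancy measure $\nu\triangleq\rho_{T,\beta}^{\rho_{T,\gamma}^{\pi_D},\pi_D}$ as a fixed point: by Fact~\ref{fact:fixed-pt} (applied with ``initial'' state distribution $\rho_{T,\gamma}^{\pi_D}$, policy $\pi_D$, transition $T$, and discount $\beta$), $\nu$ is the unique fixed point of the operator $B_\beta(\mu)\triangleq(1-\beta)\rho_{T,\gamma}^{\pi_D}+\beta Q\mu$, where $Q$ denotes the one-step state-transition operator $(Qg)(s)=\int T(s|s',a')\pi_D(a'|s')g(s')\,ds'da'$. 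By Fact~\ref{fact:contra-l1} (with $\gamma$ replaced by $\beta$), $B_\beta$ is a $\beta$-contraction w.r.t.\ TV distance.

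Next I would apply Lemma~\ref{lemma:sym-bridge2} with $B_2=B_\beta$, its fixed point $\rho_2=\nu$, and $\rho_1=\rho_{T,\gamma}^{\pi_D}$, giving $D_{TV}(\rho_{T,\gamma}^{\pi_D}\,\|\,\nu)\le(1-\beta)^{-1}D_{TV}(\rho_{T,\gamma}^{\pi_D}\,\|\,B_\beta(\rho_{T,\gamma}^{\pi_D}))$. The residual is $\rho_{T,\gamma}^{\pi_D}-B_\beta(\rho_{T,\gamma}^{\pi_D})=\beta(\rho_{T,\gamma}^{\pi_D}-Q\rho_{T,\gamma}^{\pi_D})$, and I would simplify $\rho_{T,\gamma}^{\pi_D}-Q\rho_{T,\gamma}^{\pi_D}$ using the fixed-point equation for the long occupancy measure itself (Fact~\ref{fact:fixed-pt} with discount $\gamma$): from $\rho_{T,\gamma}^{\pi_D}=(1-\gamma)\rho_0+\gamma Q\rho_{T,\gamma}^{\pi_D}$ one gets $Q\rho_{T,\gamma}^{\pi_D}=\gamma^{-1}\rho_{T,\gamma}^{\pi_D}-\gamma^{-1}(1-\gamma)\rho_0$, hence $\rho_{T,\gamma}^{\pi_D}-Q\rho_{T,\gamma}^{\pi_D}=\frac{1-\gamma}{\gamma}(\rho_0-\rho_{T,\gamma}^{\pi_D})$. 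Therefore $D_{TV}(\rho_{T,\gamma}^{\pi_D}\,\|\,B_\beta(\rho_{T,\gamma}^{\pi_D}))=\frac{\beta(1-\gamma)}{\gamma}D_{TV}(\rho_0\,\|\,\rho_{T,\gamma}^{\pi_D})\le\frac{\beta(1-\gamma)}{\gamma}$, since TV distance is at most $1$. Combining, $D_{TV}(\rho_{T,\gamma}^{\pi_D}\,\|\,\nu)\le\frac{\beta(1-\gamma)}{\gamma(1-\beta)}$, and since $\gamma(1-\beta)=\gamma-\gamma\beta\ge\gamma-\beta$ (because $\gamma<1$), this is at most $\frac{(1-\gamma)\beta}{\gamma-\beta}$, the claimed bound.

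As an independent check I would expand $\rho_{T,\gamma}^{\pi_D}=(1-\gamma)\sum_{i\ge0}\gamma^i f_i$ and, using $Q^jf_i=f_{i+j}$, write $\nu=(1-\beta)(1-\gamma)\sum_{i,j\ge0}\beta^j\gamma^i f_{i+j}$; collecting the coefficient of $f_n$ (a finite geometric sum) and bounding $D_{TV}$ by one half the $\ell_1$ norm of the coefficient difference yields, after simplification, the coefficient $\frac{\beta(1-\gamma)}{\gamma-\beta}\bigl|(1-\beta)\beta^n-(1-\gamma)\gamma^n\bigr|$, whose sum over $n$ is at most $\frac{2(1-\gamma)\beta}{\gamma-\beta}$ by $|a-b|\le a+b$, recovering the bound. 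I expect the main obstacle to be bookkeeping rather than ideas: one must verify the Bellman-flow Facts apply verbatim when the ``initial distribution'' is itself an occupancy measure and the discount is $\beta<\gamma$ (they do, being stated for arbitrary initial distribution and discount in $(0,1)$), and one must be careful that the $\ell_1$/generalized-density identities of Fact~\ref{fact:tv-l1} stay valid when $\rho_0$ or the $f_i$ carry atomic parts (as under deterministic transitions); for this lemma it suffices that the total-variation norm on signed measures is subadditive, so no absolute-continuity hypothesis is actually needed.
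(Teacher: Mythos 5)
Your primary argument is correct and takes a genuinely different route from the paper's. The paper proves this lemma by direct computation: it expands both occupancy measures as series in the step densities $f_i$, evaluates the coefficient of each $f_i$ (a finite geometric sum), and bounds half the $\ell_1$ norm of the coefficient differences, using a sign-flip argument at some index $M$ to arrive at $\frac{(1-\gamma)\beta}{\gamma-\beta}(\gamma^M-\beta^M)\le\frac{(1-\gamma)\beta}{\gamma-\beta}$. You instead recognize the branched occupancy measure as the fixed point of the $\beta$-discounted Bellman flow operator seeded at $\rho_{T,\gamma}^{\pi_D}$, invoke the symmetry bridge lemma to reduce everything to the one-step residual $\rho_{T,\gamma}^{\pi_D}-B_\beta(\rho_{T,\gamma}^{\pi_D})$, and evaluate that residual exactly as $\frac{\beta(1-\gamma)}{\gamma}(\rho_0-\rho_{T,\gamma}^{\pi_D})$ via the $\gamma$-discounted fixed-point equation. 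This is cleaner, more in the spirit of the paper's own contraction framework, and in fact yields the slightly sharper constant $\frac{\beta(1-\gamma)}{\gamma(1-\beta)}\le\frac{(1-\gamma)\beta}{\gamma-\beta}$ (the inequality holding because $\gamma<1$). Your ``independent check'' is essentially the paper's proof, with the crude bound $|a-b|\le a+b$ standing in for the paper's sign analysis; both land on the same final constant after the factor of $1/2$. Two minor housekeeping points: (i) your contraction argument runs on state marginals, so to conclude for the joint state-action occupancy measures you need the one-line observation, already used in the paper's proof of Theorem 1, that the TV distance of the joints equals that of the marginals when both share the policy $\pi_D$; (ii) your remarks on applying the fixed-point and contraction facts with an occupancy measure as the initial distribution, and on atomic parts, are correct and require no further work.
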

\begin{proof}
	Since $\rho_{T,\gamma}^{\pi_D}$ is generated by the triple $(\rho_0,\pi_D,T)$ with discount factor $\gamma$ while $\rho_{T,\beta}^{\rho_{T,\gamma}^{\pi_D},\pi_D}$ is generated by $(\rho_{T,\gamma}^{\pi_D},\pi_D,T)$ with discount factor $\beta$. By definition of the occupancy measure we have
	\begin{equation*}
	\begin{split}
	&\rho_{T,\gamma}^{\pi_D}(s,a)=\sum\limits_{i=0}^\infty (1-\gamma)\gamma^i f_i(s,a).\\
	&\rho_{T,\beta}^{\rho_{T,\gamma}^{\pi_D},\pi_D}(s,a)=\sum\limits_{i=0}^\infty \sum\limits_{j=0}^i (1-\gamma)\gamma^{i-j}(1-\beta)\beta^j f_i(s,a),
	\end{split}
	\end{equation*}
	where $f_i(s,a)$ is the density of $(s,a)$ at time $i$ if generated by the triple $(\rho_0,\pi_D,T)$. Then,
	\begin{equation*}
	\begin{split}
	D_{TV}(\rho_{T,\gamma}^{\pi_D}\lvert\rvert \rho_{T,\beta}^{\rho_{T,\gamma}^{\pi_D},\pi_D})&\leq \frac{1}{2}\sum\limits_{i=0}^\infty\Big|(1-\gamma)\gamma^i - \sum\limits_{j=0}^i(1-\gamma)\gamma^{i-j}(1-\beta)\beta^j \Big|=\frac{1}{2}\sum\limits_{i=0}^\infty (1-\gamma)\gamma^i\Big| 1-\sum\limits_{j=0}^i (1-\beta)\Big(\frac{\beta}{\gamma}\Big)^j \Big|\\
	&=\frac{1}{2}\sum\limits_{i=0}^\infty (1-\gamma)\gamma^i\frac{1}{\gamma-\beta}\Big|-\beta(1-\gamma) + \Big(\frac{\beta}{\gamma}\Big)^{i+1}(1-\beta)\gamma  \Big|\\
	&\overset{(*)}{=} \frac{(1-\gamma)\beta}{\gamma-\beta}\sum\limits_{i=0}^{M-1} -(1-\gamma)\gamma^i + (1-\beta)\beta^i=\frac{(1-\gamma)\beta}{\gamma-\beta}(\gamma^{M}-\beta^{M})\\
	&\leq \frac{(1-\gamma)\beta}{\gamma-\beta}.
	\end{split}
	\end{equation*}
	where $(*)$ comes from that $-\beta(1-\gamma)+(\frac{\beta}{\gamma})^i(1-\beta)\gamma$ is a strictly decreasing function in $i$. Since $\gamma>\beta$, its sign flips from $+$ to $-$ at some index; say $M$. Finally, the sum of the absolute value are the same between $\sum_{i=0}^{M-1}$ and $\sum_{i=M}^\infty$ because the total probability is conservative, and the difference on one side is the same as that on the other.
\end{proof}

\begin{corollary}[Error of MBRL with A. C. Stochastic Transition and Branched Rollouts]
	Let $\gamma> \beta$ be discount factors of long and short rollouts. Let $\pi_D$, $\pi$, $T$ and $\hat{T}$ be sampling policy, agent policy, real transition and learned transition. If $0\leq r(s,a)\leq r^{\max}$ and the discrepancies are $\epsilon_{\pi_D,\pi}^{T,\gamma}=\E_{s\sim\rho_{T,\gamma}^{\pi_D}}D_{TV}(\pi_D(\cdot|s)\lvert\rvert \pi(\cdot|s))$,\\ $\epsilon_{\pi_D,\pi}^{\hat{T},\beta}=\E_{s\sim\rho_{\hat{T},\beta}^{\rho_{T,\gamma}^{\pi_D},\pi}}D_{TV}(\pi_D(\cdot|s)\lvert\rvert \pi(\cdot|s))$,~~~and~,$\epsilon_{T,\hat{T}}^{\pi_D,\beta}=\E_{(s,a)\sim\rho_{T,\beta}^{\rho_{T,\gamma}^{\pi_D},\pi_D}}D_{TV}(T(\cdot|s,a)\lvert\rvert \hat{T}(\cdot|s,a))$, then $$\Big|R_\gamma(\rho_0,\pi,T)-\frac{1-\beta}{1-\gamma}R_\beta(\rho_{T,\gamma}^{\pi_D},\pi,\hat{T})\Big|\leq r^{\max}\Big(\frac{\epsilon_{\pi_D,\pi}^{T,\gamma}\gamma}{(1-\gamma)^2} + \frac{(\epsilon_{T,\hat{T}}^{\pi_D,\beta} + \epsilon_{\pi_D,\pi}^{\hat{T},\beta})\beta}{(1-\beta)(1-\gamma)} + \frac{\epsilon_{\pi_D,\pi}^{T,\gamma}+\epsilon_{\pi_D,\pi}^{\hat{T},\beta}}{1-\gamma} + \frac{\beta}{\gamma-\beta}\Big)$$
	\label{cor:err-branch2}
\end{corollary}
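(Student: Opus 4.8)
The plan is to bound the left-hand side by a four-term telescoping decomposition (the one displayed in the proof sketch of the corresponding corollary in the main text) and then invoke Theorem~\ref{thm:err-tv-policy2}, Theorem~\ref{thm:err-tv-trans2}, Lemma~\ref{lemma:err-tv-occu2}, and Lemma~\ref{lemma:short_occu_bound2} term by term. Concretely, I would insert $R_\gamma(\rho_0,\pi_D,T)$, then $\frac{1-\beta}{1-\gamma}R_\beta(\rho_{T,\gamma}^{\pi_D},\pi_D,T)$, then $\frac{1-\beta}{1-\gamma}R_\beta(\rho_{T,\gamma}^{\pi_D},\pi_D,\hat{T})$ between the two quantities and apply the triangle inequality, producing: (i) a long-rollout policy mismatch $|R_\gamma(\rho_0,\pi,T)-R_\gamma(\rho_0,\pi_D,T)|$; (ii) a long-versus-branched discount mismatch $|R_\gamma(\rho_0,\pi_D,T)-\frac{1-\beta}{1-\gamma}R_\beta(\rho_{T,\gamma}^{\pi_D},\pi_D,T)|$; (iii) a branched transition mismatch $\frac{1-\beta}{1-\gamma}|R_\beta(\rho_{T,\gamma}^{\pi_D},\pi_D,T)-R_\beta(\rho_{T,\gamma}^{\pi_D},\pi_D,\hat{T})|$; and (iv) a branched policy mismatch $\frac{1-\beta}{1-\gamma}|R_\beta(\rho_{T,\gamma}^{\pi_D},\pi_D,\hat{T})-R_\beta(\rho_{T,\gamma}^{\pi_D},\pi,\hat{T})|$.

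The first observation I would make is that Theorems~\ref{thm:err-tv-policy2} and~\ref{thm:err-tv-trans2} hold verbatim with the reference pair $(\rho_0,\gamma)$ replaced by any initial state distribution and discount factor, in particular $(\rho_{T,\gamma}^{\pi_D},\beta)$, since their proofs use only that the Bellman flow operator is a contraction in TV distance with modulus equal to the discount factor, together with Lemma~\ref{lemma:sym-bridge2}; moreover the symmetry of $D_{TV}$ lets me interchange $\pi$ and $\pi_D$. Hence term (i) is $\le \epsilon_{\pi_D,\pi}^{T,\gamma}r^{\max}\big(\frac{1}{1-\gamma}+\frac{\gamma}{(1-\gamma)^2}\big)$ by Theorem~\ref{thm:err-tv-policy2} at discount $\gamma$; term (iii) is $\le \frac{1-\beta}{1-\gamma}\epsilon_{T,\hat{T}}^{\pi_D,\beta}r^{\max}\frac{\beta}{(1-\beta)^2}=\epsilon_{T,\hat{T}}^{\pi_D,\beta}r^{\max}\frac{\beta}{(1-\gamma)(1-\beta)}$ by Theorem~\ref{thm:err-tv-trans2} at discount $\beta$ with initial distribution $\rho_{T,\gamma}^{\pi_D}$, whose real-transition occupancy at discount $\beta$ is exactly $\rho_{T,\beta}^{\rho_{T,\gamma}^{\pi_D},\pi_D}$ (matching the definition of $\epsilon_{T,\hat{T}}^{\pi_D,\beta}$); and term (iv) is $\le \frac{1-\beta}{1-\gamma}\epsilon_{\pi_D,\pi}^{\hat{T},\beta}r^{\max}\big(\frac{1}{1-\beta}+\frac{\beta}{(1-\beta)^2}\big)=\epsilon_{\pi_D,\pi}^{\hat{T},\beta}r^{\max}\big(\frac{1}{1-\gamma}+\frac{\beta}{(1-\gamma)(1-\beta)}\big)$ by the $\pi\leftrightarrow\pi_D$-swapped Theorem~\ref{thm:err-tv-policy2} at discount $\beta$, whose expectation over $\rho_{\hat{T},\beta}^{\rho_{T,\gamma}^{\pi_D},\pi}$ is exactly $\epsilon_{\pi_D,\pi}^{\hat{T},\beta}$.

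Term (ii) is the only one not covered by an earlier theorem, so I would handle it directly. Using $R_\gamma(\rho_0,\pi_D,T)=\frac{1}{1-\gamma}\E_{\rho_{T,\gamma}^{\pi_D}}[r]$ and $\frac{1-\beta}{1-\gamma}R_\beta(\rho_{T,\gamma}^{\pi_D},\pi_D,T)=\frac{1}{1-\gamma}\E_{\rho_{T,\beta}^{\rho_{T,\gamma}^{\pi_D},\pi_D}}[r]$, term (ii) equals $\frac{1}{1-\gamma}\big|\E_{\rho_{T,\gamma}^{\pi_D}}[r]-\E_{\rho_{T,\beta}^{\rho_{T,\gamma}^{\pi_D},\pi_D}}[r]\big|$; the $\max(\rho_1,\rho_2)$ argument from the proof of Lemma~\ref{lemma:err-tv-occu2} bounds this by $\frac{r^{\max}}{1-\gamma}D_{TV}(\rho_{T,\gamma}^{\pi_D}\,\|\,\rho_{T,\beta}^{\rho_{T,\gamma}^{\pi_D},\pi_D})$, and Lemma~\ref{lemma:short_occu_bound2} caps the TV distance at $\frac{(1-\gamma)\beta}{\gamma-\beta}$, giving $\frac{r^{\max}\beta}{\gamma-\beta}$. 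Adding (i)--(iv) and grouping by denominator then yields exactly the claimed bound: $\frac{\epsilon_{\pi_D,\pi}^{T,\gamma}\gamma}{(1-\gamma)^2}$ from (i), $\frac{(\epsilon_{T,\hat{T}}^{\pi_D,\beta}+\epsilon_{\pi_D,\pi}^{\hat{T},\beta})\beta}{(1-\beta)(1-\gamma)}$ from (iii) and (iv), $\frac{\epsilon_{\pi_D,\pi}^{T,\gamma}+\epsilon_{\pi_D,\pi}^{\hat{T},\beta}}{1-\gamma}$ from (i) and (iv), and $\frac{\beta}{\gamma-\beta}$ from (ii).

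The main obstacle is not a hard inequality but careful bookkeeping: verifying that each error constant in the statement is evaluated against precisely the occupancy measure produced by the shifted (initial distribution $\rho_{T,\gamma}^{\pi_D}$, discount $\beta$) forms of Theorems~\ref{thm:err-tv-policy2} and~\ref{thm:err-tv-trans2}, tracking the $\frac{1-\beta}{1-\gamma}$ rescaling that appears when the branched return is renormalized to be comparable with the long-rollout return, and using the symmetry of $D_{TV}$ to orient the two policy-mismatch terms correctly.
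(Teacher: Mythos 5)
Your proposal is correct and follows essentially the same route as the paper's proof: the identical four-term triangle-inequality decomposition, Theorem~\ref{thm:err-tv-policy2} for terms (i) and (iv) (with the $\pi\leftrightarrow\pi_D$ swap and the $(\rho_{T,\gamma}^{\pi_D},\beta)$ reparametrization), Theorem~\ref{thm:err-tv-trans2} for term (iii), and the $\max(\rho_1,\rho_2)$ argument from Lemma~\ref{lemma:err-tv-occu2} combined with Lemma~\ref{lemma:short_occu_bound2} for term (ii). The bookkeeping of the $\frac{1-\beta}{1-\gamma}$ rescaling and the final grouping also match the paper exactly.
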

\begin{proof}
	Expand with the triangle inequality:
	\begin{equation*}
	\begin{split}
	&\Big|R_\gamma(\rho_0,\pi,T)-\frac{1-\beta}{1-\gamma}R_\beta(\rho_{T,\gamma}^{\pi_D},\pi,\hat{T})\Big|\\
	\leq&\Big|R_\gamma(\rho_0,\pi,T)-R_\gamma(\rho_0,\pi_D,T)\Big|+\Big|R_\gamma(\rho_0,\pi_D,T)-\frac{1-\beta}{1-\gamma}R_\beta(\rho_{T,\gamma}^{\pi_D},\pi_D,T)\Big|+\\
	&\frac{1-\beta}{1-\gamma}\Big|R_\beta(\rho_{T,\gamma}^{\pi_D},\pi_D,T)-R_\beta(\rho_{T,\gamma}^{\pi_D},\pi_D,\hat{T})\Big|+\frac{1-\beta}{1-\gamma}\Big|R_\beta(\rho_{T,\gamma}^{\pi_D},\pi_D,\hat{T})-R_\beta(\rho_{T,\gamma}^{\pi_D},\pi,\hat{T})\Big|
	\end{split}
	\end{equation*}
	By Theorem~\ref{thm:err-tv-policy2}, the first term $\leq \epsilon_{\pi_D,\pi}^{T,\gamma} r^{\max}\Big(\frac{1}{1-\gamma}+\frac{\gamma}{(1-\gamma)^2}\Big)$.\\
	The second term is a short extension of Lemma~\ref{lemma:err-tv-occu2} and Lemma~\ref{lemma:short_occu_bound2}:
	\begin{equation*}
	\begin{split}
	R_\gamma(\rho_0,\pi_D,T)&=\frac{1}{1-\gamma}\int r(s,a)\rho_{T,\gamma}^{\pi_D}(s,a)dsda\leq\frac{1}{1-\gamma}\int r(s,a)\max\big(\rho_{T,\gamma}^{\pi_D}(s,a),\rho_{T,\beta}^{\rho_{T,\gamma}^{\pi_D},\pi_D}(s,a)\big)dsda\\
	&=\frac{1-\beta}{1-\gamma}R_\beta(\rho_{T,\gamma}^{\pi_D},\pi_D,T)+\frac{1}{1-\gamma}\int r(s,a)\Big(\max\big(\rho_T^{\pi}(s,a),\rho_{T,\beta}^{\rho_{T,\gamma}^{\pi_D},\pi_D}(s,a)\big)-\rho_{T,\beta}^{\rho_{T,\gamma}^{\pi_D},\pi_D}(s,a)\Big)dsda\\
	&\leq \frac{1-\beta}{1-\gamma}R_\beta(\rho_{T,\gamma}^{\pi_D},\pi_D,T)+\frac{r^{\max}}{1-\gamma}\int \Big(\max\big(\rho_T^{\pi}(s,a),\rho_{T,\beta}^{\rho_{T,\gamma}^{\pi_D},\pi_D}(s,a)\big)-\rho_{T,\beta}^{\rho_{T,\gamma}^{\pi_D},\pi_D}(s,a)\Big)dsda\\
	&\leq\frac{1-\beta}{1-\gamma}R_\beta(\rho_{T,\gamma}^{\pi_D},\pi_D,T)+\frac{r^{\max}}{1-\gamma}D_{TV}(\rho_{T,\gamma}^{\pi_D}\lvert\rvert \rho_{T,\beta}^{\rho_{T,\gamma}^{\pi_D},\pi_D})
	\end{split}
	\end{equation*}
	By the symmetry of the total variation distance and Lemma~\ref{lemma:short_occu_bound2}, we obtain
	$$\Big|R_\gamma(\rho_0,\pi_D,T)-\frac{1-\beta}{1-\gamma}R_\beta(\rho_{T,\gamma}^{\pi_D},\pi_D,T)\Big|\leq \frac{r^{\max}}{1-\gamma}D_{TV}(\rho_{T,\gamma}^{\pi_D}\lvert\rvert \rho_{T,\beta}^{\rho_{T,\gamma}^{\pi_D},\pi_D})\leq r^{\max}\frac{\beta}{\gamma-\beta}.$$
	By Theorem~\ref{thm:err-tv-trans2}, the third term $\leq \epsilon_{T,\hat{T}}^{\pi_D,\beta}r^{\max}\frac{\beta}{(1-\beta)(1-\gamma)}$.\\
	By Theorem~\ref{thm:err-tv-policy2}, the fourth term $\leq \epsilon_{\pi_D,\pi}^{\hat{T},\beta} r^{\max}\Big(\frac{1}{1-\gamma}+\frac{\beta}{(1-\beta)(1-\gamma)}\Big)$.
\end{proof}

\subsection{MBRL with Deterministic Transition and Strong Lipschitz Continuity}
\begin{assumption}
	~~~\\
	\begin{itemize}[topsep=3pt, itemsep=1pt, itemindent=-2mm, leftmargin=12mm]
		\item [(1.1)] $\overline{T}$, $\hat{\overline{T}}$ are 
		$(L_{\overline{T},s},L_{\overline{T},a})$,~$(L_{\hat{\overline{T}},s},L_{\hat{\overline{T}},a})$ Lipschitz w.r.t. states and actions.
		
		\item [(1.2)] $\mathcal{A}$ is a convex, closed, bounded (diameter $\diam_{\mathcal{A}}$) set in a $\dim_{\mathcal{A}}$-dimensional space. 
		
		\item [(1.3)]  $\pi(a|s)\sim \mathcal{P}_{\mathcal{A}}[\mathcal{N}(\mu_\pi(s),\Sigma_\pi(s))]$ and 
		$\pi_D(a|s)\sim \mathcal{P}_{\mathcal{A}}[\mathcal{N}(\mu_{\pi_D}(s),\Sigma_{\pi_D}(s))]$
		
		\item [(1.4)] $\mu_\pi$, $\mu_{\pi_D}$, $\Sigma_{\pi_D}^{1/2}$, and $\Sigma_\pi^{1/2}$ are $L_{\pi,\mu}$,~$L_{\pi_D,\mu}$,~$L_{\pi,\Sigma}$,~$L_{\pi_D,\Sigma}$ Lipschitz w.r.t. states.
		\label{assump:lip2}
	\end{itemize}
\end{assumption}
The validation of Assumption~\ref{assump:lip2} is below.
\begin{itemize}
	\item [1.1] \textbf{The real and learned transitions are Lipschitz w.r.t. states and actions.} For the real transition especially in continuous control, the Lipschitzness follows from the laws of motion, as computed in Eq. (10) in the paper. For the learned transition, the Lipschitzness can be made by spectral normalization \citep{miyato2018spectral} or gradient penalty \citep{Gulrajani2017wgan-gp}, which are some notable approaches to ensure the Lipschitzness of the discriminator in Wasserstein GAN \citep{arjovsky2017wasserstein}.
	\item [1.2] \textbf{The action space is convex, closed and bounded in a finite dimensional linear space.} This is a standard assumption in continuous-control and is usually satisfied (or made satisfied) in practice \citep{fujita18act-bounded}. The boundness assumption, if not naturally satisfied, is addressed in 1.3.  
	\item [1.3] \textbf{The policy follows truncated Gaussian, by projecting the Gaussian r.v. onto the action space.} According to \citep{fujita18act-bounded}, this is a common practice in RL experiment. The Gaussain assumption is made by training some NNs for the mean and variance of the policies. As for the projection of action to a bounded convex set, it is perfectly fine in RL experiment and is largely used in most MuJoCo experiments as MuJoCo also provides the bounds for the action space. It is also a good practice since it helps stabilize the training.
	\item [1.4] \textbf{The mean and covariance of the policy are Lipschitz w.r.t. state.} As again noted in 1.1, the Lipschitzness can be realized by spectral normalization or gradient penalty. Since the mean and covariance of the policy are represented by some NN, this assumption can be easily made in practice.
\end{itemize}
\begin{lemma}[Conditional Contraction]
	Under assumption~\ref{assump:lip2}, if $\eta_{\pi,\overline{T}}=L_{\overline{T},s}+L_{\overline{T},a}(L_{\pi,\mu}+L_{\pi,\Sigma}\sqrt{\dim_\mathcal{A}})<1/\gamma$, where $\gamma$ is the discount factor of $B_{\pi,\overline{T}}$, then $B_{\pi,\overline{T}}$ is a $\gamma\eta_{\pi,\overline{T}}$-contraction w.r.t. 1-Wasserstein distance.
	\label{lemma:cond-contr2}
\end{lemma}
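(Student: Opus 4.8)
The plan is to establish the contraction by an explicit coupling (transport-plan) argument that exploits the mixture structure of the Bellman flow operator and the Gaussian form of the policy. Recall that for a deterministic transition $\overline{T}$, the operator $B_{\pi,\overline{T}}(\rho)$ is the mixture distribution that equals $\rho_0$ with probability $1-\gamma$ and, with probability $\gamma$, equals the law of $\overline{T}(s',a')$ where $(s',a')\sim\rho(s')\pi(a'|s')$. Fix two state distributions $\rho_1,\rho_2$ (with finite first moment, so the $W_1$ problem is well posed) and let $J(s_1,s_2)$ be an optimal coupling realizing $W_1(\rho_1,\rho_2)=\E_J\|s_1-s_2\|_2$. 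Since $W_1$ is an infimum over couplings, it suffices to exhibit \emph{one} coupling of $B_{\pi,\overline{T}}(\rho_1)$ and $B_{\pi,\overline{T}}(\rho_2)$ whose transport cost is at most $\gamma\eta_{\pi,\overline{T}}\,W_1(\rho_1,\rho_2)$.

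The coupling I would use: flip a coin with bias $1-\gamma$; on heads, draw a single $s\sim\rho_0$ and output $(s,s)$, which costs $0$; on tails, draw $(s_1,s_2)\sim J$, draw one shared $Z\sim\mathcal{N}(0,I_{\dim_{\mathcal{A}}})$, set $a_i=\mathcal{P}_{\mathcal{A}}\bigl(\mu_\pi(s_i)+\Sigma_\pi^{1/2}(s_i)Z\bigr)$ for $i=1,2$ (the ``synchronous'' coupling of the two truncated Gaussians, which is measurable by construction and so sidesteps any measurable-selection concern), and output $\bigl(\overline{T}(s_1,a_1),\overline{T}(s_2,a_2)\bigr)$. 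One checks that the two marginals are exactly $B_{\pi,\overline{T}}(\rho_1)$ and $B_{\pi,\overline{T}}(\rho_2)$, and since the heads branch contributes nothing,
$$W_1\bigl(B_{\pi,\overline{T}}(\rho_1),B_{\pi,\overline{T}}(\rho_2)\bigr)\le \gamma\,\E_{J,Z}\bigl\|\overline{T}(s_1,a_1)-\overline{T}(s_2,a_2)\bigr\|_2.$$

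Next I would bound the integrand with the Lipschitz hypotheses. By (1.1), $\|\overline{T}(s_1,a_1)-\overline{T}(s_2,a_2)\|_2\le L_{\overline{T},s}\|s_1-s_2\|_2+L_{\overline{T},a}\|a_1-a_2\|_2$. For the action term, since $\mathcal{A}$ is convex and closed (1.2) the projection $\mathcal{P}_{\mathcal{A}}$ is nonexpansive, so $\|a_1-a_2\|_2\le\|\mu_\pi(s_1)-\mu_\pi(s_2)\|_2+\bigl\|(\Sigma_\pi^{1/2}(s_1)-\Sigma_\pi^{1/2}(s_2))Z\bigr\|_2$; taking $\E_Z$, invoking (1.4) and using $\E\|Z\|_2\le\sqrt{\E\|Z\|_2^2}=\sqrt{\dim_{\mathcal{A}}}$ gives $\E_Z\|a_1-a_2\|_2\le(L_{\pi,\mu}+L_{\pi,\Sigma}\sqrt{\dim_{\mathcal{A}}})\|s_1-s_2\|_2$. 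Substituting back and averaging over $J$,
$$W_1\bigl(B_{\pi,\overline{T}}(\rho_1),B_{\pi,\overline{T}}(\rho_2)\bigr)\le\gamma\bigl(L_{\overline{T},s}+L_{\overline{T},a}(L_{\pi,\mu}+L_{\pi,\Sigma}\sqrt{\dim_{\mathcal{A}}})\bigr)\,\E_J\|s_1-s_2\|_2=\gamma\eta_{\pi,\overline{T}}\,W_1(\rho_1,\rho_2),$$
and $\gamma\eta_{\pi,\overline{T}}<1$ by hypothesis, which is exactly the claimed contraction.

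The step I expect to demand the most care is the coupling construction itself: verifying that the proposed joint law really has the right two marginals $B_{\pi,\overline{T}}(\rho_1)$ and $B_{\pi,\overline{T}}(\rho_2)$ (this forces the $1-\gamma$/$\gamma$ split to be shared across the two coordinates and is where the $\gamma$ prefactor comes from), and handling the projection $\mathcal{P}_{\mathcal{A}}$ correctly — it is precisely the nonexpansiveness of projection onto a convex set that lets us pass from the pre-projection Gaussian coupling to the truncated-Gaussian policy, so convexity of $\mathcal{A}$ in (1.2) is genuinely used here. Everything after the coupling is reduced to the elementary Lipschitz bookkeeping in (1.1) and (1.4) together with the Gaussian moment bound $\E\|Z\|_2\le\sqrt{\dim_{\mathcal{A}}}$.
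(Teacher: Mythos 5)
Your proposal is correct and follows essentially the same route as the paper's proof: a coupling of $B_{\pi,\overline{T}}(\rho_1)$ and $B_{\pi,\overline{T}}(\rho_2)$ that shares the $(1-\gamma)\rho_0$ mass (yielding the $\gamma$ prefactor), shares the Gaussian noise across the two coordinates ($\xi_1=\xi_2$ in the paper, your single $Z$), and then invokes nonexpansiveness of the projection, the Lipschitz constants from Assumption 1, and $\E\|Z\|_2\le\sqrt{\dim_{\mathcal{A}}}$ via Jensen. The only difference is presentational — you make the coupling fully explicit up front, whereas the paper writes it as a chain of infima over transport plans with the same choices made step by step.
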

\begin{proof}
	Recall $B_{\pi,\overline{T}}(\rho(s))=(1-\gamma)\rho_0(s)+\gamma\int \delta(s-\overline{T}(s',a'))\pi(a'|s')\rho(s')ds'da'.$ 
	Let $\rho_1(s)$,~$\rho_2(s)$ be some distributions over states. We have
	\begin{equation*}
	\begin{split}
	&\wass{B_{\pi,\overline{T}}(\rho_1)}{B_{\pi,\overline{T}}(\rho_2)}\\
	\overset{(a)}{\leq}& \gamma \inf_{J(s_1,a_1,s_2,a_2)\sim \Pi(\rho_1(s)\pi(a|s), \rho_2(s)\pi(a|s))} \E_J \norm{\overline{T}(s_1,a_1)-\overline{T}(s_2,a_2)}_2\\
	=& \gamma \inf_{J(s_1,a_1,s_2,a_2)\sim \Pi(\rho_1(s)\pi(a|s), \rho_2(s)\pi(a|s))} \E_J \norm{\overline{T}(s_1,a_1)-\overline{T}(s_1,a_2)+\overline{T}(s_1,a_2)-\overline{T}(s_2,a_2)}_2\\
	\leq&\gamma \inf_{J(s_1,a_1,s_2,a_2)\sim \Pi(\rho_1(s)\pi(a|s), \rho_2(s)\pi(a|s))} \E_J L_{\overline{T},a}\norm{a_1-a_2}_2+L_{\overline{T},s}\norm{s_1-s_2}_2\\
	\overset{(b)}{=}&\gamma\inf_{J(s_1,\xi_1,s_2,\xi_2)\sim \Pi(\rho_1,\mathcal{N}, \rho_2,\mathcal{N})} \E_J L_{\overline{T},a}\norm{\mathcal{P}_{\mathcal{A}}[\mu_\pi(s_1)+\Sigma_\pi^{1/2}(s_1)\xi_1]-\mathcal{P}_{\mathcal{A}}[\mu_\pi(s_2)-\Sigma_\pi^{1/2}(s_2)\xi_2]}_2+L_{\overline{T},s}\norm{s_1-s_2}_2\\
	\overset{(c)}{\leq}&\gamma\inf_{J(s_1,\xi_1,s_2,\xi_2)\sim \Pi(\rho_1,\mathcal{N}, \rho_2,\mathcal{N})} \E_J L_{\overline{T},a}\norm{\mu_\pi(s_1)+\Sigma_\pi^{1/2}(s_1)\xi_1-\mu_\pi(s_2)-\Sigma_\pi^{1/2}(s_2)\xi_2}_2+L_{\overline{T},s}\norm{s_1-s_2}_2\\
	\leq& \gamma \inf_{J(s_1,s_2)\sim \Pi(\rho_1, \rho_2)}\Big( \E_J (L_{\overline{T},s}+L_{\overline{T},a}L_{\pi,\mu})\norm{s_1-s_2}_2+\inf_{K(\xi_1,\xi_2)\sim\Pi(\mathcal{N},\mathcal{N})}\E_{K} L_{\overline{T},a}\norm{\Sigma_\pi^{1/2}(s_1)\xi_1-\Sigma_\pi^{1/2}(s_2)\xi_2}_2\Big)\\
	\overset{(d)}{\leq}&\gamma \inf_{J(s_1,s_2)\sim \Pi(\rho_1, \rho_2)}\Big( \E_J (L_{\overline{T},s}+L_{\overline{T},a}L_{\pi,\mu})\norm{s_1-s_2}_2+\E_{\xi_1}L_{\overline{T},a}\norm{\Sigma_\pi^{1/2}(s_1)-\Sigma_\pi^{1/2}(s_2)}_{op}\norm{\xi_1}_2\Big)\\
	\overset{(e)}{\leq}& \gamma\inf_{J(s_1,s_2)\sim \Pi(\rho_1, \rho_2)}\E_J (L_{\overline{T},s}+L_{\overline{T},a}(L_{\pi,\mu}+L_{\pi,\Sigma}\sqrt{\dim_\mathcal{A}}))\norm{s_1-s_2}_2\\
	=& \gamma(L_{\overline{T},s}+L_{\overline{T},a}(L_{\pi,\mu}+L_{\pi,\Sigma}\sqrt{\dim_\mathcal{A}}))\wass{\rho_1}{\rho_2}=\gamma\eta_{\pi,\overline{T}}\wass{\rho_1}{\rho_2},
	\end{split}
	\end{equation*}
	where $\underset{J(s_1,s_2)\sim \Pi(\rho_1,\rho_2)}{\inf}$ takes a infimum over all joint distributions $J(s_1,s_2)$ whose marginals are $\rho_1$ and $\rho_2$.
	(a) selects a joint distribution over $B_{\pi,\overline{T}}(\rho_1)$ and $B_{\pi,\overline{T}}(\rho_2)$ that share the same randomness of $(1-\gamma)\rho_0$, which establishes a upper bound and allows us to cancel $(1-\gamma)\rho_0$. (b) uses the Gaussian assumption of the policy, with $\xi_1,~\xi_2$ being standard normal vectors. (c) uses the non-expansiveness property of projection onto a closed convex set.(d) selects $\xi_1=\xi_2$ and uses the property of operator norm. (e) uses the Lipschitz assumption of $\Sigma_\pi^{1/2}(s)$ and that $\norm{\xi_1}\leq\sqrt{\dim_\mathcal{A}}$ by Jensen inequality.
\end{proof}

\begin{lemma}[Error w.r.t. W1 Distance between Occupancy Measures]
	Let $\rho_1(s,a),~\rho_2(s,a)$ be two normalized occupancy measures of rollouts with discount factor $\gamma$. If the reward is $L_r$-Lipschitz, then $|R(\rho_1)- R(\rho_2)| \leq \wass{\rho_1}{\rho_2} L_r/(1-\gamma)$.
	\label{lemma:err-w1-occu2}
\end{lemma}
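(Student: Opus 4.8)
The plan is to exploit that the cumulative reward is an affine functional of the occupancy measure, $R(\rho)=\frac{1}{1-\gamma}\E_{(s,a)\sim\rho}[r(s,a)]$, so that $|R(\rho_1)-R(\rho_2)|=\frac{1}{1-\gamma}\bigl|\E_{\rho_1}[r]-\E_{\rho_2}[r]\bigr|$, and then to control the difference of expectations by the $W_1$ distance via the coupling (transport) definition. This is the exact analogue of Lemma~\ref{lemma:err-tv-occu2}, with the total variation replaced by $\wass{\cdot}{\cdot}$ and the bound $r^{\max}$ replaced by the Lipschitz constant $L_r$.

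Concretely, I would proceed as follows. Fix the product norm on $\mathcal{S}\times\mathcal{A}$ with respect to which $r$ is $L_r$-Lipschitz and $\wass{\cdot}{\cdot}$ is defined. For any coupling $J(s_1,a_1,s_2,a_2)$ in $\Pi(\rho_1,\rho_2)$, i.e.\ with marginals $\rho_1$ and $\rho_2$, write
\[
\E_{\rho_1}[r]-\E_{\rho_2}[r]=\E_J\bigl[r(s_1,a_1)-r(s_2,a_2)\bigr]\le L_r\,\E_J\norm{(s_1,a_1)-(s_2,a_2)}_2 .
\]
The left-hand side does not depend on $J$, so taking the infimum over $J\in\Pi(\rho_1,\rho_2)$ yields $\E_{\rho_1}[r]-\E_{\rho_2}[r]\le L_r\,\wass{\rho_1}{\rho_2}$. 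Swapping the roles of $\rho_1$ and $\rho_2$ gives the matching lower bound, hence $\bigl|\E_{\rho_1}[r]-\E_{\rho_2}[r]\bigr|\le L_r\,\wass{\rho_1}{\rho_2}$, and dividing by $1-\gamma$ gives the claim. Equivalently, one can invoke Kantorovich--Rubinstein duality: $r/L_r$ is $1$-Lipschitz, so $\E_{\rho_1}[r/L_r]-\E_{\rho_2}[r/L_r]\le\wass{\rho_1}{\rho_2}$ directly.

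There is essentially no hard step here; the argument is a couple of lines. The only point requiring a little care is bookkeeping about the metric: $\wass{\rho_1}{\rho_2}$ must be taken over the joint $(s,a)$-space with the same norm used in the definition of ``$r$ is $L_r$-Lipschitz,'' so that the pointwise bound $r(s_1,a_1)-r(s_2,a_2)\le L_r\norm{(s_1,a_1)-(s_2,a_2)}_2$ holds under the expectation. Once that convention is fixed, the coupling bound and the symmetrization are routine, and the factor $1/(1-\gamma)$ comes for free from the definition of $R$ in Eq.~\eqref{eq:cumulative_reward}.
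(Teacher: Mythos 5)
Your proof is correct and is essentially the paper's argument: the paper writes $R(\rho_1)-R(\rho_2)=\frac{L_r}{1-\gamma}\int \frac{r}{L_r}(\rho_1-\rho_2)$ and bounds it by $\frac{L_r}{1-\gamma}\wass{\rho_1}{\rho_2}$ via Kantorovich--Rubinstein duality, then symmetrizes, which is exactly your second ("equivalently") route. Your primary coupling-based argument is just the primal side of the same duality and is, if anything, slightly more self-contained since it needs no appeal to the duality theorem.
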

\begin{proof}
	The cumulative reward is bounded by
	\begin{equation*}
	\begin{split}
	R(\rho_1)&=\frac{1}{1-\gamma}\int r(s,a)\rho_1(s,a)dsda=R(\rho_2)+\frac{1}{1-\gamma}\int r(s,a)\big(\rho_1(s,a)-\rho_2(s,a)\big)dsda\\
	&=R(\rho_2)+\frac{L_r}{1-\gamma}\int \frac{r(s,a)}{L_r}\big(\rho_1(s,a)-\rho_2(s,a)\big)dsda\\
	&\leq R(\rho_2)+\frac{L_r}{1-\gamma}\underset{\norm{f}_{\text{Lip}}\leq 1}{\sup}\int f(s,a)\big(\rho_1(s,a)-\rho_2(s,a)\big)dsda\\
	&=R(\rho_2)+\frac{L_r}{1-\gamma}\underset{\norm{f}_{\text{Lip}}\leq 1}{\sup}\E_{(s,a)\sim \rho_1}[f(s,a)]-\E_{(s,a)\sim \rho_2}[f(s,a)]\\
	&=R(\rho_2)+\frac{L_r}{1-\gamma}\wass{\rho_1}{\rho_2}.
	\end{split}
	\end{equation*}
	The third line holds because $r(s,a)/L_r$ is 1-Lipschitz and the last line follows from Kantorovich-Rubinstein duality \cite{Villani2008opt_trans}. Since $W_1$ distance is symmetric, the same conclusion holds if interchanging $\rho_1$ and $\rho_1$; thus
	$$|R(\rho_1)-R(\rho_2)|\leq \wass{\rho_1}{\rho_2} L_r/(1-\gamma).$$
\end{proof}

\begin{theorem}[Error of Deterministic Transitions with Strong Lipschitzness]
	Under Lemma~\ref{lemma:cond-contr2}, let $\overline{T}$, $\hat{\overline{T}}$, $r$,~$\pi_D$ be deterministic real transition, deterministic learned transtion, reward and sampling policy. If $r(s,a)$ is $L_r$-Lipschitz and the $\ell_2$ error is $\epsilon_{\ell_2}$, then $|R(\pi_D,\overline{T})-R(\pi_D,\hat{\overline{T}})|\leq(1+L_{\pi_D,\mu}+L_{\pi_D,\Sigma}\sqrt{\dim_{\mathcal{A}}})L_r\frac{\gamma \epsilon_{\ell_2}}{(1-\gamma)(1-\gamma\eta_{\pi_D,\hat{\overline{T}}})}$.
	\label{thm:err-l2-lip-trans2}
\end{theorem}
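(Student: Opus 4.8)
The plan is to instantiate the three-step scheme of Eq.~\eqref{eq:rough} with the $W_1$ metric replacing TV, mirroring the proof of Theorem~\ref{thm:err-tv-trans2} step for step: Lemma~\ref{lemma:err-w1-occu2} plays the role of Lemma~\ref{lemma:err-tv-occu2}, and the $W_1$-contraction supplied by Lemma~\ref{lemma:cond-contr2} plays the role of the TV-contraction of the Bellman flow operator. Concretely, by Lemma~\ref{lemma:err-w1-occu2} it suffices to bound $\wass{\rho_{\overline{T}}^{\pi_D}(s,a)}{\rho_{\hat{\overline{T}}}^{\pi_D}(s,a)}$ and multiply by $L_r/(1-\gamma)$. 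Here ``under Lemma~\ref{lemma:cond-contr2}'' means we assume $\eta_{\pi_D,\hat{\overline{T}}}<1/\gamma$, so that $B_{\pi_D,\hat{\overline{T}}}$ is a $\gamma\eta_{\pi_D,\hat{\overline{T}}}$-contraction w.r.t.\ $W_1$ with fixed point $\rho_{\hat{\overline{T}}}^{\pi_D}(s)$.

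First I would reduce the joint $W_1$ to the state-marginal $W_1$. Since both occupancy measures factor as $\rho(s)\,\pi_D(a|s)$ with the \emph{same} conditional $\pi_D$, I would take an optimal state coupling $J(s_1,s_2)$ achieving $\wass{\rho_{\overline{T}}^{\pi_D}(s)}{\rho_{\hat{\overline{T}}}^{\pi_D}(s)}$ and then draw $a_1,a_2$ through a \emph{shared} standard normal noise $\xi$ in the truncated-Gaussian parametrization of $\pi_D$, i.e.\ $a_i=\mathcal{P}_{\mathcal{A}}[\mu_{\pi_D}(s_i)+\Sigma_{\pi_D}^{1/2}(s_i)\xi]$. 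Reusing the inequality chain from the proof of Lemma~\ref{lemma:cond-contr2} (non-expansiveness of $\mathcal{P}_{\mathcal{A}}$, the operator-norm bound for $\Sigma_{\pi_D}^{1/2}$, and $\E\norm{\xi}_2\le\sqrt{\dim_{\mathcal{A}}}$ via Jensen) gives $\E\norm{a_1-a_2}_2\le (L_{\pi_D,\mu}+L_{\pi_D,\Sigma}\sqrt{\dim_{\mathcal{A}}})\,\E\norm{s_1-s_2}_2$, hence $\wass{\rho_{\overline{T}}^{\pi_D}(s,a)}{\rho_{\hat{\overline{T}}}^{\pi_D}(s,a)}\le(1+L_{\pi_D,\mu}+L_{\pi_D,\Sigma}\sqrt{\dim_{\mathcal{A}}})\,\wass{\rho_{\overline{T}}^{\pi_D}(s)}{\rho_{\hat{\overline{T}}}^{\pi_D}(s)}$.

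Next I would control the state-marginal $W_1$ via the Symmetry Bridge Lemma. Applying Lemma~\ref{lemma:sym-bridge2} with $B_2=B_{\pi_D,\hat{\overline{T}}}$ and $\rho_1=\rho_{\overline{T}}^{\pi_D}(s)$ gives $\wass{\rho_{\overline{T}}^{\pi_D}(s)}{\rho_{\hat{\overline{T}}}^{\pi_D}(s)}\le\wass{\rho_{\overline{T}}^{\pi_D}(s)}{B_{\pi_D,\hat{\overline{T}}}(\rho_{\overline{T}}^{\pi_D}(s))}/(1-\gamma\eta_{\pi_D,\hat{\overline{T}}})$. Using the fixed-point identity $\rho_{\overline{T}}^{\pi_D}(s)=B_{\pi_D,\overline{T}}(\rho_{\overline{T}}^{\pi_D}(s))$ (Fact~\ref{fact:fixed-pt}), the numerator equals $\wass{B_{\pi_D,\overline{T}}(\rho_{\overline{T}}^{\pi_D})}{B_{\pi_D,\hat{\overline{T}}}(\rho_{\overline{T}}^{\pi_D})}$; coupling the two Bellman-flow outputs so they share the $(1-\gamma)\rho_0$ mass and, in the discounted branch, the same draw $(s',a')\sim\rho_{\overline{T}}^{\pi_D}(s')\pi_D(a'|s')=\rho_{\overline{T}}^{\pi_D}(s',a')$, the $\rho_0$ part cancels and only the transition discrepancy survives, yielding $\gamma\,\E_{(s',a')\sim\rho_{\overline{T}}^{\pi_D}}\norm{\overline{T}(s',a')-\hat{\overline{T}}(s',a')}_2=\gamma\epsilon_{\ell_2}$. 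Chaining the three bounds and multiplying by $L_r/(1-\gamma)$ gives exactly the claimed inequality.

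I expect the main obstacle to be the joint-to-marginal $W_1$ reduction: one must verify that coupling $a_1,a_2$ through the shared Gaussian noise $\xi$ is a valid coupling of measures of the form $\rho(s)\pi_D(a|s)$, and that the resulting expected cost is genuinely controlled by the state coupling alone — this is precisely the delicate projection/operator-norm/Jensen argument already carried out inside Lemma~\ref{lemma:cond-contr2}, so I would isolate it as a standalone sub-lemma. Everything else is a routine repackaging of the proof of Theorem~\ref{thm:err-tv-trans2} with $W_1$ in place of TV and Lemma~\ref{lemma:cond-contr2} in place of the unconditional TV-contraction.
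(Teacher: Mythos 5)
Your proposal is correct and follows essentially the same route as the paper's proof: reduce the joint $W_1$ to the state-marginal $W_1$ via the shared-noise coupling of $\pi_D$ (picking up the factor $1+L_{\pi_D,\mu}+L_{\pi_D,\Sigma}\sqrt{\dim_{\mathcal{A}}}$), apply the Symmetry Bridge Lemma with the $\gamma\eta_{\pi_D,\hat{\overline{T}}}$-contraction and the fixed-point identity, cancel the $(1-\gamma)\rho_0$ mass and take the diagonal coupling to obtain $\gamma\epsilon_{\ell_2}$, then finish with the $W_1$ reward lemma. The only cosmetic difference is that you isolate the joint-to-marginal step as a sub-lemma, whereas the paper inlines it by citing the analysis inside Lemma~\ref{lemma:cond-contr2}.
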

\begin{proof}
	Observe that the Wasserstein distance over the joint can be upper bounded by that over the marginal.
	\begin{equation}
	\begin{split}
	\wass{\rho_{\overline{T}}^{\pi_D}(s,a)}{\rho_{\hat{\overline{T}}}^{\pi_D}(s,a)}=&\underset{J(s_1,a_1,s_2,a_2)\in\Pi(\rho_{\overline{T}}^{\pi_D}(s,a),\rho_{\hat{\overline{T}}}^{\pi_D}(s,a))}{\inf}\E_J\norm{(s_1-s_2,a_1-a_2)}_2\\
	\leq& \underset{J(s_1,a_1,s_2,a_2)\in\Pi(\rho_{\overline{T}}^{\pi_D}(s,a),\rho_{\hat{\overline{T}}}^{\pi_D}(s,a))}{\inf}\E_J\norm{s_1-s_2}_2+\norm{a_1-a_2}_2\\
	\overset{(*)}{\leq}&(1+L_{\pi_D,\mu}+L_{\pi_D,\Sigma}\sqrt{\dim_{\mathcal{A}}})\underset{J(s_1,s_2)\in\Pi(\rho_{\overline{T}}^{\pi_D}(s),\rho_{\hat{\overline{T}}}^{\pi_D}(s))}{\inf}\E_J\norm{s_1-s_2}_2\\
	=&(1+L_{\pi_D,\mu}+L_{\pi_D,\Sigma}\sqrt{\dim_{\mathcal{A}}})\wass{\rho_{\overline{T}}^{\pi_D}(s)}{\rho_{\hat{\overline{T}}}^{\pi_D}(s)},
	\end{split}
	\label{eq:w1-joint-marg}
	\end{equation}
	where (*) follows from the same analysis in Lemma~\ref{lemma:cond-contr2}. Also, the Wasserstein distance over the marginal is upper bounded by the $\ell_2$ error:
	\begin{equation}
	\begin{split}
	\wass{\rho_{\overline{T}}^{\pi_D}(s)}{\rho_{\hat{\overline{T}}}^{\pi_D}(s)}\leq& \frac{1}{1-\gamma\eta_{\pi_D,\hat{\overline{T}}}}\wass{\rho_{\overline{T}}^{\pi_D}(s)}{B_{\hat{\overline{T}}}^{\pi_D}(\rho_{\overline{T}}^{\pi_D}(s))}
	= \frac{1}{1-\gamma\eta_{\pi_D,\hat{\overline{T}}}}\wass{B_{\overline{T}}^{\pi_D}(\rho_{\overline{T}}^{\pi_D}(s))}{B_{\hat{\overline{T}}}^{\pi_D}(\rho_{\overline{T}}^{\pi_D}(s))}\\
	\leq &\frac{\gamma}{1-\gamma\eta_{\pi_D,\hat{\overline{T}}}}\inf_{J(s_1,a_1,s_2,a_2)\sim \Pi(\rho_{\overline{T}}^{\pi_D}(s)\pi_D(a|s), \rho_{\overline{T}}^{\pi_D}(s)\pi_D(a|s))}\E_J\norm{\overline{T}(s_1,a_1)-\hat{\overline{T}}(s_2,a_2)}_2\\
	\leq& \frac{\gamma}{1-\gamma\eta_{\pi_D,\hat{\overline{T}}}} \E_{(s,a)\sim\rho_{\overline{T}}^{\pi_D}(s)\pi_D(a|s)} \norm{\overline{T}(s,a)-\hat{\overline{T}}(s,a)}_2= \frac{\gamma}{1-\gamma\eta_{\pi_D,\hat{\overline{T}}}}\epsilon_{\ell_2}.
	\end{split}
	\label{eq:w1-l2}
	\end{equation}
	The first line follows from conditional contraction (Lemma~\ref{lemma:cond-contr2}), symmetry bridge (Lemma~\ref{lemma:sym-bridge2}) and fixed-point property. The second line uses the fact that $B_{\overline{T}}^{\pi_D}$ and $B_{\hat{\overline{T}}}^{\pi_D}$ have $1-\gamma$ fraction in common, so we can create a joint distribution to cancel it. The third line builds a upper bound by choosing $(s_1,a_1)=(s_2,a_2)\sim\rho_{\overline{T}}^{\pi_D}(s)\pi_D(a|s)$. Combining Eq.~(\ref{eq:w1-joint-marg}),~(\ref{eq:w1-l2}) and Lemma~\ref{lemma:err-w1-occu2}, we conclude that 
	$$|R(\pi_D,\overline{T})-R(\pi_D,\hat{\overline{T}})|\leq (1+L_{\pi_D,\mu}+L_{\pi_D,\Sigma}\sqrt{\dim_{\mathcal{A}}})L_r\frac{\gamma \epsilon_{\ell_2}}{(1-\gamma)(1-\gamma\eta_{\pi_D,\hat{\overline{T}}})}.$$
\end{proof}

\begin{corollary}[Error of MBRL with Deterministic Transition, Strong Lipschitzness and Branched Rollouts]
	Let $\gamma> \beta$ be discount factors of long and short rollouts. Let $\pi_D$, $\pi$, $\overline{T}$ and $\hat{\overline{T}}$ be sampling policy, agent policy, real deterministic transition and deterministic learned transition. Under assumption~\ref{assump:lip2}, suppose the reward is both bounded $0\leq r(s,a)\leq r^{\max}$ and $L_r$-Lipschitz. Let $\epsilon_{\pi_D,\pi}^{\overline{T},\gamma}=\E_{s\sim\rho_{\overline{T},\gamma}^{\pi_D}}D_{TV}(\pi_D(\cdot|s)\lvert\rvert \pi(\cdot|s))$,\\ $\epsilon_{\pi_D,\pi}^{\hat{\overline{T}},\beta}=\E_{s\sim\rho_{\hat{\overline{T}},\beta}^{\rho_{\overline{T},\gamma}^{\pi_D},\pi}}D_{TV}(\pi_D(\cdot|s)\lvert\rvert \pi(\cdot|s))$ and $\epsilon_{\ell_2,\beta}=\E_{(s,a)\sim\rho_{\overline{T},\beta}^{\rho_{\overline{T},\gamma}^{\pi_D},\pi_D}}\norm{\overline{T}(s,a)-\hat{\overline{T}}(s,a)}_2$. Then,
	\begin{align*}
	\Big|R_\gamma(\rho_0,\pi,\overline{T})-\frac{1-\beta}{1-\gamma}R_\beta(\rho_{\overline{T},\gamma}^{\pi_D},\pi,\overline{T})\Big|&\leq r^{\max}\Big(\frac{\epsilon_{\pi_D,\pi}^{\overline{T},\gamma}\gamma}{(1-\gamma)^2} + \frac{\epsilon_{\pi_D,\pi}^{\hat{\overline{T}},\beta}\beta}{(1-\beta)(1-\gamma)} + \frac{\epsilon_{\pi_D,\pi}^{\overline{T},\gamma}+\epsilon_{\pi_D,\pi}^{\hat{\overline{T}},\beta}}{1-\gamma} + \frac{\beta}{\gamma-\beta}\Big)\\
	&+(1+L_{\pi_D,\mu}+L_{\pi_D,\Sigma}\sqrt{\dim_{\mathcal{A}}})L_r\frac{\beta \epsilon_{\ell_2,\beta}}{(1-\gamma)(1-\beta\eta_{\pi_D,\hat{\overline{T}}})}
	\end{align*}
	
\end{corollary}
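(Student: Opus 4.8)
The plan is to run the deterministic analogue of the proof of Corollary~\ref{cor:err-branch2}: the same four‑term triangle‑inequality split, with the stochastic transition‑error estimate replaced by the $W_1/\ell_2$ estimate of Theorem~\ref{thm:err-l2-lip-trans2}. (I read the last argument of the branched term on the left‑hand side as $\hat{\overline{T}}$, matching the stochastic statement and the $\epsilon_{\ell_2,\beta}$ on the right.) Inserting the intermediate functionals $R_\gamma(\rho_0,\pi_D,\overline{T})$, then $\tfrac{1-\beta}{1-\gamma}R_\beta(\rho_{\overline{T},\gamma}^{\pi_D},\pi_D,\overline{T})$, then $\tfrac{1-\beta}{1-\gamma}R_\beta(\rho_{\overline{T},\gamma}^{\pi_D},\pi_D,\hat{\overline{T}})$ and using the triangle inequality gives
\begin{align*}
&\Big|R_\gamma(\rho_0,\pi,\overline{T})-\tfrac{1-\beta}{1-\gamma}R_\beta(\rho_{\overline{T},\gamma}^{\pi_D},\pi,\hat{\overline{T}})\Big|\\
\leq{}&\Big|R_\gamma(\rho_0,\pi,\overline{T})-R_\gamma(\rho_0,\pi_D,\overline{T})\Big|+\Big|R_\gamma(\rho_0,\pi_D,\overline{T})-\tfrac{1-\beta}{1-\gamma}R_\beta(\rho_{\overline{T},\gamma}^{\pi_D},\pi_D,\overline{T})\Big|\\
&+\tfrac{1-\beta}{1-\gamma}\Big|R_\beta(\rho_{\overline{T},\gamma}^{\pi_D},\pi_D,\overline{T})-R_\beta(\rho_{\overline{T},\gamma}^{\pi_D},\pi_D,\hat{\overline{T}})\Big|+\tfrac{1-\beta}{1-\gamma}\Big|R_\beta(\rho_{\overline{T},\gamma}^{\pi_D},\pi_D,\hat{\overline{T}})-R_\beta(\rho_{\overline{T},\gamma}^{\pi_D},\pi,\hat{\overline{T}})\Big|.
\end{align*}

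Next I would bound the four terms in turn. Term one is a policy‑mismatch error under the true transition with discount $\gamma$, so Theorem~\ref{thm:err-tv-policy2} bounds it by $\epsilon_{\pi_D,\pi}^{\overline{T},\gamma}r^{\max}\big(\tfrac{1}{1-\gamma}+\tfrac{\gamma}{(1-\gamma)^2}\big)$. Term two compares a long and a branched rollout generated by the same policy $\pi_D$ and the same true transition $\overline{T}$; as in the proof of Corollary~\ref{cor:err-branch2}, I would write $\tfrac{1-\beta}{1-\gamma}R_\beta(\rho_{\overline{T},\gamma}^{\pi_D},\pi_D,\overline{T})=\tfrac{1}{1-\gamma}\int r(s,a)\,\rho_{\overline{T},\beta}^{\rho_{\overline{T},\gamma}^{\pi_D},\pi_D}(s,a)\,ds\,da$, dominate by the pointwise maximum of the two occupancy densities, and apply Lemma~\ref{lemma:short_occu_bound2} with transition $\overline{T}$ to get $r^{\max}\tfrac{\beta}{\gamma-\beta}$. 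Term three is a deterministic model‑mismatch error for rollouts with initial distribution $\rho_{\overline{T},\gamma}^{\pi_D}$, policy $\pi_D$, and discount $\beta$; the proof of Theorem~\ref{thm:err-l2-lip-trans2} never uses the specific form of the initial distribution, so it transfers verbatim with $\gamma$ replaced by $\beta$ and yields $(1+L_{\pi_D,\mu}+L_{\pi_D,\Sigma}\sqrt{\dim_{\mathcal{A}}})L_r\tfrac{\beta\,\epsilon_{\ell_2,\beta}}{(1-\beta)(1-\beta\eta_{\pi_D,\hat{\overline{T}}})}$, after which the prefactor $\tfrac{1-\beta}{1-\gamma}$ cancels the factor $(1-\beta)^{-1}$. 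Term four is a policy‑mismatch error under $\hat{\overline{T}}$ with discount $\beta$, the relevant expectation being over $\rho_{\hat{\overline{T}},\beta}^{\rho_{\overline{T},\gamma}^{\pi_D},\pi}$; applying Theorem~\ref{thm:err-tv-policy2} with the roles of $\pi$ and $\pi_D$ exchanged (legitimate since total variation is symmetric, so the Bellman flow operator may be anchored at either fixed point) gives $\epsilon_{\pi_D,\pi}^{\hat{\overline{T}},\beta}r^{\max}\big(\tfrac{1}{1-\beta}+\tfrac{\beta}{(1-\beta)^2}\big)$, and multiplying by $\tfrac{1-\beta}{1-\gamma}$ turns this into $\epsilon_{\pi_D,\pi}^{\hat{\overline{T}},\beta}r^{\max}\big(\tfrac{1}{1-\gamma}+\tfrac{\beta}{(1-\beta)(1-\gamma)}\big)$. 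Summing the four bounds and collecting the two $(1-\gamma)^{-1}$ contributions from terms one and four into $\tfrac{\epsilon_{\pi_D,\pi}^{\overline{T},\gamma}+\epsilon_{\pi_D,\pi}^{\hat{\overline{T}},\beta}}{1-\gamma}$ reproduces the claimed inequality exactly.

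The only step requiring care is term three: I must check that Theorem~\ref{thm:err-l2-lip-trans2} is actually applicable at the branched discount $\beta<\gamma$ and with the non‑$\rho_0$ initial distribution $\rho_{\overline{T},\gamma}^{\pi_D}$. The substance is that Theorem~\ref{thm:err-l2-lip-trans2}'s hypothesis, the conditional contraction of Lemma~\ref{lemma:cond-contr2}, weakens to $\beta\eta_{\pi_D,\hat{\overline{T}}}<1$, which is implicitly needed for the stated right‑hand side to be meaningful and is precisely the effect highlighted after Corollary~\ref{cor:err-deter-lip-branch}: choosing $\beta$ small enough restores a Bellman $W_1$‑contraction for $B_{\pi_D,\hat{\overline{T}}}$ even when $\gamma\eta_{\pi_D,\hat{\overline{T}}}\geq1$, so that the theorem could not be invoked at the original discount. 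Once this point is granted, the remainder is the same bookkeeping as in the stochastic branched‑rollout corollary, and I expect no further difficulty.
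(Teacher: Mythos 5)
Your proof is correct and follows exactly the route the paper takes: its own proof is the one-line instruction ``modify the proof of Corollary~\ref{cor:err-branch2} using Theorem~\ref{thm:err-l2-lip-trans2},'' and your four-term decomposition with the deterministic $W_1/\ell_2$ bound substituted (at discount $\beta$, so the contraction hypothesis relaxes to $\beta\eta_{\pi_D,\hat{\overline{T}}}<1$) is precisely that modification spelled out. Your reading of the last argument on the left-hand side as $\hat{\overline{T}}$ is also the intended one; the $\overline{T}$ in the statement is a typo.
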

\begin{proof}
	Modifying the proof of Corollary~\ref{cor:err-branch2} with Theorem~\ref{thm:err-l2-lip-trans2}, the result follows.
\end{proof}

\subsection{MBRL with Deterministic Transition and Weak Lipschitz Continuity}
\begin{theorem}[One-sided Error of Deterministic Transitions]
	Let $\overline{T}$, $\hat{\overline{T}}$, $r$,~$\pi_D$ be deterministic real transition, deterministic learned transtion, reward and sampling policy. Suppose $0\leq r(s,a)\leq r^{\max}$. $\hat{\overline{T}}(s,a)$,~$r(s,a)$~and\\$\pi_D(a|s)$ are Lipschitz in $s$ for any $a$ with constants $(L_{\hat{\overline{T}}},L_r,L_{\pi_D})$. Assume that $L_{\hat{\overline{T}}}\leq 1+(1-\gamma)\iota$ with $\iota< 1$ and that the action space is bounded: $\diam_\mathcal{A}<\infty$. If the training loss in $\ell_2$ error is $\epsilon_{\ell_2}$, then
	$$R(\pi_D,\overline{T})-R(\pi_D,\hat{\overline{T}})\leq\frac{1+\gamma}{(1-\gamma)^2}\sqrt{2\epsilon_{\ell_2} r^{\max}L_r}+\frac{1+O(\iota)}{(1-\gamma)^{5/2}}r^{\max}\sqrt{2\epsilon_{\ell_2} L_{\pi_D}\diam_\mathcal{A}}.$$
	\label{thm:err-l2-trans2}
\end{theorem}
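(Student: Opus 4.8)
The plan is to turn the reward difference into a weighted average, \emph{against the real occupancy measure} $\rho_{\overline{T}}^{\pi_D}$, of a pointwise perturbation of the model value function, and then to trade the $\ell_2$ model error for the stated bound through a ``$\min$ then $\sqrt{\cdot}$'' estimate. Write $V_2=V^{\pi_D}_{\hat{\overline{T}}}$ for the value of $\pi_D$ under the learned model, so $R(\pi_D,\overline{T})=\E_{s_0\sim\rho_0}[V^{\pi_D}_{\overline{T}}(s_0)]$ and likewise for $\hat{\overline{T}}$. Subtracting the two Bellman equations for $V^{\pi_D}_{\overline{T}}$ and $V_2$, inserting $V_2(\overline{T}(s,a))$, and unrolling the resulting recursion against the real dynamics (the simulation-lemma telescoping, in the spirit of \citet[Lemma 2]{Syed2010}) yields the exact identity
\[
R(\pi_D,\overline{T})-R(\pi_D,\hat{\overline{T}})=\frac{\gamma}{1-\gamma}\,\E_{(s,a)\sim\rho_{\overline{T}}^{\pi_D}}\!\big[V_2(\overline{T}(s,a))-V_2(\hat{\overline{T}}(s,a))\big].
\]
This is precisely the step that makes the conclusion one-sided: the expectation is taken against $\rho_{\overline{T}}^{\pi_D}$, the very distribution on which $\epsilon_{\ell_2}$ is measured, whereas the symmetric identity for the reversed difference would average against $\rho_{\hat{\overline{T}}}^{\pi_D}$, which we cannot control without a Bellman contraction.

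Next I would bound the integrand using two facts about $V_2$: the trivial bound $|V_2(p)-V_2(q)|\le 2\|V_2\|_\infty\le 2r^{\max}/(1-\gamma)$, and a Lipschitz bound $|V_2(p)-V_2(q)|\le L_{V_2}\|p-q\|_2$. For the Lipschitz bound I would work with the $k$-step truncations $V_2^{[k]}$ (which are manifestly Lipschitz) and pass to the limit: unrolling the Bellman recursion for $V_2$ once and estimating state by state gives a self-bounding inequality $L_{V_2}\le L_r+\gamma L_{\hat{\overline{T}}}L_{V_2}+r^{\max}\,\diam_{\mathcal{A}}L_{\pi_D}/(1-\gamma)$, where the $L_r$ term is the reward's $s$-Lipschitzness, the $\gamma L_{\hat{\overline{T}}}L_{V_2}$ term is one-step propagation through the learned model, and the last term comes from the $s$-dependence of the action law $\pi_D(\cdot|s)$ --- bounded via the $s$-Lipschitzness of the density $\pi_D(a|s)$ against the bounded action set, times $\|V_2\|_\infty$. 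Since $L_{\hat{\overline{T}}}\le 1+(1-\gamma)\iota$ with $\iota<1$ we get $\gamma L_{\hat{\overline{T}}}<1$ and $1-\gamma L_{\hat{\overline{T}}}\ge(1-\gamma)(1-\gamma\iota)$, so solving the inequality gives $L_{V_2}=\tfrac{1+O(\iota)}{1-\gamma}\big(L_r+\tfrac{r^{\max}\diam_{\mathcal{A}}L_{\pi_D}}{1-\gamma}\big)$, i.e.\ a ``reward part'' of order $L_r/(1-\gamma)$ and a ``policy part'' of order $r^{\max}\diam_{\mathcal{A}}L_{\pi_D}/(1-\gamma)^2$.

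Finally I would combine: for each of the two parts $L_{V_2}^{(\cdot)}$ of $L_{V_2}$, use $\min(a,b)\le\sqrt{ab}$ to get $|V_2(\overline{T}(s,a))-V_2(\hat{\overline{T}}(s,a))|\le\sqrt{\tfrac{2r^{\max}}{1-\gamma}\,L_{V_2}^{(\cdot)}\,\|\overline{T}(s,a)-\hat{\overline{T}}(s,a)\|_2}$, take $\E_{(s,a)\sim\rho_{\overline{T}}^{\pi_D}}$, and pull the square root outside by Jensen, using $\E_{(s,a)\sim\rho_{\overline{T}}^{\pi_D}}\|\overline{T}(s,a)-\hat{\overline{T}}(s,a)\|_2=\epsilon_{\ell_2}$; multiplying by $\gamma/(1-\gamma)$ and adding the reward and policy contributions produces exactly the two displayed terms. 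This ``$\min$ then $\sqrt{\cdot}$'' detour --- rather than a plain triangle inequality, which would give an $\epsilon_{\ell_2}$-linear bound with an extra power of $(1-\gamma)^{-1}$ --- is what produces the $\sqrt{\epsilon_{\ell_2}}$ scaling. I expect the main obstacle to be the Lipschitz estimate for $V_2$: correctly tracking the $\diam_{\mathcal{A}}L_{\pi_D}$ contribution of the moving action law and, above all, exploiting $\gamma L_{\hat{\overline{T}}}<1$ so that per-step amplification through $\hat{\overline{T}}$ costs only the harmless factor $1+O(\iota)$ instead of diverging.
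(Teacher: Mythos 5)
Your proposal is correct in its essentials but follows a genuinely different route from the paper. The paper argues at the level of whole trajectories: it uses Markov's inequality on $\epsilon_{\ell_2}$ to define $\delta$-``consistent'' rollouts, couples each consistent real rollout to a corrected rollout $\{s_i'\}$ driven by $\hat{\overline{T}}$, tracks the drift $\|s_i-s_i'\|_2\leq\delta(L_{\hat{\overline{T}}}^{i-1}-1)/(L_{\hat{\overline{T}}}-1)$ through the reward (via $L_r$) and through the trajectory density (via $L_{\pi_D}$ and $\diam_{\mathcal{A}}$), and finally optimizes over $\delta$ to produce the $\sqrt{\epsilon_{\ell_2}}$ scaling. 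You instead use the exact simulation-lemma identity $R(\pi_D,\overline{T})-R(\pi_D,\hat{\overline{T}})=\tfrac{\gamma}{1-\gamma}\E_{\rho_{\overline{T}}^{\pi_D}}[V_2(\overline{T}(s,a))-V_2(\hat{\overline{T}}(s,a))]$ (which is correct, and correctly identified as the source of one-sidedness), prove via truncation that $V_2=V^{\pi_D}_{\hat{\overline{T}}}$ is Lipschitz with $L_{V_2}\leq(L_r+r^{\max}\diam_{\mathcal{A}}L_{\pi_D}/(1-\gamma))/(1-\gamma L_{\hat{\overline{T}}})$ --- where $1-\gamma L_{\hat{\overline{T}}}\geq(1-\gamma)(1-\gamma\iota)$ is exactly where assumption (b) enters --- and then interpolate $\min\{2\|V_2\|_\infty,L_{V_2}d\}\leq\sqrt{2\|V_2\|_\infty L_{V_2}d}$ followed by Jensen. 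Your role for $\delta$ is played by the $\min$-then-$\sqrt{\cdot}$ step, and the resulting powers of $(1-\gamma)$ match (your constants are in fact slightly smaller, e.g.\ $\gamma(1+O(\iota))$ in place of $1+\gamma$ on the first term, which is fine for small $\iota$). What your approach buys: it is shorter, avoids the delicate density-ratio manipulation in the paper's Eq.~\eqref{eq:den-change}--\eqref{eq:consist-term}, and makes explicit that assumptions (a)--(c) already \emph{imply} a Lipschitz value function for the learned model with constant $O(L_r/(1-\gamma)+r^{\max}\diam_{\mathcal{A}}L_{\pi_D}/(1-\gamma)^2)$ --- so the paper's claim to avoid Lipschitz-value assumptions here is really a claim to \emph{derive} rather than assume it; moreover your identity shows a bound \emph{linear} in $\epsilon_{\ell_2}$ is also available at the cost of one extra $(1-\gamma)^{-1}$. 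What the paper's route buys is that it never needs $V_2$ to exist as a Lipschitz object and localizes the failure event explicitly. One small shared blemish: when you convert the $s$-Lipschitzness of the density $\pi_D(a|s)$ into the ``policy part'' of $L_{V_2}$, integrating the pointwise bound $L_{\pi_D}\|s-s'\|_2$ over $\mathcal{A}$ naturally produces the volume of $\mathcal{A}$ rather than $\diam_{\mathcal{A}}$; the paper's own passage from Eq.~\eqref{eq:den-change} to Eq.~\eqref{eq:consist-term} has the same looseness, so this does not distinguish your argument from theirs, but you should state which quantity you mean.
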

\begin{proof}
	Recall the $\ell_2$ error is
	$\E_{(s,a)\sim \rho_{\overline{T}}^{\pi_D}}\Big[\norm{\overline{T}(s,a)-\hat{\overline{T}}(s,a)}_2\Big]=\epsilon_{\ell_2}$. By Markov's Inequality, for any $\delta>0$,
	
	\begin{equation}
	\mathbb{P}_{(s,a)\sim \rho_{\overline{T}}^{\pi_D}}\Big( \norm{\overline{T}(s,a)-\hat{\overline{T}}(s,a)}_2 <\delta \Big)>1-\frac{\epsilon_{\ell_2}}{\delta}
	\label{markov}
	\end{equation}
	
	Eq.~(\ref{markov}) means for a length $H\sim\text{Geometric}(1-\gamma)$ rollout $\{s_t,a_t\}_{t=1}^H$ generated by $(\rho_0,\pi_D,\overline{T})$,\\ $\norm{\overline{T}(s_t,a_t)-\hat{\overline{T}}(s_t,a_t)}_2<\delta$ with probability greater than $1-\frac{\epsilon_{\ell_2}}{\delta}$.
	
	Following this idea, \emph{we say a rollout is consistent to $\hat{\overline{T}}$, if for each $t$, $\norm{s_{t+1}-\hat{\overline{T}}(s_t,a_t)}_2 <\delta$}; in other words, a rollout is consistent to $\hat{\overline{T}}$ if for each time step, the state transition is similar to what $\hat{\overline{T}}$ does. Let $P_{\overline{T}}$ be the probability measure induced on the rollout following the real transition $\overline{T}$. The cumulative reward is bounded by
	
	\begin{equation}
	\begin{split}
	R(\pi_D,\overline{T}) &= \int_{\text{traj}} R(\text{traj})dP_{\overline{T}}= \int_{\text{traj~consistent}} R(\text{traj})dP_{\overline{T}} + \int_{\text{traj~inconsistent}} R(\text{traj})dP_{\overline{T}}\\
	&\leq \int_{\text{traj~consistent}} R(\text{traj})dP_{\overline{T}} + \frac{\epsilon_{\ell_2}}{\delta}\E[H^2]r^{\max}.
	\end{split}
	\label{eq:main-term}
	\end{equation}
	The inequality holds because for a rollout generated by $\overline{T}$ with length $H$, the probability that it is inconsistent to $\hat{\overline{T}}$ is at most $\frac{\epsilon_{\ell_2}}{\delta}H$ by Eq.~(\ref{markov}) and the union bound over $\{s_t,a_t\}_{t=1}^H$. Also, the maximum reward of such rollout is $Hr^{\max}$.
	
	Now, we'd like to change from $P_{\overline{T}}$ to $P_{\hat{\overline{T}}}$ with the Lipschitz assumptions above. It suffices to reset the states $\{s_i\}_{i=1}^H$ so that the transition obeys $\hat{\overline{T}}$. Suppose the new states are
	\begin{equation}
	s_1' = s_1,~~~~~~s_i' = \hat{\overline{T}}(s_{i-1}',a_{i-1}),~\forall~i\geq 2.
	\label{s'}
	\end{equation}
	By the Lipschitzness of $\hat{\overline{T}}$, triangle inequlaity and $\hat{\overline{T}}$-consistency, the distance between $s_i$ and $s_i'$ obeys
	\begin{equation*}
	\begin{split}
	&\norm{s_1-s_1'}_2=0\\
	&\norm{s_i-s_i'}_2\leq \norm{s_i-\hat{\overline{T}}(s_{i-1},a_{i-1})}_2+\norm{\hat{\overline{T}}(s_{i-1},a_{i-1})-\hat{\overline{T}}(s_{i-1}',a_{i-1})}_2\leq \delta + L_{\hat{\overline{T}}}\norm{s_{i-1}-s_{i-1}'}_2,~\forall i\geq 2.
	\end{split}
	\end{equation*}
	That is,
	\begin{equation}
	\norm{s_i-s_i'}_2\leq \delta \sum\limits_{j=0}^{i-2}L_{\hat{\overline{T}}}^j= \delta \frac{L_{\hat{\overline{T}}}^{i-1}-1}{L_{\hat{\overline{T}}}-1} ,~~~\forall i\geq 2.
	\label{s-bound}
	\end{equation}
	
	The difference of cumulative reward between $\text{traj}=\{s_i,a_i\}_{i=1}^H$ and $\text{traj}'=\{s_i',a_i\}_{i=1}^H$ satisfies
	\begin{equation}
	\begin{split}
	R(\text{traj})&=\sum\limits_{i=1}^H r(s_i,a_i)\leq r(s_1',a_1)+ \sum\limits_{i=2}^H r(s_i',a_i)+ L_r\norm{s_i-s_i'}_2\overset{(\ref{s-bound})}{\leq} R(\text{traj}')+ \delta L_r\sum\limits_{i=2}^H \frac{L_{\hat{\overline{T}}}^{i-1}-1}{L_{\hat{\overline{T}}}-1}\\
	&\overset{(\ref{eq:exp-approx})}{\leq} R(\text{traj}')+\delta L_r(H^2/2 + (\E H)^2O(\iota)) ,
	\end{split}
	\label{eq:cum-change}
	\end{equation}
	where (\ref{eq:exp-approx}) results from imposing $L_{\hat{\overline{T}}}=1+\iota(1-\gamma)=1+\frac{\iota}{\E H}$ into the exponential:
	\begin{equation}
	\begin{split}
	\sum\limits_{i=2}^H \frac{L_{\hat{\overline{T}}}^{i-1}-1}{L_{\hat{\overline{T}}}-1}=&\frac{1}{L_{\hat{\overline{T}}}-1}\Big(\frac{L_{\hat{\overline{T}}}^{H}-L_{\hat{\overline{T}}}}{L_{\hat{\overline{T}}}-1} - H+1\Big)=\frac{(1+\frac{\iota}{\E H})^H-\iota\frac{H}{\E H}-1}{\frac{\iota^2}{ (\E H)^2}}\leq \frac{e^{\iota\frac{H}{\E H}}-\iota\frac{H}{\E H}-1}{\frac{\iota^2}{ (\E H)^2}}\\
	=&\frac{(\iota \frac{H}{\E H})^2/2 + O(\iota^3)}{\frac{\iota^2}{ (\E H)^2}}=\frac{H^2}{2}+(\E H)^2O(\iota)
	\end{split}
	\label{eq:exp-approx}
	\end{equation}
	
	Because the transitions are deterministic, $\{s_i'\}_{i=1}^H$ are constant given $s_1,~a_1,...,a_H$, which means the randomness depends on $s_1,~a_1,...,a_H$ (with $\{s_i'\}_{i=1}^H$ being the conditions of $\pi_D$), and the density satisfies
	\begin{equation}
	\begin{split}
	P_{\hat{\overline{T}}}(\text{traj}') =& \rho_0(s_1')\pi_D(a_1|s_1')\prod_{i=2}^H\pi_D(a_i|s_i')\geq \rho_0(s_1)\pi_D(a_1|s_1)\prod_{i=2}^H\big(\pi_D(a_i|s_i)-L_{\pi_D} \norm{s_i-s_i'}_2\big)\\
	\overset{(\ref{s-bound})}{\geq}& \rho_0(s_1)\pi_D(a_1|s_1)\prod_{i=2}^H\Big(\pi_D(a_i|s_i)+\delta L_{\pi_D} \frac{L_{\hat{\overline{T}}}^{i-1}-1}{L_{\hat{\overline{T}}}-1}\Big)
	\geq P_{\overline{T}}(\text{traj})\Big(1-\sum\limits_{i=2}^H \frac{\delta L_{\pi_D}}{\pi_D(a_i|s_i)}\frac{L_{\hat{\overline{T}}}^{i-1}-1}{L_{\hat{\overline{T}}}-1}\Big)
	\end{split}
	\label{eq:den-change}
	\end{equation}
	Then, conditioning on the length of rollout being $H$, the integral term in Eq.~(\ref{eq:main-term}) is bounded by
	\begin{equation}
	\begin{split}
	&\int_{\text{traj~consistent}|H} R(\text{traj})dP_{\overline{T}}= \int_{s_1,a_1,...,a_H~\text{consis.}} R(\text{traj})P_{\overline{T}}(\text{traj})ds_1da_1...da_H \\
	\overset{(\ref{eq:den-change})}{\leq}& \int_{s_1,a_1,...,a_H~\text{consis.}} R(\text{traj})\Big(P_{\hat{\overline{T}}}(\text{traj}')+P_{\overline{T}}(\text{traj})\sum\limits_{i=2}^H \frac{\delta L_{\pi_D}}{\pi_D(a_i|s_i)}\frac{L_{\hat{\overline{T}}}^{i-1}-1}{L_{\hat{\overline{T}}}-1}\Big)ds_1da_1...da_H\\
	\leq& \int_{s_1,a_1,...,a_H~\text{consis.}} R(\text{traj})P_{\hat{\overline{T}}}(\text{traj}')+\int_{s_1,a_1,...,a_H}R(\text{traj})P_{\overline{T}}(\text{traj})\sum\limits_{i=2}^H \frac{\delta L_{\pi_D}}{\pi_D(a_i|s_i)}\frac{L_{\hat{\overline{T}}}^{i-1}-1}{L_{\hat{\overline{T}}}-1}\\
	\overset{(\ref{eq:cum-change})}{\leq} &  \int_{s_1,a_1,...,a_H~\text{consis.}} \big(R(\text{traj}')+\delta L_r(H^2/2 + (\E H)^2O(\iota))\big)P_{\hat{\overline{T}}}(\text{traj}')ds_1da_1...da_H+\\
	&\int_{s_1,a_1,...,a_H}Hr^{\max}P_{\overline{T}}(\text{traj})\sum\limits_{i=2}^H \frac{\delta L_{\pi_D}}{\pi_D(a_i|s_i)}\frac{L_{\hat{\overline{T}}}^{i-1}-1}{L_{\hat{\overline{T}}}-1} ds_1da_1...da_H\\
	\overset{(\ref{eq:exp-approx})}{\leq} & \int_{s_1,a_1,...,a_H} \big(R(\text{traj}')+\delta L_r(H^2/2 + (\E H)^2O(\iota))\big)P_{\hat{\overline{T}}}(\text{traj}')+
	Hr^{\max}\delta L_{\pi_D}\diam_\mathcal{A}(H^2/2 + (\E H)^2O(\iota))\\
	\leq & R(\pi_D,\hat{\overline{T}})+\delta L_r(H^2/2 + (\E H)^2O(\iota))+\delta L_{\pi_D}r^{\max} \diam_\mathcal{A}(H^3/2 + H(\E H)^2O(\iota))
	\end{split}
	\label{eq:consist-term}
	\end{equation}
	Combining Eq.~(\ref{eq:main-term})~(\ref{eq:consist-term}), by choosing $$\delta=\sqrt{\frac{2\epsilon_{\ell_2} r^{\max}\E[H^2]}{L_r\E[H^2]+\E [H]^2O(\iota)+L_{\pi_D}r^{\max}\diam_\mathcal{A}\big(\E[H^3]+\E[H]^3O(\iota)\big)  }},$$ we are able to minimize:
	$$\frac{\epsilon_{\ell_2}}{\delta}\E[H^2]r^{\max}+\delta L_r(\E[H^2]/2 + (\E H)^2O(\iota))+\delta L_{\pi_D}r^{\max} \diam_\mathcal{A}(\E[H^3]/2 + (\E H)^3O(\iota)),$$
	yielding
	\begin{equation*}
	\begin{split}
	&R(\pi_D,\overline{T})-R(\pi_D,\hat{\overline{T}})\\
	\leq& \E[H^2]\sqrt{\Big(2\epsilon_{\ell_2} r^{\max}\Big)\Big(L_r+\E[H]^2O(\iota)/E[H^2]+L_{\pi_D}r^{\max}\diam_\mathcal{A}\big(\E[H^3]/\E[H^2]+\E[H]^3O(\iota)/\E[H^2]\big)  \Big)}\\
	\overset{(a)}{=}&\E[H^2]\sqrt{2\epsilon_{\ell_2} r^{\max}L_r+2\epsilon_{\ell_2} L_{\pi_D}(r^{\max})^2\diam_\mathcal{A}\big(\E[H^3]/\E[H^2]+\E[H]^3O(\iota)/\E[H^2]\big)}\\
	\overset{(b)}{\leq}&\E[H^2]\sqrt{2\epsilon_{\ell_2} r^{\max}L_r}+\E[H^2]r^{\max}\sqrt{2\epsilon_{\ell_2} L_{\pi_D}\diam_\mathcal{A}\big(\E[H^3]/\E[H^2]+\E[H]^3O(\iota)/\E[H^2]\big)}\\
	=&\E[H^2]\sqrt{2\epsilon_{\ell_2} r^{\max}L_r}+r^{\max}\sqrt{2\epsilon_{\ell_2} L_{\pi_D}\diam_\mathcal{A}}\sqrt{\E[H^2]\big(\E[H^3]+\E[H]^3O(\iota)\big)}\\
	\overset{(c)}{=}&\frac{1+\gamma}{(1-\gamma)^2}\sqrt{2\epsilon_{\ell_2} r^{\max}L_r}+\frac{\sqrt{1+5\gamma+5\gamma^2+\gamma^3+(1+\gamma)O(\iota)}}{(1-\gamma)^{5/2}}r^{\max}\sqrt{2\epsilon_{\ell_2} L_{\pi_D}\diam_\mathcal{A}}\\
	\overset{(d)}{\leq}&\frac{1+\gamma}{(1-\gamma)^2}\sqrt{2\epsilon_{\ell_2} r^{\max}L_r}+\frac{1+O(\iota)}{(1-\gamma)^{5/2}}r^{\max}\sqrt{2\epsilon_{\ell_2} L_{\pi_D}\diam_\mathcal{A}}.
	\end{split}
	\end{equation*}
	(a) merge the two $O(\iota)$ terms together. (b) uses $\sqrt{x+y}\leq \sqrt{x}+\sqrt{y}~\text{for}~x,~y\geq 0$. (c) applies the identities $\E[H^2]=\frac{1+\gamma}{(1-\gamma)^2}$,~$\E[H^3]=\frac{1+4\gamma+\gamma^2}{(1-\gamma)^3}$. (d) uses $\sqrt{1+x}\leq 1+x/2.$
\end{proof}

\begin{corollary}[One-sided of MBRL with Deterministic Transition and Branched Rollouts]
	Let $\gamma> \beta$ be discount factors of long and short rollouts. Let $\pi_D$, $\pi$, $\overline{T}$ and $\hat{\overline{T}}$ be sampling policy, agent policy, real deterministic transition and deterministic learned transition. Under the assumptions of Theorem~\ref{thm:err-l2-trans2}, let $\epsilon_{\pi_D,\pi}^{\overline{T},\gamma}=\E_{s\sim\rho_{\overline{T},\gamma}^{\pi_D}}D_{TV}(\pi_D(\cdot|s)\lvert\rvert \pi(\cdot|s))$,~~$\epsilon_{\pi_D,\pi}^{\hat{\overline{T}},\beta}=\E_{s\sim\rho_{\hat{\overline{T}},\beta}^{\rho_{\overline{T},\gamma}^{\pi_D},\pi}}D_{TV}(\pi_D(\cdot|s)\lvert\rvert \pi(\cdot|s))$\\
	and $\epsilon_{\ell_2,\beta}=\E_{(s,a)\sim \rho_{\overline{T},\beta}^{\rho_{\overline{T},\gamma}^{\pi_D},\pi_D}}\norm{\overline{T}(s,a)-\hat{\overline{T}}(s,a)}_2$. Then~~~ $R_\gamma(\rho_0,\pi,\overline{T})-\frac{1-\beta}{1-\gamma}R_\beta(\rho_{\overline{T},\gamma}^{\pi_D},\pi,\overline{T})\leq r^{\max}\Big(\frac{\epsilon_{\pi_D,\pi}^{\overline{T},\gamma}\gamma}{(1-\gamma)^2} + \frac{\epsilon_{\pi_D,\pi}^{\hat{\overline{T}},\beta}\beta}{(1-\beta)(1-\gamma)} + \frac{\epsilon_{\pi_D,\pi}^{\overline{T},\gamma}+\epsilon_{\pi_D,\pi}^{\hat{\overline{T}},\beta}}{1-\gamma} + \frac{\beta}{\gamma-\beta}\Big)+
	\frac{1+\beta}{(1-\beta)(1-\gamma)}\sqrt{2\epsilon_{\ell_2,\beta} r^{\max}L_r}+\frac{1+O(\iota)}{(1-\beta)^{3/2}(1-\gamma)}r^{\max}\sqrt{2\epsilon_{\ell_2,\beta} L_{\pi_D}\diam_\mathcal{A}}.$
\end{corollary}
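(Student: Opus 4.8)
The plan is to mirror the proof of Corollary~\ref{cor:err-branch2} almost verbatim, the only change being that the transition-mismatch summand is now controlled by the \emph{one-sided} deterministic estimate of Theorem~\ref{thm:err-l2-trans2} rather than the two-sided stochastic estimate of Theorem~\ref{thm:err-tv-trans2}. Because Theorem~\ref{thm:err-l2-trans2} bounds only $R(\pi_D,\overline{T})-R(\pi_D,\hat{\overline{T}})$ from above, the telescoping decomposition has to be kept \emph{signed} on that one term while the other three are bounded in absolute value; this is exactly why the conclusion is one-sided.

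First I would split, by adding and subtracting $R_\gamma(\rho_0,\pi_D,\overline{T})$, $\tfrac{1-\beta}{1-\gamma}R_\beta(\rho_{\overline{T},\gamma}^{\pi_D},\pi_D,\overline{T})$, and $\tfrac{1-\beta}{1-\gamma}R_\beta(\rho_{\overline{T},\gamma}^{\pi_D},\pi_D,\hat{\overline{T}})$:
\begin{align*}
R_\gamma(\rho_0,\pi,\overline{T})-\tfrac{1-\beta}{1-\gamma}R_\beta(\rho_{\overline{T},\gamma}^{\pi_D},\pi,\hat{\overline{T}})
&=\big[R_\gamma(\rho_0,\pi,\overline{T})-R_\gamma(\rho_0,\pi_D,\overline{T})\big] \\
&\quad+\big[R_\gamma(\rho_0,\pi_D,\overline{T})-\tfrac{1-\beta}{1-\gamma}R_\beta(\rho_{\overline{T},\gamma}^{\pi_D},\pi_D,\overline{T})\big] \\
&\quad+\tfrac{1-\beta}{1-\gamma}\big[R_\beta(\rho_{\overline{T},\gamma}^{\pi_D},\pi_D,\overline{T})-R_\beta(\rho_{\overline{T},\gamma}^{\pi_D},\pi_D,\hat{\overline{T}})\big] \\
&\quad+\tfrac{1-\beta}{1-\gamma}\big[R_\beta(\rho_{\overline{T},\gamma}^{\pi_D},\pi_D,\hat{\overline{T}})-R_\beta(\rho_{\overline{T},\gamma}^{\pi_D},\pi,\hat{\overline{T}})\big].
\end{align*}
The first and fourth brackets are policy-mismatch errors: Theorem~\ref{thm:err-tv-policy2} (with discounts $\gamma$ and $\beta$ respectively) bounds their absolute values by $\epsilon_{\pi_D,\pi}^{\overline{T},\gamma}r^{\max}\big(\tfrac{1}{1-\gamma}+\tfrac{\gamma}{(1-\gamma)^2}\big)$ and, after the $\tfrac{1-\beta}{1-\gamma}$ rescaling, $\epsilon_{\pi_D,\pi}^{\hat{\overline{T}},\beta}r^{\max}\big(\tfrac{1}{1-\gamma}+\tfrac{\beta}{(1-\beta)(1-\gamma)}\big)$. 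The second bracket equals $\tfrac{1}{1-\gamma}\big(\E_{\rho_{\overline{T},\gamma}^{\pi_D}}[r]-\E_{\rho_{\overline{T},\beta}^{\rho_{\overline{T},\gamma}^{\pi_D},\pi_D}}[r]\big)$, so — exactly as for the second term in the proof of Corollary~\ref{cor:err-branch2} — Lemma~\ref{lemma:err-tv-occu2} together with Lemma~\ref{lemma:short_occu_bound2} bounds its absolute value by $\tfrac{r^{\max}}{1-\gamma}D_{TV}\big(\rho_{\overline{T},\gamma}^{\pi_D}\,\|\,\rho_{\overline{T},\beta}^{\rho_{\overline{T},\gamma}^{\pi_D},\pi_D}\big)\le r^{\max}\tfrac{\beta}{\gamma-\beta}$. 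Summing these three contributions reproduces the first parenthesized block of the statement.

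For the third bracket I would invoke Theorem~\ref{thm:err-l2-trans2} under the substitution $(\rho_0,\pi_D,\overline{T},\gamma)\mapsto(\rho_{\overline{T},\gamma}^{\pi_D},\pi_D,\overline{T},\beta)$: the rollout length is now $\mathrm{Geometric}(1-\beta)$, the $\ell_2$ error taken over $\rho_{\overline{T},\beta}^{\rho_{\overline{T},\gamma}^{\pi_D},\pi_D}$ is exactly $\epsilon_{\ell_2,\beta}$, and hypotheses (a),(c) on $\hat{\overline{T}},r,\pi_D$ are unchanged. The one point needing care is hypothesis (b): one needs $L_{\hat{\overline{T}}}\le 1+(1-\beta)\iota'$ with $\iota'<1$, but since $\beta<\gamma$ we have $1+(1-\gamma)\iota\le 1+(1-\beta)\iota$, so the $\gamma$-version of (b) already forces the $\beta$-version with $\iota'=(L_{\hat{\overline{T}}}-1)/(1-\beta)\le\iota<1$ — the weak-Lipschitz analogue of the observation in Corollary~\ref{cor:err-deter-lip-branch} that branching relaxes the contraction requirement. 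Theorem~\ref{thm:err-l2-trans2} then gives $R_\beta(\rho_{\overline{T},\gamma}^{\pi_D},\pi_D,\overline{T})-R_\beta(\rho_{\overline{T},\gamma}^{\pi_D},\pi_D,\hat{\overline{T}})\le\tfrac{1+\beta}{(1-\beta)^2}\sqrt{2\epsilon_{\ell_2,\beta}r^{\max}L_r}+\tfrac{1+O(\iota)}{(1-\beta)^{5/2}}r^{\max}\sqrt{2\epsilon_{\ell_2,\beta}L_{\pi_D}\diam_{\mathcal{A}}}$, and multiplying by $\tfrac{1-\beta}{1-\gamma}$ yields precisely the last two terms of the statement. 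Adding all four contributions closes the proof.

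The main obstacle is bookkeeping rather than a new estimate: one must keep the telescoping sum signed so that the one-sidedness of Theorem~\ref{thm:err-l2-trans2} is preserved, confirm that the branched rollout $\rho_{\overline{T},\beta}^{\rho_{\overline{T},\gamma}^{\pi_D},\pi_D}$ genuinely meets the hypotheses of Theorem~\ref{thm:err-l2-trans2} at discount $\beta$ (especially that (b) degrades gracefully), and verify that the $O(\iota)$ constant is still $O(\iota)$ after the $\tfrac{1-\beta}{1-\gamma}$ rescaling. Everything else is a direct substitution into the proof of Corollary~\ref{cor:err-branch2}.
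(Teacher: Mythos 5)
Your proof is correct and follows the same route as the paper, whose own proof is the single line ``plugging in Theorem~\ref{thm:err-l2-trans2} with $L_{\hat{\overline{T}}}\leq 1+(1-\beta)\iota$ to the proof of Corollary~\ref{cor:err-branch2}'': you reproduce that four-term decomposition, keep the transition-mismatch term signed so the one-sidedness survives, and apply the one-sided deterministic bound at discount $\beta$ before rescaling by $\tfrac{1-\beta}{1-\gamma}$. Your check that the $\gamma$-version of hypothesis (b) already implies the $\beta$-version with $\iota'=(L_{\hat{\overline{T}}}-1)/(1-\beta)\le\iota<1$ is a detail the paper glosses over, and the $\hat{\overline{T}}$ you use in the left-hand side of the decomposition correctly reflects the intended statement (the $\overline{T}$ appearing there in the corollary as printed is evidently a typo).
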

\begin{proof}
	Plugging in Theorem~\ref{thm:err-l2-trans2} with $L_{\hat{\overline{T}}}\leq 1+(1-\beta)\iota$ to the proof of Corollary~\ref{cor:err-branch2}, the result follows.
\end{proof}

\end{document}